\newcommand{\MB}{\mathbb{MB}}
\newcommand{\CB}{\mathbb{CB}}
\newcommand{\EB}{\mathbb{EB}}
\newcommand{\HC}{\mathbb{HC}}
\newcommand{\AF}{\mathcal{F}}
\newcommand{\A}{\mathcal{A}}
\newcommand{\D}{\mathcal{D}}
\newcommand{\Att}{\mathtt{Att}}
\newcommand{\supp}{\mathrm{supp}}
\newcommand{\fix}{\mathrm{fix}}
\def\F{\mathcal{F}}
\def\G{\mathcal{G}}
\def\Base{\OP{Base}}
\def\wBase{\OP{w-Base}}
\def\BpHI{\mathcal{B^+HI}}
\def\diag{\operatorname{diag}}
\def\End{\OP{End}}
\def\Func{\OP{Func}}
\def\inv{\operatorname{inv}}
\def\Mat{\operatorname{Mat}}
\def\Scheme{\operatorname{Scheme}}
\def\RR{\mathbb{R}}
\def\SD{\mathcal{D}{yn}}
\newcommand{\xto}[1]{\xrightarrow{#1}}
\newcommand{\OP}[1]{\operatorname{#1}}
\newtheorem*{prop*}{Proposition}
\newtheorem{proposition}{Proposition}[section]
\newtheorem{theorem}[proposition]{Theorem}
\newtheorem{lemma}[proposition]{Lemma}
\theoremstyle{definition}
\newtheorem{example}[proposition]{Example}
\newtheorem{definition}[proposition]{Definition}
\newtheorem{defn}[proposition]{Definition}
\newtheorem{problem}[proposition]{Problem}
\newtheorem{remark}[proposition]{Remark}
\newtheorem*{remarkx}{Remark}
\newtheorem*{scholiumx}{Scholium}
\title{Abstract Weighted Based Gradual Semantics in Argumentation Theory}
\author{
  Assaf Libman, Nir Oren \\
  University of Aberdeen \\
  Scotland \\
  \texttt{\{a.libman, n.oren\}@abdn.ac.uk} \\
  %% examples of more authors
   \And
  Bruno Yun \\
  Univ Lyon, UCBL, CNRS, INSA Lyon, LIRIS\\
  UMR5205, F-69622 Villeurbanne \\
  France\\
  \texttt{bruno.yun@univ-lyon1.fr} \\
}
\begin{document}

\newglossaryentry{A}{
name=$\A$,
description={finite set of arguments}
}

\newglossaryentry{AF}{
name=$\AF$,
description={weighted AF}
}

\newglossaryentry{D}{
name=$\D$,
description={binary attack relation}
}

\newglossaryentry{w}{
name=$w$,
description={weighting function}
}

\newglossaryentry{G}{
name=$\G$,
description={underlying AF}
}

\newglossaryentry{Att}{
name=$\Att$,
description={attackers of an argument}
}

\newglossaryentry{Sigma}{
name=$\Sigma$,
description={gradual semantics}
}

\newglossaryentry{SigmaAF}{
name=$\Sigma^\AF$,
description={scoring function associated to $\AF$}
}

\newglossaryentry{SigmaMB}{
name=$\Sigma_{\MB}$,
description={weighted max-based grad. sem.}
}

\newglossaryentry{OPMBk}{
name=$\OP{MB}_k$,
description={recursively defined MB func.}
}

\newglossaryentry{SigmaCB}{
name=$\Sigma_{\CB}$,
description={weighted card-based grad. sem.}
}

\newglossaryentry{OPCBk}{
name=$\OP{CB}_k$,
description={recursively defined CB func.}
}

\newglossaryentry{Atts}{
name=$\Att^*$,
description={attackers with non-zero weight.}
}

\newglossaryentry{SigmaHC}{
name=$\Sigma_{\HC}$,
description={weighted h-categorizer grad. sem.}
}

\newglossaryentry{OPHCk}{
name=$\OP{HC}_k$,
description={recursively defined HC func.}
}

\newglossaryentry{trianglerighteq}{
name=$\trianglerighteq$,
description={at least as preferred as}
}

\newglossaryentry{simeq}{
name=$\simeq$,
description={equally preferred}
}

\newglossaryentry{triangleright}{
name=$\triangleright$,
description={strictly preferred to}
}

\newglossaryentry{trianglelefteq}{
name=$\trianglelefteq$,
description={at most as preferred as}
}

\newglossaryentry{FuncAB}{
name={$\Func(A,B)$},
description={functions from $A$ to $B$}
}

\newglossaryentry{EndA}{
name={$\End(A)$},
description={functions from $A$ to $A$}
}

\newglossaryentry{Tseq}{
name={T-sequence},
description={}
}

\newglossaryentry{X}{
name={$X$},
description={space of scores}
}

\newglossaryentry{W}{
name={$W$},
description={space of weights}
}

\newglossaryentry{K}{
name={$K$},
description={space of vertices}
}

\newglossaryentry{sigma}{
name={$\sigma$},
description={scoring scheme}
}

\newglossaryentry{Scheme}{
name={$\Scheme$},
description={space of all scoring schemes}
}

\newglossaryentry{Dsigma}{
name={$D_\sigma$},
description={space of acceptability degree of $\sigma$}
}

\newglossaryentry{MatnnnR}{
name={$\Mat_{n \times n}(\RR)$},
description={$n \times n$ matrices whose elements are  in $\RR$}
}

\newglossaryentry{MatnnnR+}{
name={$\Mat_{n \times n}^+ (\RR)$},
description={$n \times n$ matrices whose elements are  in $\RR^+$}
}

\newglossaryentry{awgs}{
name={abstract weighted gradual semantics},
description={}
}

\newglossaryentry{inv}{
name={$\inv$},
description={inverse function}
}

\newglossaryentry{invP}{
name={inverse problem},
description={}
}

\newglossaryentry{reflP}{
name={reflection problem},
description={}
}

\newglossaryentry{toporflP}{
name={topological reflection problem},
description={}
}

\newglossaryentry{homeomorphism}{
name={homeomorphism},
description={}
}

\newglossaryentry{ppop}{
name={projective preference ordering problem},
description={}
}

\newglossaryentry{radialP}{
name={radiality problem},
description={}
}

\newglossaryentry{increasing}{
name={increasing},
description={}
}

\newglossaryentry{homogeneous}{
name={homogeneous},
description={}
}

\newglossaryentry{non-negative}{
name={non-negative},
description={}
}

\newglossaryentry{bounded}{
name={bounded},
description={}
}

\newglossaryentry{BpHIn}{
name={$\BpHI_n(X)$},
description={bounded, non-negative, homogeneous and increasing functions from $X$ to $\RR^n$}
}

\newglossaryentry{BpHI}{
name={$\BpHI(X)$},
description={bounded, non-negative, homogeneous and increasing functions from $X$ to $\RR$}
}

\newglossaryentry{scoringbase}{
name={scoring base},
description={}
}

\newglossaryentry{BaseX}{
name={$\Base(X)$},
description={set of all scoring bases}
}

\newglossaryentry{Tcf}{
name={$T_{(c,f)}$},
description={a particular endomorphism from the scoring base $(c,f)$}
}

\newglossaryentry{scoringdynamics}{
name={scoring dynamics},
description={}
}

\newglossaryentry{SDX}{
name={$\SD(X)$},
description={collection of all scoring dynamics}
}

\newglossaryentry{fixT}{
name={$\fix(T)$},
description={fix point of a scoring dynamics $T$}
}

\newglossaryentry{orderpreserving}{
name={order preserving},
description={}
}

\newglossaryentry{orderreversing}{
name={order reversing},
description={}
}

\newglossaryentry{wsb}{
name={weighted scoring base},
description={}
}

\newglossaryentry{kv}{
name={$(\kappa,\varphi)$},
description={a weighted scoring base}
}

\newglossaryentry{kappa}{
name={$\kappa$},
description={first elt. of a weighted scoring base}
}

\newglossaryentry{varphi}{
name={$\varphi$},
description={second elt. of a weighted scoring base}
}

\newglossaryentry{wBaseX}{
name={$\wBase(X)$},
description={set of all weighted scoring bases}
}

\newglossaryentry{sigmab}{
name={$\sigma_{b}$},
description={scoring scheme associated to the weighted scoring base $b$}
}

\newglossaryentry{awbgs}{
name={abstract weighted based gradual semantics},
description={}
}

\newglossaryentry{supp}{
name={$\supp(x)$},
description={the support of $x\in \RR^n$}
}

\newglossaryentry{pressupport}{
name={preserves supports},
description={}
}

\newglossaryentry{independentsupports}{
name={independent of supports},
description={}
}

\newglossaryentry{fI}{
name={$f^I$},
description={the value of $f$ on all $x \in X$ with support $I$}
}

\newglossaryentry{dri}{
name={discerning right inverse},
description={}
}

\newglossaryentry{preceqclosed}{
name={$\preceq$-closed},
description={closure of a set under $\preceq$}
}

\newglossaryentry{supppreceqclosed}{
name={$(\supp,\preceq)$-closed},
description={closure of a set under $\preceq$ and $\supp$}
}

\newglossaryentry{lpnom}{
name={$||(x_1,\dots,x_n)||_p$},
description={$L^p$-norm on $\RR^n$}
}

\newglossaryentry{xwedgey}{
name={$x \wedge y$},
description={element-wise product of vectors}
}

\newglossaryentry{ai*}{
name={$a_{i,*}$},
description={the $i^\text{th}$ row of $A$}
}

\newglossaryentry{LplambdamuA}{
name={$\sigma_{(L^p,\lambda,\mu,A)}$},
description={the $(L^p,\lambda,\mu,A)$-based scoring scheme}
}

\newglossaryentry{awlplambdamuAbgs}{
name={abstract weighted $(L^p,\lambda,\mu)$-based gradual semantics},
description={}
}

\newglossaryentry{SigmaEB}{
name={$\Sigma_{\EB}$},
description={weighted Euclidean-based grad. sem.}
}

\newglossaryentry{invHC}{
name={\ensuremath{\inv^\G_{\HC}}},
description={discerning right inverses of $\Sigma_{\HC}$}
}

\newglossaryentry{invMB}{
name={\ensuremath{\inv^\G_{\MB}}},
description={discerning right inverses of $\Sigma_{\MB}$}
}

\newglossaryentry{invCB}{
name={\ensuremath{\inv^\G_{\CB}}},
description={discerning right inverses of $\Sigma_{\CB}$}
}

\newglossaryentry{OPEBk}{
name={\ensuremath{\OP{EB}_k}},
description={recursively defined EB func.}
}

\newglossaryentry{invEB}{
name={\ensuremath{\inv_{\EB}^\G}},
description={discerning right inverses of $\Sigma_{\EB}$}
}

\newglossaryentry{delta}{
name={\ensuremath{\delta}},
description={discount factor}
}

\newglossaryentry{gamma}{
name={\ensuremath{\gamma}},
description={a path}
}

\newglossaryentry{Att2i}{
name={\ensuremath{\Att_2(i)}},
description={all paths of length 2 ending at $i$}
}

\maketitle

\begin{abstract}
Weighted gradual semantics provide an acceptability degree to each argument representing the strength of the argument, computed based on factors including background evidence for the argument, and taking into account interactions between this argument and others.

We introduce four important problems linking gradual semantics and acceptability degrees. First, we reexamine the inverse problem, seeking to identify the argument weights of the argumentation framework which lead to a specific final acceptability degree. Second, we ask whether the function mapping between argument weights and acceptability degrees is injective or a homeomorphism onto its image. Third, we ask whether argument weights can be found when preferences, rather than acceptability degrees for arguments are considered. Fourth, we consider the topology of the space of valid acceptability degrees, asking whether ``gaps'' exist in this space.
While different gradual semantics have been proposed in the literature, in this paper, we identify a large family of  weighted gradual semantics, called \textit{abstract weighted based gradual semantics}. These generalise many of the existing semantics while maintaining desirable properties such as convergence to a unique fixed point.
We also show that a sub-family of the weighted gradual semantics, called \textit{abstract weighted $(L^p,\lambda,\mu)$-based gradual semantics} and which include well-known semantics, solve all four of the aforementioned problems.

\end{abstract}

% keywords can be removed
\keywords{Argumentation \and Gradual semantics \and Inverse problems}

\section{Introduction}

In the context of Dung's abstract argumentation \cite{dung_acceptability_1995}, we consider a set of abstract arguments and a binary attack relation between them, encoding these as a directed graph. Argumentation semantics then identify which sets of arguments are justified together by considering inter-argument interactions \cite{baroni_introduction_2011,verheij1996two,DBLP:conf/comma/Caminada06,DBLP:journals/ai/DungMT07}. Within the argumentation community, there has been increasing interest in so-called \emph{ranking-based} semantics \cite{amgoud_ranking-based_2013,bonzon_comparative_2016,DBLP:journals/jancl/BonzonDKM23,yun20ranking}. These aim to identify a ranking over the arguments, with higher ranked arguments considered more justified %(i.e., ``less attacked'' in terms of some properties  of the attackers) 
than arguments ranked lower. %(i.e., ``more attacked'').
 One approach to creating such a ranking, called \textit{gradual semantics} \cite{DBLP:conf/kr/AmgoudD18a,yun_2021_gradual,besnard_logic-based_2001}, involves associating a numerical \emph{acceptability degrees} to all arguments within the system, with the final ranking computed according to the numeric ordering. Furthermore, some ranking-based semantics compute the acceptability degree of an argument based not only on the topology of the argumentation framework, but also based on some \emph{initial weight} assigned to each argument. These \emph{weighted gradual semantics}, exemplified by the weighted \emph{max-based}, \emph{card-based} and \emph{h-categorizer} semantics, are widely studied and shown to have various desirable properties \cite{AMGOUD2022103607,oren2022inverse,DBLP:conf/kr/AmgoudD18a}.

While a specific weighted gradual semantics takes an argumentation system and initial weights as input, and outputs the arguments' acceptability degrees, the \emph{inverse problem} seeks to identify the initial weights for a given semantics, argumentation system and acceptability degrees \cite{oren2022inverse}. The latter paper used an iterative, approximate, numeric technique to identify such initial weights for the three weighted gradual semantics mentioned above. However, no guarantee was made that such a solution always exists. This paper moves away from an approximation-based approach and investigates the theory behind the inverse problem, providing important formal insights.
Namely, we ask ourselves the questions:
Can we always find a solution to the inverse problem for the three weighted gradual semantics?
If yes, can we generalise this result to a more general family of gradual semantics and how is this family characterised?
For any preference ordering on arguments (rather than simply numerical acceptability degrees), can we always find some weights on arguments so that the degree obtained will follow the preference ordering?
Our work makes the following contributions.

\begin{itemize}
\item
We establish a general family of ``abstract weighted based gradual semantics''. % that generates a large class of weighted gradual semantics.
In doing so, we generalise the result of Pu et al. \cite{pu_argument_2014} to this large, general family of semantics. This result is used across the remainder of the paper.

\item We introduce a sub-family of abstract weighted based gradual semantics, called abstract weighted $(L^p,\lambda,\mu)$-based gradual semantics, which includes, among others, the three weighted gradual semantics mentioned above. 

\item We show that the inverse problem is solved for any semantics in this sub-family.

\item We show that any preference ordering between the arguments is realised by some acceptability degree in all the semantics in this sub-family.

\item We describe some topological properties of the acceptability degree space of semantics in this sub-family.

\end{itemize}

The importance of this sub-family of abstract weighted $(L^p,\lambda,\mu)$-based gradual semantics goes beyond giving a uniform treatment of the  three semantics above.
It allows us, among other things, to define new gradual semantics for which the above inverse-type problems are solved. In turn, inverse problems can be used --- for example --- to automatically elicit and identify users' preferences based on the acceptability degrees over arguments, and as part of an argument strategy to help decide which arguments to advance based on the implicit weights attached to arguments by other dialogue participants. Such applications are not however the focus of the current paper, and are left for future work. Instead, we focus on the formal underpinnings of inverse problems and gradual semantics.

While inverse-type problems for weighted gradual semantics are new, we also want to highlight that there has been some recent research in this area. Skiba et al. \cite{DBLP:conf/comma/SkibaTRHK22} have studied whether, given a ranking and a ranking-based semantics, we can find an argumentation framework (without weights) such that the selected ranking-based semantics induces the ranking when applied to the framework.
Kido et al., across several papers \cite{10.5555/3171642.3171679,DBLP:journals/jancl/KidoL22} have studied how attack relations can be determined  given sets of acceptable arguments. Their focus was on Dung style classical semantics. This line of work is similar to that of Oren and Yun \cite{DBLP:journals/corr/abs-2211-16118}, where the authors identify the complexity class for the problem of identifying a set of attacks that will yield the desired acceptability degrees for a set of weighted arguments with respect to three weighted gradual semantics.
Mailly \cite{DBLP:conf/clar/Mailly23} also studied realization problems for extension semantics but made used of auxiliary arguments, i.e., given a set of extensions $S$ and using $k$ auxiliary arguments, can we find $m$ argumentation frameworks such that the union of their extensions is exactly equal to $S$?
The existing body of research demonstrates the growing interest and diverse approaches in exploring various aspects of inverse problems in the argumentation domain.

The remainder of this paper is structured as follows. In Section \ref{sec:background}, we briefly recap the background notions and formalism needed in this paper. 
In Section \ref{subsec:scoring-schemes}, we introduce the notion of scoring schemes and rephrase weighted gradual semantics as collections of functions mapping adjacency matrices to scoring schemes. Then, we formalise four inverse-type problems in terms of scoring schemes, define the family of abstract weighted based gradual semantics, and show that they always converge to a unique fixed point.
In Section \ref{sec:solving-problems}, we provide conditions for based scoring schemes to solve the four inverse-type problems. This leads us to define the sub-family of abstract weighted $(L^p,\lambda,\mu)$-based gradual semantics which always solve the four problems in Section \ref{subsec:Lp lambda mu A}. 
In Section \ref{sec:new_semantics}, we show multiple examples of concrete semantics in this class and introduce new semantics.
We summarize and conclude our work in Section \ref{sec:conclusion}. The paper contains a significant amount of notation in order to provide general results. To aid the reader, we provide a table of notation in \ref{sec:symbols}.

\section{Background}
\label{sec:background}

This section introduces the necessary argumentation notions used in the rest of the paper.
Our departure point in this paper is the three weighted gradual semantics described in \cite{AMGOUD2022103607}. As input, these semantics take in a \emph{weighted argumentation framework} \cite{coste-marquis_selecting_2012,coste-marquis_weighted_2012,dunne_weighted_2011}.
A weighted argumentation framework is a Dung's abstract framework (with arguments and binary attacks) augmented with a weighting function which assigns a number between 0 and 1 to each argument.

\begin{definition}[WAF]

A \textbf{weighted argumentation framework} (WAF) is a triple \gls{AF} $= \langle \A, \D, w \rangle$ where \gls{A} is a finite set of arguments; \gls{D}$ \subseteq \A \times \A$ is a binary attack relation; and \gls{w}$:\A \to [0,1]$ is a \textbf{weighting function} assigning an initial weight to each argument.
The \textbf{underlying argumentation framework} (of $\AF$) is \gls{G}$ = \langle \A,\D\rangle$.
The set of attackers of an argument $a \in \A$ is denoted \gls{Att}$(a)=\{b \in \A | (b,a) \in \D\}$.

Upon ordering $\A=\{a_1,\dots,a_n\}$, the \textbf{adjacency matrix} of $\G$ is the $n\times n$-matrix $A$ with $A_{i,j}=1$ if $(a_j,a_i) \in \D$ and $A_{i,j}=0$ otherwise.
\end{definition}

A gradual semantics is a function that takes as input a weighted argumentation framework and produces a \textit{scoring function}. Note that the latter function is sometimes called a \textit{weighting} \cite{AMGOUD2022103607}, but we use our terminology to distinguish it from $w$.

\begin{definition}[Gradual Semantics] \label{def:gradual}
A \textbf{gradual semantics} \gls{Sigma} is a function that for each weighted argumentation framework $\AF= \langle \A, \D, w \rangle$, associates a \textbf{scoring function} \gls{SigmaAF}$:\A \to [0,1]$, whose values are referred to as the \textbf{acceptability degrees} of the arguments.
\end{definition}

Amgound, Doder and Vesic \cite{AMGOUD2022103607} investigated different desirable properties of gradual semantics (e.g., anonymity, directionality, independence, etc) for gradual semantics in weighted argumentation frameworks. They showed that three weighted gradual semantics satisfy many of these properties, and the argumentation literature thus focuses on these three semantics, which are specific instantiations of the gradual semantics of Definition \ref{def:gradual} above.

% Due to various desirable properties (e.g., anonymity, independence, directionality, etc \cite{AMGOUD2022103607}), the argumentation literature focuses on three weighted gradual semantics (see for example \cite{AMGOUD2022103607}), which are specific instantiations of the gradual semantics of Definition \ref{def:gradual} above. We thus introduce these three semantics as concrete examples of gradual semantics.

%We now introduce our prime examples of weighted gradual semantics $\Sigma_{\MB}$, $\Sigma_{\HC}$ and $\Sigma_{\CB}$.

\begin{example}\label{example:MB semantics}
The \textbf{weighted max-based} gradual semantics \gls{SigmaMB}.
Given $\F=\langle \A,\D,w\rangle$, the acceptability degree of an argument $a\in \A$ is defined by
\[
\Sigma_{\MB}^\AF(a) = \lim_{k \to \infty} \OP{MB}_k(a)
\]
for the sequence of functions \gls{OPMBk}$ \colon \A \to [0,1]$ defined recursively by setting $\OP{MB}_0(a)=w(a)$ for all $a \in \A$ and 
\[
\OP{MB}_{k+1}(a) = \frac{w(a)}{1+\max\limits_{b \in \Att(a)} \OP{MB}_{k}(b)}
\]
\end{example}

\begin{example}\label{example:CB semantics}
The \textbf{weighted card-based} gradual semantics \gls{SigmaCB}.
Given $\F=\langle \A,\D,w\rangle$, the acceptability degree of an argument $a\in \A$ is
 \[
\Sigma^\AF_{\CB}(a) = \lim_{k \to \infty} \OP{CB}_k(a)
\]
for the sequence of functions \gls{OPCBk}$ \colon \A \to [0,1]$ defined recursively by $\OP{CB}_0(a)=w(a)$ for all $a \in \A$ and 
\[
\OP{CB}_{k+1}(a)=\frac{w(a)}{1+|\Att^*(a)|+ \tfrac{1}{|\Att^*(a)|}  \sum\limits_{b \in \Att^*(a)} \OP{CB}_{k}(b)}
\]
Here, \gls{Atts}$(a) = \{ b \in \Att(a) \mid w(b)>0\}$ and if $\Att^*(a) = \emptyset$ the term involving division by $|\Att^*(a)|$ vanishes.
\end{example}

\begin{example}\label{example:HC semantics}
The \textbf{weighted h-categorizer} gradual semantics \gls{SigmaHC}.
Given $\F=\langle \A,\D,w\rangle$, the acceptability degree of an argument $a \in \A$ is
\[
\Sigma_{\HC}^\AF(a) = \lim_{k \to \infty} \OP{HC}_k(a)
\]
for the sequence of functions \gls{OPHCk}$ \colon \A \to [0,1]$ defined recursively by $\OP{HC}_0(a)=w(a)$ and
\[
\OP{HC}_{k+1}(a)=\frac{w(a)}{1+\sum\limits_{b \in \Att(a)} \OP{HC}_{k}(b)}
\]  
\end{example}
%\end{itemize}

A crucial issue to address with these definitions is the need to prove that the limits exist and yield scoring functions. The convergence has been shown to exist (and be unique) for the three semantics mentioned above by Amgoud et al. (see in particular Theorems 7, 12, and 17 in \cite{AMGOUD2022103607}).

Using gradual semantics, one can obtain a ranking on arguments by using their acceptability degrees.
With the three gradual semantics above, it is the case that $a$ is at least as preferred as $b$ iff $a$'s acceptability degree is at least as high as $b$'s.
We use the following notation to describe preferences over  arguments, where $a$ \gls{trianglerighteq} $b$ denotes that $a$ is at least as preferred as $b$, $a$ \gls{simeq} $b$ iff $ a \trianglerighteq b \wedge b \trianglerighteq a$, $a$ \gls{triangleright} $b$ iff $a \trianglerighteq b \wedge a \not\trianglelefteq b$, and $a$ \gls{trianglelefteq} $b$ iff $a \not \triangleright\ b$.

\begin{example} (taken from \cite{oren2022inverse}).  \label{ex:semantics}
Let $\AF = \langle \A, \D, w \rangle$ be a WAF depicted in Figure \ref{fig:ex1} with weights $w(a_0) = 0.43, w(a_1) = 0.39, w(a_2) = 0.92$, and $w(a_3) = 0.3$.
Table \ref{tab:ex1} lists the acceptability degrees and the associated rankings on arguments for the semantics $\Sigma_{\MB}, \Sigma_{\CB}$ and $\Sigma_{\HC}$ described in Examples \ref{example:MB semantics}, \ref{example:CB semantics} and \ref{example:HC semantics}.
%whereas the acceptability degrees and the associated rankings on arguments for the semantics of Definition \ref{def:grad_sem} are shown in Table \ref{tab:ex1}.
%

\begin{figure}[h]
\centering
\begin{tikzpicture}[scale=0.1]
\tikzstyle{every node}+=[inner sep=0pt]
\draw [black] (18.5,-39.5) circle (3);
\draw (18.5,-39.5) node {$a_0$};
\draw [black] (32.2,-28.1) circle (3);
\draw (32.2,-28.1) node {$a_1$};
\draw [black] (32.2,-39.5) circle (3);
\draw (32.2,-39.5) node {$a_2$};
\draw [black] (45.8,-39.5) circle (3);
\draw (45.8,-39.5) node {$a_3$};

\draw [black] (21.5,-39.5) -- (29.2,-39.5);
\fill [black] (29.2,-39.5) -- (28.4,-39) -- (28.4,-40);
\draw [black] (30.877,-25.42) arc (234:-54:2.25);
\fill [black] (33.52,-25.42) -- (34.4,-25.07) -- (33.59,-24.48);
\draw [black] (32.2,-31.1) -- (32.2,-36.5);
\fill [black] (32.2,-36.5) -- (32.7,-35.7) -- (31.7,-35.7);
\draw [black] (33.523,-42.18) arc (54:-234:2.25);
\fill [black] (30.88,-42.18) -- (30,-42.53) -- (30.81,-43.12);
\draw [black] (42.8,-39.5) -- (35.2,-39.5);
\fill [black] (35.2,-39.5) -- (36,-40) -- (36,-39);
\end{tikzpicture}
\caption{Graphical representation of a WAF.}
\label{fig:ex1}
\end{figure}
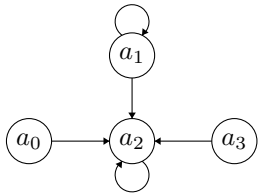

\begin{table}[ht]
    \centering
    \renewcommand{\arraystretch}{1.3}
    \begin{tabular}{|c|c|c|c|c|c|}
    \hline
         & $a_0$& $a_1$& $a_2$& $a_3$&  \iffalse Argument ranking \fi \\
    \hline
         $w(a_i)$  & 0.43 &  0.39 &  0.92 & 0.30 & Argument ranking\\
         \hline
         $\Sigma^\AF_{\MB}(a_i)$& 0.43 & 0.30 & 0.58 & 0.30 & $a_1 \simeq a_3 \triangleleft a_0 \triangleleft a_2$\\
         \hline
         $\Sigma^\AF_{\HC}(a_i)$& 0.43 & 0.30 & 0.38 & 0.30 & $ a_1 \simeq a_3 \triangleleft a_2 \triangleleft a_0$\\
         \hline
         $\Sigma^\AF_{\CB}(a_i)$& 0.43 & 0.18 & 0.17 & 0.30 & $ a_2 \triangleleft a_1 \triangleleft a_3 \triangleleft a_0$\\
         \hline
    \end{tabular}
    \caption{Acceptability degrees of the arguments from Figure \ref{fig:ex1}, rounded to 2 decimal numbers.}
    \label{tab:ex1}
\end{table}
\end{example}

%\subsection{The Inverse Problem - historical notes}
%\label{subsec:inverse problem history}

We now make a historical note about the problem which inspired this paper.
The {\em inverse problem} seeks a set of initial weights which, when applied to a specific argumentation framework under the chosen semantics will result in a desired preference ordering, derived from the numeric acceptance degree computed for each argument.

In \cite{oren2022inverse} the authors examined how weights could be computed for a given acceptance degree. 
The approach described there consists of two phases. In the first phase, a target acceptance degree is computed for each argument. In the second phase, a numerical method (the bisection method)  is adapted to find the initial weights which lead to this acceptance degree. \cite{oren2022inverse} describe several heuristics which underpin the numerical method, and identify one which works well in practice.

%used to find the initial weights which lead to this acceptance degree. Since the bisection method is designed to find the zeros of a function with only one variable, and since changing the initial weight of one argument can affect the acceptance degree of other arguments, repeated applications of the bisection method are often necessary. The authors of \cite{oren2022inverse} identify several strategies for selecting the argument to which the bisection method should be applied, and demonstrate that selecting the argument whose current acceptance degree is furthest away from its target acceptance degree works well in practice.

The inverse problem for $\Sigma_{\HC},\Sigma_{\MB}$ and $\Sigma_{\CB}$ was the focus of previous research \cite{oren2022inverse}.
In Theorem \ref{theorem:explicit MB CB HC} below we give a complete characterisation of the inverse problem for these semantics.
In fact we give a positive answer to several other related problems, specifically the \emph{projective preference ordering problem}, two variants of the \emph{reflection problem}, and the \emph{radiality problem}.
These are described in detail next.

\section{Rephrasing Weighted Gradual Semantics with Scoring Schemes}
\label{subsec:scoring-schemes}

To more easily formalise the inverse problem and related problems, we start by presenting a different approach to weighted gradual semantics.

This approach revolves around the notion of \emph{scoring schemes} (Definition \ref{def:scoring scheme}), which are functions mapping from weights to acceptability degrees. Thus, a weighted gradual semantics can be seen as a function which takes as input a non-weighted argumentation framework, and returns a specific scoring scheme, which then allows us to map from argument weights in the framework to acceptability degrees (Definition \ref{def:absweighgradsem}).

Rather than proving results about individual gradual semantics, the use of scoring schemes allows us to reason about the behaviour of a family of gradual semantics which return  scoring schemes obeying some properties, enabling us to prove results for every gradual semantics in the family.

With the definition of scoring schemes --- and thus gradual semantics --- in hand, we turn our attention to inverse problems in Section \ref{subsec:inverse and related problems}. Such inverse problems revolve around moving from acceptability degrees to weights, but consider different facets of doing so. All ask whether some scoring scheme satisfies some property. In Section \ref{subsec:based scoring schemes} we identify a family of scoring schemes, referred to as based scoring schemes (see Section \ref{subsec:based scoring schemes}), which satisfies perhaps the most important of our desired properties, namely that any gradual semantics built using such a scoring scheme is guaranteed to converge to a unique vector of acceptability degrees given a vector of initial weights. In later sections, we show how further specialisations of based scoring schemes (and thus gradual semantics) satisfy the various inverse problems.

\subsection{Scoring Schemes and Abstract Weighted Gradual Semantics}

For any $A$ and $B$, let \gls{FuncAB} denote the set of all functions $f \colon A \to B$ and \gls{EndA} the set of endomorphisms (i.e., self functions) $f \colon A \to A$. Given $T \in \End(A)$, a sequence $a_0,a_1,\dots \in A$ is called a {\bf \gls{Tseq}} if $a_{k+1}=T(a_k)$ for all $k \geq 0$.
Given a directed graph $\G=\langle \A,\D \rangle$, we may, and will, assume throughout that $\A=\{1,\dots,n\}$ and $n \geq 1$.

A scoring function $s \colon \A \to [0,1]$ is identified with a vector $(x_1,\dots,x_n)$ in \gls{X}$=[0,1]^n$ which we call the {\bf space of scores}.
Similarly the {\bf space of weights}, i.e., functions $w \colon \A \to [0,1]$, can be identified with vectors in \gls{W}$=[0,1]^n$.
We also set \gls{K}$=[0,1]^n$ which we call the {\bf space of vertices}\footnote{This terminology arises due to an intuitive interpretation of Definition \ref{def:weighted scoring base}, and a further discussion of this term is provided at that point.}.
We regard $X,W$, and $K$ as subspaces of the Euclidean space $\RR^n$, which we equip with the partial order $x \preceq y$ if $x_i \leq y_i$ for all $1 \leq i \leq n$. Thus, they also inherit this order\footnote{Note that although $X,W$ and $K$ all refer to $[0,1]^n$, we use distinct labels for these to highlight their different roles.}.

\begin{defn}\label{def:scoring scheme}
A {\bf scoring scheme} is a function \gls{sigma}$ \colon W \to X$.
The space of all scoring schemes is \gls{Scheme}$ = \Func(W,X)$.
\end{defn}

% Functions $\sigma \colon W \to X$ from the space of weights to the space of scores are called {\em scoring schemes} and form the space $\Scheme(X)$.
The space of {\bf acceptability degrees} of $\sigma \in \Scheme$ is \gls{Dsigma}$ = \{ \sigma(w) | w \in W \} = \sigma(W)$, the image of $W$.

We emphasise that a scoring scheme is not a gradual semantics. Instead, it simply maps from a vector of initial weights to acceptability degrees. Thus, every line in Table \ref{tab:ex1} is a scoring scheme (with the first line being the identity scoring scheme) produced by a weighted gradual semantics for the argumentation framework illustrated in Figure \ref{fig:ex1}.

Let $\Sigma$ be a weighted gradual semantics.
For any argumentation framework $\G=\langle \A,\D \rangle$, we obtain a scoring scheme $\sigma^\G \colon W \to X$ 
\[
\sigma^\G (w) = \Sigma^{\langle \A,\D,w\rangle}
\] 

This means that a gradual semantics, which takes a weighted argumentation framework and returns a scoring function, can be seen as a function which takes an argumentation framework (without weights) and return a particular scoring scheme. We formalise this in the next explanatory comment.

\begin{scholiumx}
A weighted gradual semantics $\Sigma$ can be viewed as a function 
\[
\Sigma \colon \{ \text{argumentation frameworks $\G$}\} \xto{\ \G \mapsto \sigma^\G\ } \{ \text{scoring schemes}\}.
\]

By partitioning argumentation frameworks according to the number of their arguments $n=|\A|$ and using the notation above, a weighted gradual semantics is a collection of functions, one for each $n \geq 1$
\[
\Sigma \colon \{ \text{argumentation frameworks $\G$ with $|\A|=n$} \} \xto{\G \mapsto \sigma^\G} \Scheme.
\]
The acceptability degree space of the semantics $\Sigma$ for a graph $\G$ is $D_{\sigma^\G}$.
\end{scholiumx}

%In Subsection \ref{subsec:inverse and related problems}, we showed that a weighted gradual semantics is given by a collection of functions, one for each $n \geq 1$, from the collection of all argumentation frameworks with vertex set $\A=\{1,\dots,n\}$ to $\Scheme$.
Since every argumentation framework $\G$ is determined by its adjacency matrix, a weighted gradual semantics is the same as a collection of functions, one for each $n \geq 1$,
\[
\Sigma \colon \{ \text{ $n \times n$ adjacency matrices} \} \to \Scheme.
\]

Let \gls{MatnnnR} be the set of all square matrices of size $n$ whose elements are  in $\RR$. If $A \in \Mat_{n \times n}(\RR)$ we write $A \geq 0$ if $a_{i,j} \geq 0$.
Denote \gls{MatnnnR+} $= \{ A \in \Mat_{n \times n}(\RR) : A \geq 0 \}$.

\begin{defn}
\label{def:absweighgradsem}
An {\bf \gls{awgs}} is a collection of functions, one for each $n \geq 1$,
\[
\Sigma \colon \Mat_{n \times n}^+(\RR) \to \Scheme
\]
\end{defn}

Thus, by restriction to $n \times n$ adjacency matrices (those whose entries are only zeroes and ones, and where a 1 entry at index $i,j$ denotes an attack from the argument indexed by $j$ to the argument indexed by $i$) any {\em abstract} weighted gradual semantics restricts to an ordinary weighted gradual semantics.

\begin{example}
    Let $\G$ be a complete graph with arguments $\A = \{1,2,3\}$. Then, the acceptability degree space of semantics 
    $\Sigma_{\HC}$ for $\G$ is $D_{\sigma^{\G}}$, where $\sigma^\G$ is the associated scoring scheme and for every $w \in W, \sigma^\G(w) = \Sigma_{\HC}^{\langle \A, \D, w \rangle}$. A representation of $D_{\sigma^{\G}}$ is shown in Figure \ref{fig:D_sigma}. Note that the boundaries of the acceptability degree space are not linear.

    \begin{figure}
        \centering
        \includegraphics[width=5cm]{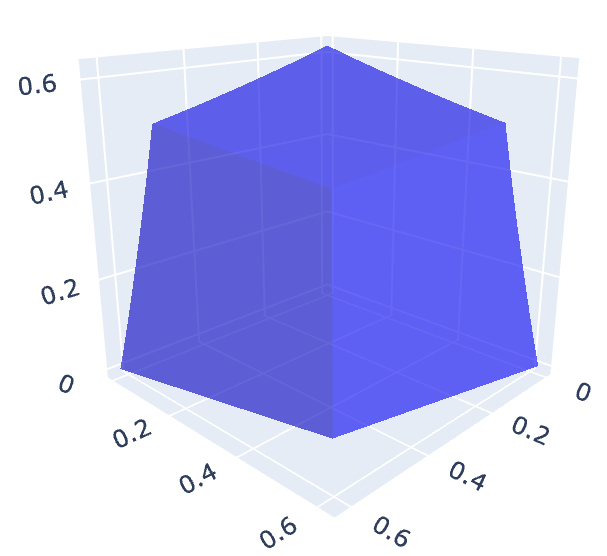}
        \caption{Representation (in blue) of the acceptability degree space of $\Sigma_{\HC}$ for a complete argumentation graph with 3 arguments. Note that the surfaces in the figure are non-linear.}
        \label{fig:D_sigma}
    \end{figure}
\end{example}

\subsection{The inverse problems}
\label{subsec:inverse and related problems}

We generalise the inverse problem introduced in \cite{oren2022inverse} to a family of \emph{inverse-type problems}. These revolve around moving from acceptability degrees to weights, but consider different facets in doing so. Thus, we are --- for example --- interested in finding conditions on scoring schemes 
%By an {\em inverse-type problem} we mean a question that goes from acceptability degrees back to weights.
%That is, we are interested in finding conditions on scoring schemes 
which guarantee that acceptability degrees satisfying certain properties are realised, and in this case calculate the weights that realise them.
We formulate four inverse-type problems that are the focus of this paper.

The first problem asks whether, for a given scoring scheme $\sigma$, we can find an \textit{inverse function} \gls{inv} such that for any element $x \in X$, we can determine whether $x$ is in the space of acceptability degrees of $\sigma$ by testing whether $\inv(x) \in W$, and in this case $w=\inv(x)$ is a weight such that $\sigma(w)=x$.

\begin{problem}\label{problem:inverse}
The {\bf \gls{invP}} for a scoring scheme $\sigma \in \Scheme$ seeks  % a given scoring function  $x \in X$ an efficient algorithm to detect whether $x$ is an aceptability degree, and if so, calculate a weight that yields it.
%To be precise, we seek 
a (computable) function $\inv \colon X \to \RR^n$ such that 
\begin{enumerate}[label=(\roman*)]
\item 
$x \in D_\sigma \iff \inv(x) \in W$,  and in this case

\item
$\sigma(\inv(x))=x$.

\end{enumerate}

\end{problem}

\begin{example}[Example \ref{ex:semantics} cont.]
In section \ref{subsec:Lp lambda mu A}, we will show that the scoring scheme $\Sigma^\AF_{\CB}$ has a positive solution to this problem. Thus, given the final acceptability degrees shown in Table \ref{tab:ex1} for the argumentation graph shown in Figure \ref{fig:ex1}, it is possible to identify initial weights (e.g., [0.43, 0.39, 0.92, 0.3]) which result in these acceptability degrees ([0.43, 0.3, 0.38, 0.3]).
\end{example}

The second problem (reflection) ask whether the scoring scheme is injective or a homeomorphism onto its image.
This allows us to answer questions such as: ``Given an argumentation framework and a gradual semantics $\Sigma$, is it possible to find two distinct weighting functions such that the acceptability degrees will be the same?''

\begin{problem}\label{problem:reflection}
The {\bf  \gls{reflP}} for  $\sigma \in \Scheme$ asks whether $\sigma \colon W \to X$ is injective, i.e., every acceptability degree is obtained by a {\em unique} weight $w \in W$.

The closely related {\bf \gls{toporflP}} asks whether $\sigma \colon W \to \D_{\sigma}$ is a \gls{homeomorphism}\footnote{Recall that a homeomorphism is a function which is bijective and continuous, and whose inverse is also continuous. } onto its image.

\begin{example}[Example \ref{ex:semantics} cont.]
In section \ref{subsec:Lp lambda mu A}, we will show that the scoring scheme $\Sigma^\AF_{\CB}$ has a positive solution for the  reflection problem (and thus the reflection problem). Thus, given the final acceptability degrees shown in Table \ref{tab:ex1} for the argumentation graph shown in Figure \ref{fig:ex1}, it is possible to state that the initial weights [0.43, 0.39, 0.92, 0.30] are the only ones which result in these acceptability degrees ([0.43, 0.30, 0.38, 0.30]). 

Moreover, since the topological reflection problem is also satisfied, a small change to the initial weights will result in a small change in the acceptability degrees (and vice versa). Thus, for example, the final acceptability degrees ([0.4301,0.29995035,0.38139473,0.301]) is obtained from the initial weights ([0.4301,0.3899,0.92,0.301]).
\end{example}

%
%A weighted gradual semantics $\sigma$ solves the (topological) reflection problem if every $\sigma^\G \in \Scheme$ solves it.
\end{problem}

The third problem asks whether any element $y$ of the space of scores can be projected into an element $x \in \D_\sigma$ by re-scaling it.
Note that the acceptability degrees $x$ have the same arguments' preference ordering and ratio between them as the acceptability degrees $y$.
This will allow us to answer questions such as: ``Given an argumentation framework and a gradual semantics $\Sigma$, is it possible to find a weighting function such that the acceptability degree of argument $a$ is $n$ times as much as the one of $b$?''. 
If the projective preference ordering problem is answered in the affirmative, one can easily find acceptability degrees for each argument that are ratios of each other (e.g., argument $a_1$ should be 0.5 the acceptability degree of $a_2$, which should be 3 times $a_3$, etc). This is achieved by creating arbitrary values for each argument which obey the ratio, and then seeking out an appropriate $t$ so as to scale these down to a realisable element of $\D_\sigma$. This can be seen as a generalisation of \cite{oren2022inverse}, which identified a specific ratio of acceptability degrees for each argument to ensure achievability.

\begin{problem}\label{problem:preference}
The {\bf \gls{ppop}} for a scoring scheme $\sigma$ asks whether for any $y \in X$ there exists some $t>0$ such that $t y \in D_\sigma$.

\end{problem}

\begin{example}[Example \ref{ex:semantics} cont.]
\label{ex:projective_pref}
In Section \ref{subsec:Lp lambda mu A}, we will show that the scoring scheme $\Sigma^\AF_{\CB}$ has a positive solution for the projective preference ordering problem. Let us consider the case where one may want $a_2$ to have an acceptability degree that is two times higher than all the other arguments. The target acceptability degree $x = [0.5, 0.5, 1, 0.5]$ is not valid (as there are no weighting functions that realise this acceptability degree vector). However, the scaled down vector $t x$, with $t = 0.40$ is valid and obtained using the initial weights [0.20, 0.24, 0.80, 0.20].
\end{example}

The fourth problem ask whether for any element $x$ of the acceptability degree space of $\sigma$ ($D_\sigma)$, all the elements on the line from $0^n$ to $x$ are in $D_\sigma$.  In other words, a positive answer to this question means there are no ``gaps'' in the acceptability degree space along any line starting at the origin and a point in the acceptability degree space. This is important as it allows us to scale down any element along the line (via Problem \ref{problem:preference}).

%
%This will allow us to answer questions such as: ``Given an argumentation framework $\G = \langle\A,\D\rangle$ and a gradual semantics $\Sigma$ such that there is a weighting function $w$ that gives $\Sigma^{\langle\A,\D,w\rangle}$, is there a weighting function $w'$ such that for all $a \in \A$, $\Sigma^{\langle\A,\D,w'\rangle}(a) = \Sigma^{\langle\A,\D,w\rangle}(a) $?''. 

\begin{problem}\label{problem:radiality}
The {\bf \gls{radialP}} for a scoring scheme $\sigma$ asks whether for any $x \in D_\sigma$, the line segment $[0,x]=\{tx:0 \leq t \leq 1\} $ in $\RR^n$ is contained in $D_{\sigma}$.
% That is, whether $t \cdot x$ is an acceptability degree of $\sigma$ for any $0 \leq t \leq 1$.
In this case we call $D_\sigma$ {\em radial}.
%
%A weighted gradual semantics $\sigma$ solves the radiality problem if every $\sigma^\G \in \Scheme$ solves it.
\end{problem}

%The importance of these four questions will be discussed in Section \ref{}.

\begin{example}[Example \ref{ex:projective_pref} cont.]
In Section \ref{subsec:Lp lambda mu A}, we will show that the scoring scheme $\Sigma^\AF_{\CB}$ has a positive solution for the radiality problem. As mentioned in Example \ref{ex:projective_pref}, the initial weights [0.20, 0.24, 0.80, 0.20] leads to the acceptability degree vector $0.4 x$. Thus, the acceptability degree vector $0.35 x$  is also valid and is obtained using the initial weights [0.175, 0.205625, 0.65625, 0.175] (and indeed, any acceptability degree vector for which $t$ approximately less than or equal to 0.463).
\end{example}

\begin{defn}
We say that a weighted gradual semantics $\Sigma$ has a solution for the inverse problem \ref{problem:inverse}, the (topological) reflection problem \ref{problem:reflection}, the projective preference ordering problem \ref{problem:preference} or the radiality problem \ref{problem:radiality} if for any argumentation framework $\G$ the scoring scheme $\sigma^\G =\Sigma(\G) \in \Scheme$ has a solution for these problems.
\end{defn}

\subsubsection*{The importance of inverse problems} 
We briefly segue to discuss the importance of these four problems. The clearest application of these problems is in opponent modelling and preference elicitation \cite{DBLP:journals/corr/abs-2211-16118,DBLP:conf/prima/MahesarOV20,oren2022inverse}. During dialogue, participants often express acceptability degrees rather than initial weights of arguments. By being aware of the argumentation framework and the argument initial weights, one can more easily compute the acceptability degree of arguments one may want to introduce and determine whether doing will, or won't contribute to dialogical goals. Our work allows one to identify these (hidden) initial weights from (dialogically exposed) acceptability degrees.
We now delve more deeply into the importance of, and potential applications, for each problem.

\paragraph{The inverse problem} This problem is perhaps the most fundamental, and allows us to determine whether it is possible to move from some acceptability degrees to valid initial weights. Thus, if a gradual semantics satisfies the inverse problem, then  given a target set of acceptability degrees for some arguments, one can determine whether these are achievable, i.e., whether there exist valid initial weights which lead to them. 

In a dialogue where participants disagree about final acceptability degrees, achievable initial weights suggest that --- for example --- a dialogue participant must seek more evidence to support the desired initial weights. In cases where the desired initial weights are invalid, the participant understands that the underlying argumentation graph must be modified, for example through the addition or deletion of attacks or arguments (c.f., \cite{DBLP:conf/ijcai/Coste-MarquisKM15}).

\paragraph{The (topological) reflection problem} The inverse problem does not guarantee a one-to-one mapping between acceptability degrees and initial weights. Thus, when trying to learn the initial weights of a reasoner, one may identify the incorrect weights given some acceptability degree. A positive solution to the reflection problem provides this guarantee. By having a positive answer to both of the reflection and  inverse problem, we can ensure that our semantics can be used to learn the initial weights used by a reasoner, allowing us to infer this element of their knowledge. 
A positive answer to the topological reflection problem (which also results in a positive answer to the reflection problem) ensures that a small variation in the initial weights induces a small variation of the acceptability degree. This is important if one were to consider the impact of arguments \cite{DBLP:conf/ecsqaru/DelobelleV19,DBLP:journals/corr/abs-2401-08879}. Without this property, a small change in initial weights could lead to large changes in acceptability degrees (and vice-versa), making the analysis of argument impact challenging. A semantics which satisfies topological reflection is thus --- in a sense --- well behaved and more amenable to impact analysis.

\paragraph{The projective preference ordering problem} The ability to achieve ratios of acceptability degrees underpins the approach described in \cite{oren2022inverse} to find initial weights which could be used for any semantics which satisfies this problem. In addition, this problem suggests an approach to argument weight revision; while argument graphs are normally modified to achieve some target argument values (e.g., justification status \cite{jeanguy13dynamic} for classical frameworks, and preferences or acceptability degrees in the case of argumentation semantics \cite{tarle22multiagent}), any semantics using this property can instead be scaled to achieve the same effect.

\paragraph{The radiality problem} While the projective preference ordering ensures that one can scale down any vector of target acceptability degree to achieve a valid one, there is no guarantee that further scaling down will also yield valid answers. A positive answer to this problem provides such a guarantee, as well as providing a necessary condition for the volume of acceptability degrees in to contain no gaps. In cases where initial weights represent some form of certainty or confidence, and where evidence must be gathered (by expending resources) to strengthen these initial weights, the ability to weaken arguments arbitrarily would allow a reasoner to trade off resource consumption with argument strength (similar to the ideas described in \cite{skitalinskaya-etal-2023-claim}). 

\medskip

One of the achievements of this paper is the identification of a large family of abstract weighted gradual semantics which includes the semantics $\Sigma_{\HC},\Sigma_{\MB}$ and $\Sigma_{\CB}$ from Examples \ref{example:MB semantics}, \ref{example:CB semantics}, and \ref{example:HC semantics} and for which all four inverse-type problems above are solved.
Moreover, we give explicit and easily computable inverse functions $\inv_{\MB}, \inv_{\CB}$ and $\inv_{\HC}$ for these three semantics.
See Theorem \ref{theorem:explicit MB CB HC} below for a detailed statement of the result.

\subsection{Based scoring schemes}
\label{subsec:based scoring schemes}

To compute the acceptability degrees for an argumentation framework,the weighted gradual semantics which we consider in this work require us to compute a fixed point through an iterative calculation. In this section, we capture this iterative process. Following some preliminary notions, we begin with the definition of a scoring base (Definition \ref{def:scoring base}) which captures some argument properties and specifies how the acceptability degrees within the argumentation framework are aggregated across iterations. A scoring dynamics (Definition \ref{def:scoring dynamic}) describes how acceptability degrees can be updated across each iteration, and guarantees that the iterative process will converge. We link these two concepts by showing that a scoring dynamics can be obtained from a scoring base. Given a specific element of the space of weights, we describe a weighted scoring base (Definition \ref{def:weighted scoring base}), which associates a scoring base to this element. Thus, a scoring scheme can be defined by specifying a weighted scoring base and induced scoring dynamics. 

Critically, we show that any weighted gradual semantics which only produces scoring schemes which can be described using a weighted scoring base (and concomitant induced scoring dynamics) will converge, generalising the important result of \cite{pu_argument_2014}. In Section \ref{subsec:inverse problems}, we describe additional restrictions on weighted scoring bases which must be satisfied for the various inverse problems to have a positive solution. In Section \ref{subsec:Lp lambda mu A}, we identify a class of based scoring schemes which meet all of these restrictions.

%In this section, we introduce the important notion of \emph{based scoring schemes} as particular scoring schemes which are obtained through a weighted scoring base (described in Definition \ref{def:weighted scoring base}).
%
%This formalism is crucial to structure the following properties and discussion.

We first start with the necessary definitions to introduce the notion of a scoring base.

\begin{defn}\label{def:BHI}
A function $f \colon \RR^n \to \RR$ is called {\bf \gls{increasing}} if  for any $x,y \in \RR^n$
\[
x \preceq y \implies f(x) \leq f(y).
\]
It is called {\bf \gls{homogeneous}} if for all $x \in \RR^n$ and all $t \geq 0$
\[
f(t x)=t  f(x).
\]
A function $f \colon X \to \RR^n$ is called homogeneous (resp. increasing) if each component $f_i \colon X \to \RR$ is homogeneous (resp. increasing).
We say it is \emph{\gls{non-negative}}, written $f \geq 0$, if $f_i(x) \geq 0$ for all $1 \leq i \leq n$.
It is \emph{\gls{bounded}} if there is some $M>0$ such that $f_i(x) \leq M$ for all $1 \leq i \leq n$.
Let \gls{BpHIn} denote the set of all functions $f \colon X \to \RR^n$ which are bounded, non-negative, homogeneous and increasing.
For $n=1$, we write \gls{BpHI}.
\end{defn}

The following two propositions show that the collection $\BpHI(X)$ is very rich, and underpins our notion of a scoring base (Definition \ref{def:scoring base}). In turn, these scoring bases are used as the building blocks of the existing gradual semantics studied in the paper (see Example \ref{ex:scoringdynamics}) and new (Section \ref{sec:new_semantics}) semantics.
%
%Their proofs are straightforward and left to the reader.

\begin{proposition}\label{prop:HM linear}
$\BpHI(X)$ contains all projections $\pi_i \colon X \xto{x \mapsto x_i} [0,\infty)$. 
\end{proposition}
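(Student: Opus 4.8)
The plan is to verify the four defining conditions of $\BpHI(X)$ directly for each projection $\pi_i \colon X \to [0,\infty)$, $x \mapsto x_i$. Since $X = [0,1]^n$, the codomain claim is immediate: for $x \in X$ we have $x_i \in [0,1] \subseteq [0,\infty)$, so $\pi_i$ indeed maps into $[0,\infty)$, and this simultaneously gives non-negativity ($\pi_i(x) = x_i \geq 0$) and boundedness (take $M = 1$, since $x_i \leq 1$ for all $x \in X$).

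Next I would check the order-theoretic and scaling conditions. For \emph{increasing}: if $x \preceq y$ then by definition $x_j \leq y_j$ for every coordinate $j$, and in particular $x_i \leq y_i$, i.e. $\pi_i(x) \leq \pi_i(y)$. For \emph{homogeneous}: for any $t \geq 0$ and $x$ with $tx$ still in the relevant domain, $\pi_i(tx) = (tx)_i = t x_i = t\,\pi_i(x)$. (One small point worth a remark: $\pi_i$ as written has domain $X$, but homogeneity in Definition \ref{def:BHI} is phrased for functions on $X \to \RR^n$ with each component homogeneous; since a single projection is already its own component, the condition reads exactly as $\pi_i(tx) = t\,\pi_i(x)$ whenever $tx \in X$, which holds for $t \in [0,1]$ — or one may regard $\pi_i$ as the restriction of the obviously homogeneous map $\RR^n \to \RR$ and note the identity is inherited.)

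There is essentially no obstacle here — the statement is a sanity check that the basic coordinate functions lie in the ambient class, and every condition unwinds to a one-line observation about components of vectors in $[0,1]^n$. The only thing requiring a moment's care is making sure the domain/codomain bookkeeping matches Definition \ref{def:BHI} exactly (that homogeneity and the non-negativity/boundedness bounds are stated componentwise, and that the codomain $[0,\infty)$ rather than $\RR$ is consistent with non-negativity), after which the proof is a short enumeration of the four properties with their one-line justifications.
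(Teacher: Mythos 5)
Your proof is correct; the paper itself gives no explicit proof of this proposition (treating it as immediate), and your direct verification of the four properties --- boundedness with $M=1$, non-negativity, homogeneity, and monotonicity, each a one-line unwinding of the definition of $\preceq$ and of coordinates in $[0,1]^n$ --- is exactly the argument that is being left implicit. Your parenthetical remark about the domain bookkeeping for homogeneity (that the identity $\pi_i(tx)=t\,\pi_i(x)$ is checked for $t$ with $tx \in X$, or inherited from the globally defined projection on $\RR^n$) is a fair reading of Definition \ref{def:BHI} and does not affect correctness.
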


We can combine elements of $\BpHI(X)$ to obtain more complex functions which are also elements of $\BpHI(X)$.

\begin{proposition}\label{prop:HM closure}
Given a finite $k$, let $\psi_1,\dots,\psi_k \in \BpHI(X)$.
Then the following functions obtained from $\psi_1,\dots,\psi_k$ are also in $\BpHI(X)$.
\begin{enumerate}[label=(\roman*)]
\item
$\sum_{j=1}^{k} a_j \psi_j$   
for any $a_1,\dots,a_k \geq 0$. 

\item
$\max\{\psi_1,\dots,\psi_k\}$. 
\item 
$\sqrt[k]{\psi_1 \cdots \psi_k}$ (geometric mean).

\item 
$\sqrt[p]{\psi_1^p + \cdots +\psi_k^p}$ for any $p>0$.
\end{enumerate}
\end{proposition}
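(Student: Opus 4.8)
The plan is to verify each of the four closure properties directly from the definitions, checking the four defining conditions (boundedness, non-negativity, homogeneity, and monotonicity/increasingness) in turn for each constructed function. Since $\BpHI(X)$ is defined componentwise only in the codomain-$\RR^n$ case, and here all $\psi_j$ land in $\RR$, every claim reduces to a one-dimensional statement about functions $X \to \RR$, which keeps the bookkeeping light.

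For non-negativity and boundedness, the arguments are immediate: a non-negative linear combination of non-negative bounded functions is non-negative and bounded (if $\psi_j \leq M$ then $\sum a_j \psi_j \leq (\sum a_j) M$); the pointwise maximum of finitely many functions bounded by $M$ is bounded by $M$ and is non-negative; the geometric mean and the $L^p$-type combination $\sqrt[p]{\sum \psi_j^p}$ of non-negative functions bounded by $M$ are non-negative and bounded by $M$ and $k^{1/p} M$ respectively. For homogeneity, I would fix $x \in X$ and $t \geq 0$ and compute: $\sum a_j \psi_j(tx) = \sum a_j t \psi_j(x) = t \sum a_j \psi_j(x)$; $\max_j \psi_j(tx) = \max_j t\psi_j(x) = t \max_j \psi_j(x)$ since $t \geq 0$; $\sqrt[k]{\prod \psi_j(tx)} = \sqrt[k]{\prod t \psi_j(x)} = \sqrt[k]{t^k \prod \psi_j(x)} = t\sqrt[k]{\prod \psi_j(x)}$; and similarly $\sqrt[p]{\sum (\psi_j(tx))^p} = \sqrt[p]{\sum t^p \psi_j(x)^p} = t \sqrt[p]{\sum \psi_j(x)^p}$. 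The only subtlety worth flagging is the use of $t \geq 0$ (rather than $t \in \RR$) so that $t$ can be pulled out of $\max$, $p$-th roots, and $k$-th roots without sign issues; since $\psi_j \geq 0$, raising to powers and taking roots is unproblematic.

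For the increasing property, I would fix $x \preceq y$ in $X$, so that $\psi_j(x) \leq \psi_j(y)$ for every $j$ (and note these values are non-negative). Then: $\sum a_j \psi_j(x) \leq \sum a_j \psi_j(y)$ since each $a_j \geq 0$; $\max_j \psi_j(x) \leq \max_j \psi_j(y)$ because each term on the left is dominated by the corresponding term on the right, hence by the max on the right; for the geometric mean, $\prod_j \psi_j(x) \leq \prod_j \psi_j(y)$ follows by monotonicity of the product on non-negative reals (each factor increases and all factors are $\geq 0$), and then $\sqrt[k]{\cdot}$ is monotone; likewise for $\sqrt[p]{\sum \psi_j^p}$, the map $u \mapsto u^p$ is increasing on $[0,\infty)$, so $\psi_j(x)^p \leq \psi_j(y)^p$, the sums compare, and $\sqrt[p]{\cdot}$ is increasing on $[0,\infty)$.

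I do not anticipate a genuine obstacle here; the statement is a routine verification. The closest thing to a pitfall is making sure that non-negativity of the $\psi_j$ is actually invoked wherever it is needed — specifically in the monotonicity of the product (for (iii)) and in the monotonicity of $u \mapsto u^p$ and of $\sqrt[p]{\cdot}$ when $p$ is small, since these maps are only monotone on $[0,\infty)$ — and that the hypothesis $a_j \geq 0$ in (i) and $t \geq 0$ in the homogeneity check are used rather than stronger assumptions. I would present the proof as four short paragraphs, one per item, each listing the four properties with the one-line justification above; the non-negativity and boundedness lines can be compressed, with homogeneity and increasingness getting the bulk of the (brief) attention.
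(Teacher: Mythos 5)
Your proof is correct, and in fact the paper gives no proof of this proposition at all --- it is stated without argument and treated as a routine verification (the closest the paper comes is the proof of Proposition \ref{prop:kappa phi of Lp}, which carries out the same four checks for the specific $L^p$-based $\varphi$). Your direct verification of boundedness, non-negativity, homogeneity and monotonicity for each of the four constructions is the natural and complete argument, and you correctly isolate the only points where care is needed: non-negativity of the $\psi_j$ for the monotonicity of products and of $u \mapsto u^p$ and $\sqrt[p]{\cdot}$, and the restriction to $t \geq 0$ and $a_j \geq 0$.
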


Proposition \ref{prop:HM closure} applies to $\BpHI(X)$, but is easily generalised to $\BpHI_n(X)$. 

%\begin{example}\label{example:BpHI prime example}
%Fix some $1 \leq p \leq \infty$ and some $a=(a_1,\dots,a_n)$ where $a_i \geq 0$.
%With the notation in \eqref{eqn:Lp norm} and \eqref{eqn:wedge vectors} in Subsection \ref{subsec:Lp lambda mu A}, it follows immediately from Propositions \ref{prop:HM linear} and \ref{prop:HM closure} that $\psi \colon X \to [0,\infty)$ defined by
%\[
%\psi(x) = \| a \wedge x\|_p
%\]
%belongs to $\BpHI(X)$.
%In particular, by considering $p=1$ and $p=\infty$, it follows that the following functions are in $\BpHI(X)$ where $a_i \geq 0$.
%\begin{itemize}
%\item $\psi(x) = \sum_j a_j x_j$ 
%
%\item $\psi(x) = \max_j\,\{ a_j x_j\}$ 
%\end{itemize}
%\end{example}

We can now introduce the fundamental notion of a scoring base.

\begin{defn}\label{def:scoring base}  %{def:weight-base}
A {\bf \gls{scoringbase}} is a pair $(c,f) \in K \times \BpHI_n(X)$.
That is,
\begin{itemize}
\item
$c \in K$ and 
\item
$f \colon X \to [0,\infty)^n$ is bounded, homogeneous, non-negative, and increasing.
\end{itemize}
The set of all scoring bases \gls{BaseX} denotes 
\[
 K \times \BpHI_n(X) \qquad  (\subseteq K \times \Func(X,\RR^n)). 
\]
\end{defn}

From a scoring base, one can construct a particular endomorphism from $X$ to $X$.

\begin{defn}\label{defn:T_kappa phi}
Let $(c,f)$ be a scoring base, and $i \in \{1 \ldots |n|\}$, be an index w.r.t arguments.
Define  \gls{Tcf}$ \in \End(X)$ by
\[
T_{(c,f)}(x)_i = \frac{c_i}{1+f(x)_i}.
\]

$T_{(c,f)}$ is well defined because $0 \leq c_i \leq 1$ and $f(x)_i \geq 0$ so $T_{(c,f)}(x) \in X$ for any $x \in X$.
\end{defn}

A crucial question is: ``Given an arbitrary element $x \in X$, will the successive application of $T_{(c,f)}$ lead to convergence and will it always converge to the same fix point?''.
To formalise this, we introduce the following notion of scoring dynamics so that the previous question can thus be rephrased as: ``Is $T_{(c,f)}$ a scoring dynamics?''.

\begin{defn}\label{def:scoring dynamic}
A {\bf \gls{scoringdynamics}} is a function $T \in \End(X)$ such that
\begin{enumerate}[label=(\roman*)]
\item
$T$ has a unique fixed point denoted $\fix(T) \in X$, and
\item
Every $T$-sequence $x_0,x_1,\dots \in X$ is convergent to $\fix(T)$.
\end{enumerate}
Denote by \gls{SDX} the collection of all scoring dynamics.
The assignment $T \mapsto $\gls{fixT} gives a function 
\[
\fix \colon \SD(X) \to X.
\]
\end{defn}

%We now describe a useful method to construct scoring dynamics  $T$.

To answer the question ``Is $T_{(c,f)}$ a scoring dynamics?'', we start by showing, in Theorem \ref{th:pu-generalisation}, that many order reversing functions of $\End(X)$ are scoring dynamics.

While our proof for Theorem \ref{th:pu-generalisation} is inspired by the work of Pu, Zhang and Luo (\cite[Theorems 1 and 2]{Pu14}) who showed that a unique fixed-point exists for the h-categorizer semantics, i.e.,\ in the case where all arguments have an initial weight of 1; we are able to prove such a result for a much larger class of functions.

\begin{definition}
A function $f \colon X \to \RR^n$ is called {\bf \gls{orderpreserving}} if $x \preceq x' \implies f(x) \preceq f(x')$ for any $x,x' \in X$.
It is called {\bf \gls{orderreversing}} if $x \preceq x' \implies f(x') \preceq f(x)$ for any $x,x' \in X$.    
\end{definition}

\begin{theorem}\label{th:pu-generalisation}
Let $f:X \to X$ be a function such that
\begin{enumerate}[label=(\alph*)]
\item \label{th:pu:reversing}
$f$ is order reversing, and

\item \label{th:pu:bound}
there is some $0 < \alpha \leq 1$ such that for any $x \in X$ and any $0 \leq t \leq 1$
\[
f(tx) \preceq \frac{1}{t+\alpha(1-t)} f(x).
\]
\end{enumerate}
Then $f$ has a unique fixed point $y \in X$. 
Furthermore, $y = \lim_{k \to \infty} x^{(k)}$ where $x^{(k)}$ is any sequence defined recursively by choosing $x^{(0)} \in X$ arbitrarily and $x^{(k+1)}=f(x^{(k)})$.
\end{theorem}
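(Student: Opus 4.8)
The plan is to prove existence of a fixed point first, then uniqueness, and finally convergence of every iteration sequence, exploiting the hypotheses \ref{th:pu:reversing} and \ref{th:pu:bound} together with the compactness and order structure of $X=[0,1]^n$. For existence I would apply a fixed-point theorem for monotone maps: although $f$ itself is order \emph{reversing}, the composite $g = f \circ f$ is order \emph{preserving} on the complete lattice $X$, so by the Knaster--Tarski theorem $g$ has a fixed point; alternatively, since $X$ is a compact convex subset of $\RR^n$ and $f$ need not be assumed continuous a priori, I would instead argue directly. A cleaner route that avoids continuity assumptions: set $y_0 = f(0^n) \in X$ (the top of the range in a suitable sense) and consider the decreasing-then-increasing behaviour of the even/odd iterates. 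Concretely, starting from $x^{(0)} = 0^n$, order reversing gives $x^{(1)} = f(0^n) \succeq 0^n$, hence $x^{(2)} = f(x^{(1)}) \preceq f(0^n) = x^{(1)}$, and inductively the even iterates $x^{(0)} \preceq x^{(2)} \preceq x^{(4)} \preceq \cdots$ form an increasing sequence while the odd iterates $x^{(1)} \succeq x^{(3)} \succeq \cdots$ form a decreasing sequence, with every even iterate below every odd one. Both are bounded sequences in $[0,1]^n$, so they converge to limits $y^- \preceq y^+$ with $f(y^-) = y^+$ and $f(y^+) = y^-$ (here one does need a mild continuity/monotone-limit argument, which I would supply using that $f$ is order reversing and the squeeze).

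The heart of the matter — and the step I expect to be the main obstacle — is showing $y^- = y^+$, i.e.\ collapsing the gap between the lower and upper limits; this is exactly where hypothesis \ref{th:pu:bound} enters and where Pu--Zhang--Luo's argument is generalised. The idea is: suppose $y^- \preceq y^+$ with $f(y^-)=y^+$, $f(y^+)=y^-$, and $y^- \neq y^+$. Since $y^- \preceq y^+$ componentwise, choose the largest $t \in [0,1]$ such that $t\, y^+ \preceq y^-$ (such a $t$ exists and, if $y^- \ne y^+$ while we expect $y^-$ and $y^+$ to have the same support, $t<1$; the support/positivity bookkeeping has to be handled, possibly restricting attention to the coordinates where $y^+_i > 0$). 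Applying $f$ and using order reversing plus \ref{th:pu:bound},
\[
y^- = f(y^+) \preceq f(t\, y^+ \text{ scaled up}) \quad\text{and}\quad y^+ = f(y^-) \preceq f(t y^+) \preceq \frac{1}{t+\alpha(1-t)} f(y^+) = \frac{1}{t+\alpha(1-t)} y^-.
\]
Combining $t y^+ \preceq y^-$ with $y^+ \preceq \frac{1}{t+\alpha(1-t)} y^-$ yields $t\bigl(t+\alpha(1-t)\bigr)^{-1} y^- \cdot(\ldots)$ — chasing the inequalities gives $t \geq t + \alpha(1-t)$ on the relevant coordinates, forcing $\alpha(1-t) \le 0$, hence $t=1$ (as $\alpha>0$), i.e.\ $y^+ \preceq y^-$, so $y^- = y^+$. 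This common value $y$ is then a fixed point of $f$.

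For uniqueness: if $y$ and $y'$ are both fixed points, the same scaling trick applies directly — take the maximal $t$ with $t y' \preceq y$ (on the common support), apply $f$ twice or once using \ref{th:pu:reversing} and \ref{th:pu:bound} to derive $y' \preceq \frac{1}{t+\alpha(1-t)} y$ and symmetrically, concluding $t=1$ in both directions, whence $y=y'$. Finally, for convergence of an arbitrary $T$-sequence $x^{(k)}$ with arbitrary start $x^{(0)} \in X$: bound $x^{(0)}$ between $0^n$ and $(1,\dots,1)$, use order reversing to sandwich $x^{(k)}$ between the iterates started at $0^n$ and at $(1,\dots,1)$ (which, by the same even/odd monotonicity, both converge, and by the uniqueness argument just established must converge to the same limit $y$), and invoke the squeeze theorem in $\RR^n$. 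The routine parts I would not belabour are: verifying the even/odd monotonicity induction, the elementary manipulations of the scalar inequality $t \ge t+\alpha(1-t)$, and the componentwise squeeze; the subtle part genuinely is the support/positivity care needed so that the "largest $t$ with $t y^+ \preceq y^-$" is well defined and the argument is not vacuous when some coordinates vanish.
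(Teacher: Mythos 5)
Your overall architecture (even/odd iterates started from $0^n$, a maximal scaling parameter $t$ combined with hypothesis (b) to force the even and odd limits to coincide, and a final sandwich argument for an arbitrary starting point) is the same as the paper's, and your uniqueness and convergence steps are sound. However, there is one genuine gap at the heart of the argument: you pass to the limits $y^-=\lim_k x^{(2k)}$ and $y^+=\lim_k x^{(2k+1)}$ and assert $f(y^-)=y^+$ and $f(y^+)=y^-$, conceding that this needs ``a mild continuity/monotone-limit argument'' supplied by order-reversal and squeezing. Order-reversal does \emph{not} give this: a monotone map on $[0,1]^n$ may be discontinuous, and without continuity the most one can extract from monotonicity is the pair of one-sided bounds $y^- \preceq f(y^-) \preceq y^+$ and $y^- \preceq f(y^+) \preceq y^+$, which is not enough to run your chain $y^+=f(y^-)\preceq f(ty^+)\preceq \frac{1}{t+\alpha(1-t)}\,y^-$. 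This matters because the theorem is stated, and later applied, without any continuity hypothesis: the maps $T_{(c,f)}$ with $f\in\BpHI_n(X)$ need not be continuous (increasing, homogeneous, bounded functions can jump as supports change), and the paper deliberately treats continuity as an \emph{extra} assumption only for the topological reflection problem.

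The paper closes this gap by never exchanging $f$ with a limit. It runs your scaling argument at each finite stage: setting $\pi_k=\sup\{0\le t\le 1 : t\,u^{(2k-1)}\preceq u^{(2k)}\}$, hypotheses (a) and (b) yield the recursion $\pi_{k+1}\ge \pi_k+\alpha(1-\pi_k)$, hence $1-\pi_k\le (1-\pi_1)(1-\alpha)^{k-1}\to 0$, which forces $u^{(\mathrm{ev})}=u^{(\mathrm{odd})}=:u^*$ directly from the inequalities $(1-\epsilon)u^{(2k-1)}\preceq u^{(2k)}\preceq u^{(2k-1)}$. That $u^*$ is actually fixed is then obtained from the nested inclusions $f^{2k}(X)\subseteq [u^{(2k)},u^{(2k-1)}]_X$, whose intersection is $\{u^*\}$, together with one more application of order-reversal; no limit of $f$-values is ever taken. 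To repair your write-up you should either restructure the collapse step along these finite-stage lines, or explicitly add continuity of $f$ as a hypothesis --- but the latter would weaken the theorem below what the paper needs.
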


The next theorem shows that this generalisation applies to $T_{(c,f)}$ (obtained via a scoring base), and we will show below how this therefore applies to a very large class of gradual semantics (see Definition $\ref{def:abstract-weighted-based}$).

\begin{theorem}\label{theorem:T_c f in Base(X)}
Let $(c,f) \in \Base(x)$ be a scoring base.
Then $T_{(c,f)} \in \SD(X)$.
\end{theorem}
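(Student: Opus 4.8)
The plan is to deduce Theorem~\ref{theorem:T_c f in Base(X)} directly from Theorem~\ref{th:pu-generalisation} by verifying that $T_{(c,f)}$ satisfies hypotheses \ref{th:pu:reversing} and \ref{th:pu:bound}. First I would check that $T_{(c,f)}$ is order reversing: if $x \preceq x'$ then, since $f$ is increasing, $f(x)_i \leq f(x')_i$ for each $i$, hence $1+f(x)_i \leq 1+f(x')_i$, and since $c_i \geq 0$ we get $\frac{c_i}{1+f(x)_i} \geq \frac{c_i}{1+f(x')_i}$, i.e.\ $T_{(c,f)}(x') \preceq T_{(c,f)}(x)$. (The fact that $T_{(c,f)}$ genuinely maps $X$ into $X$ is already recorded in Definition~\ref{defn:T_kappa phi}.)

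Next I would verify the contraction-type bound \ref{th:pu:bound} with $\alpha = 1$. Fix $x \in X$ and $0 \leq t \leq 1$. Using homogeneity of $f$ we have $f(tx)_i = t\,f(x)_i$, so
\[
T_{(c,f)}(tx)_i = \frac{c_i}{1+t\,f(x)_i}.
\]
We must compare this with $\frac{1}{t + \alpha(1-t)}\,T_{(c,f)}(x)_i = \frac{1}{t}\cdot\frac{c_i}{1+f(x)_i}$ when $\alpha=1$ and $t>0$. Since $c_i \geq 0$, it suffices to show $\frac{1}{1+t\,f(x)_i} \leq \frac{1}{t(1+f(x)_i)}$, equivalently $t(1+f(x)_i) \leq 1 + t\,f(x)_i$, i.e.\ $t + t\,f(x)_i \leq 1 + t\,f(x)_i$, i.e.\ $t \leq 1$, which holds. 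The case $t=0$ must be handled separately, but there the bound $f(0x) \preceq \frac{1}{\alpha}f(x)$ reads $f(0) \preceq f(x)$, which follows from $0 \preceq x$ and $f$ increasing (indeed $f(0)=0$ by homogeneity). Hence hypothesis~\ref{th:pu:bound} holds with $\alpha=1$.

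With both hypotheses verified, Theorem~\ref{th:pu-generalisation} applies to $f = T_{(c,f)} \colon X \to X$ and yields that $T_{(c,f)}$ has a unique fixed point and that every $T_{(c,f)}$-sequence converges to it; by Definition~\ref{def:scoring dynamic} this is precisely the statement $T_{(c,f)} \in \SD(X)$. I do not expect any serious obstacle here: the argument is a routine specialisation. The only point requiring a little care is the boundary case $t=0$ in hypothesis~\ref{th:pu:bound} (where the factor $\frac{1}{t+\alpha(1-t)}$ is still finite, equal to $1$, but the computation cannot divide by $t$), and making sure one has correctly used homogeneity to get $f(0)=0$; everything else is immediate from the monotonicity and homogeneity of $f$ packaged into the definition of a scoring base. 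If desired, one could also note that $\alpha$ can be taken to be exactly $1$, which is the extreme admissible value, reflecting that $T_{(c,f)}$ is only "weakly" contracting.
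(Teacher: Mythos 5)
Your overall strategy --- verify hypotheses \ref{th:pu:reversing} and \ref{th:pu:bound} of Theorem \ref{th:pu-generalisation} for $F=T_{(c,f)}$ and then invoke that theorem --- is exactly the paper's, and your verification of the order-reversing property is correct. However, your verification of hypothesis \ref{th:pu:bound} contains a genuine error. With $\alpha=1$ the denominator is $t+\alpha(1-t)=t+(1-t)=1$, not $t$; the quantity $\frac{1}{t}$ that you compare against corresponds to $\alpha=0$, which is excluded by the hypothesis $0<\alpha\leq 1$. The inequality actually required for $\alpha=1$, namely $F(tx)\preceq F(x)$, is \emph{false}: since $f$ is homogeneous and non-negative, $F(tx)_i=\frac{c_i}{1+t f(x)_i}\geq\frac{c_i}{1+f(x)_i}=F(x)_i$, so the inequality goes the wrong way whenever $c_i>0$, $f(x)_i>0$ and $t<1$ (your $t=0$ discussion has the same problem: $F(0)_i=c_i\geq F(x)_i$, and you have also conflated the order-reversing $F$ of Theorem \ref{th:pu-generalisation} with the increasing $f$ of the scoring base). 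The bound you do prove, $F(tx)\preceq\frac{1}{t}F(x)$, is true but useless: the convergence argument in Theorem \ref{th:pu-generalisation} hinges on the recursion $1-\pi_{k+1}\leq(1-\alpha)(1-\pi_k)$, which forces $\pi_k\to 1$ only because $1-\alpha<1$; with $\alpha=0$ nothing is gained.

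The repair is where the \emph{boundedness} of $f$ --- which your argument never uses, and which is part of the definition of $\BpHI_n(X)$ --- enters. Let $M>0$ bound all $f_i$ on $X$ and set $\alpha=\tfrac{1}{1+M}$, so that $\tfrac{1}{1+f_i(x)}\geq\alpha$ for all $x$. Then
\begin{equation*}
F(tx)_i=\frac{c_i}{1+tf_i(x)}=F(x)_i\cdot\frac{1}{(1-t)\tfrac{1}{1+f_i(x)}+t}\leq\frac{1}{(1-t)\alpha+t}\,F(x)_i,
\end{equation*}
which is precisely hypothesis \ref{th:pu:bound} with this strictly positive $\alpha$. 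This is the paper's Claim 2, and with it the rest of your argument goes through.
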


A weighted scoring base is a function which associate a scoring base to each element of $W$ by making use of two functions which instantiate each of the elements of the scoring base.

\begin{defn}\label{def:weighted scoring base}
A {\bf \gls{wsb}} is a function $b \colon W \to \Base(X)$.
More specifically, $b=$\gls{kv} where \gls{kappa} and \gls{varphi} are functions
\begin{eqnarray*}
&&\kappa \colon W \to K \\
&&\varphi \colon W \to \Func(X,\RR^n) %\quad \text{$\varphi(w)$ is non-negative, bounded, homogeneous and increasing.}
\end{eqnarray*}
such that $(\kappa(w),\varphi(w)) \in \Base(X)$ for all $w \in W$, that is $\varphi \colon W \to \BpHI_n(X)$.
The set of all weighted scoring bases is denoted \gls{wBaseX}.

The associated scoring scheme \gls{sigmab}$ \in \Func(W,X)$  is defined by 
\[
\sigma_b(w) = \fix(T_{(\kappa(w),\varphi(w))}).
\]
That is, $\sigma_b$ is the composition from $X$ to $X$ via a scoring base and scoring dynamics.
\[
\sigma_b \ \colon \  W \xto{b} \Base(X) \xto{T} \SD(X) \xto{\fix} X.
\]
We will also  write $\sigma_{(\kappa,\varphi)}$ for $\sigma_b$.
We can thus obtain a function
\[
\wBase(X) \xto{ \ (\kappa,\varphi) \mapsto \sigma_{(\kappa,\varphi)} \ } \Scheme.
\]
A scoring scheme is {\bf based} if it has the form $\sigma_{(\kappa,\varphi)}$ for some weighted scoring base $b=(\kappa,\varphi)$.
Its acceptability degree space is denoted $D_{(\kappa,\varphi)}$.
\end{defn}

Figure \ref{fig:sectino-3.1} summarises the notions and the results of Section \ref{subsec:based scoring schemes}.

\begin{figure}[!ht]
    \centering
    \includegraphics[width=13.7cm]{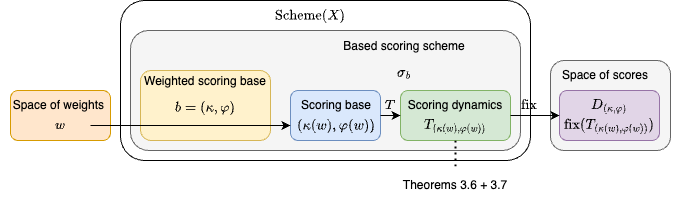}
    \caption{Representation of a based scoring scheme $\sigma_b$ and the contributions of  Section \ref{subsec:based scoring schemes}.}
    \label{fig:sectino-3.1}
\end{figure}

%%% ASSAF ANSWER ABOUT using K

At this point, it should be clear why we referred to $K$ as the space of vertices. From the above definition, we can see that the acceptability degree (the fixed point) of this weighted semantics lie in a ``box'' inside the space of scores $X = [0,1]^n$ whose \textit{extreme vertices} are $(0,...,0)$ and $\kappa(w)$. Thus,  $\kappa(w)$ can be viewed as the \textit{vertex} of the semantics and the set of all possible vertices, i.e $K=[0,1]^n$, is called the space of vertices.

\begin{defn}
\label{def:abstract-weighted-based}
An {\bf \gls{awbgs}} is one which is obtained by composing a function
\[
\Mat_{n \times n}^+(\RR) \xto{\beta} \wBase(X)
\]
with the map $\wBase(X) \xto{b \mapsto \sigma_b} \Scheme$ in Definition \ref{def:weighted scoring base}.
\end{defn}

By construction, if $\Sigma$ is an abstract weighted based gradual semantics then for any $A$ the scoring scheme $\sigma(A)=\sigma_{\beta(A)}$ (Definition \ref{def:weighted scoring base}) is based.

\begin{example}[Example \ref{ex:semantics} cont.]
\label{ex:scoringdynamics}
In this example, we will show how the different definitions from Section \ref{subsec:scoring-schemes} are used with respect to the weighted gradual semantics on a concrete example.

From Example \ref{ex:semantics}, we have $n=4$ and $w = [0.43, 0.39, 0.92, 0.3] \in W$. The underlying argumentation graph can be encoded using the following adjacency matrix $A$.

$$ A= \begin{bmatrix}
0 & 0 & 0 & 0 \\
0 & 1 & 0 & 0 \\
1 & 1 & 1 & 1 \\
0 & 0 & 0 & 0 
\end{bmatrix}$$

The acceptability degrees for the   $\Sigma_{\HC}$ semantics are then shown in Table \ref{tab:ex1}. For this example, we can construct the scoring base as follows. Let $\kappa_1: W \to [0,1]^4$ be the identity and $\varphi_1: W \to \Func(X, [0, \infty)^4)$ defined as $\varphi_1(w)(x)_i = \sum\limits_{j=1}^4 a_{i,j}x_j$. The weighted scoring-base $(\kappa_1, \varphi_1)$ then produces the scoring base $(\kappa_1(w), \varphi_1(w)) \in \Base(X)$.

For this example, the corresponding scoring dynamics is

\[ T_{(\kappa_1(w), \varphi_1(w))}(x)_i = \frac{w_i}{ 1+ \sum\limits_{j=1}^4 a_{i,j}x_j} \]

Finally, $\fix(T_{(\kappa_1(w), \varphi_1(w))}) = [0.43, 0.3, 0.38, 0.30]$ is the output of the scoring scheme for $w$.

\end{example}

In this section, we introduced scoring schemes and linked these to abstract weighted gradual semantics. We also introduced our four inverse problems which give desirable properties that such scoring schemes should ideally satisfy. Finally, we considered based scoring schemes which move us closer to the gradual semantics in the literature by taking fixed-point computations into account.

\section{Solving Inverse Problems}
\label{sec:solving-problems}

%\subsection{Solving the inverse  problems for based scoring schemes}
\label{subsec:inverse problems}

In this section, we give constraints on weighted scoring bases $b$ so that the associated scoring scheme $\sigma_b$ admits a solution for the inverse problems described in Section \ref{subsec:inverse and related problems}.

We start by introducing the necessary definitions used to describe our constraints. Theorems \ref{theorem:inverse reflection for based schemes}--\ref{theorem:abstract cone solution} then identify the conditions that must be placed on weighted scoring bases for the associated scoring scheme to solve the various inverse problems. In Section \ref{subsec:Lp lambda mu A}, we propose a family of weighted gradual semantics which returns scoring schemes that always solve all our inverse problems.

The {\bf support} of $x \in \RR^n$ is the subset of $\{1,\dots,n\}$ defined as
\gls{supp}$ = \{ 1 \leq i \leq n : x_i \neq 0\}$.

\begin{defn}\label{def:preserve supports}
We say that a function $f \colon \RR^n \to \RR^n$ {\bf \gls{pressupport}} if for every $x \in \RR^n$
\[
\supp(x)=\supp(f(x)) 
\]
We say that $f \colon \RR^n \to Y$, for some set $Y$, is {\bf \gls{independentsupports}} if 
\[
\supp(x)=\supp(y) \implies f(x)=f(y).
\]
In this case, if $I \in \wp(\{1,\dots,n\})$ we write  \gls{fI}$\in Y$ for the value of $f$ on all $x \in X$ with support $I$.
\end{defn}

As we will see in the following theorems, some of the essential constraints for a weighted scoring base $(\kappa, \varphi)$ to have a solution for the aforementioned inverse problems are that $\kappa$ preserves supports and that $\varphi$ is independent of supports.

\begin{defn}\label{def:discerning inverse}
Let $A$ be a subset of $\RR^n$.
A {\bf \gls{dri}} for $g \colon A \to X$ is a function $r \colon X \to \RR^n$ such that $r(x) \in A \iff x \in g(A)$ and in this case $g(r(x))=x$.
That is, $g \circ r|_{g(A)}=\OP{id}_{g(A)}$.
\end{defn}

\begin{figure}
    \centering
    \includegraphics[width=14cm]{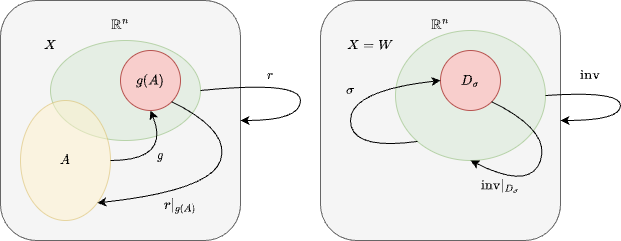}
    \caption{Representation of the discerning right inverse $r$ of $g$ (left) and a discerning right inverse of a scoring scheme $\sigma$ (right).}
    \label{fig:discerning_rgitht}
\end{figure}

\begin{remarkx}
The inverse problem \ref{problem:inverse} for a scoring scheme $\sigma$ seeks a computable discerning right inverse $\inv$ for $\sigma$ (see Figure \ref{fig:discerning_rgitht}).
\end{remarkx}

The next theorem gives conditions on  a weighted scoring base $(\kappa,\varphi) \in \wBase(X)$ so that the associated based scoring scheme $\sigma_{(\kappa,\varphi)}$ admits a solution to the inverse and the reflection problems (\ref{problem:inverse}, \ref{problem:reflection}).
It also gives conditions for it to admit a solution to the topological reflection problem.

\begin{theorem}\label{theorem:inverse reflection for based schemes}
Let $b=(\kappa,\varphi)$ be a weighted scoring base.
Assume that
\begin{enumerate}[label=(\roman*)]
\item
$\kappa \colon W \to K$ is the restriction of $\tilde{\kappa} \colon \RR^n \to \RR^n$ which is bijective and  preserves supports.

\item
$\varphi \colon W \to \Func(X,\RR^n)$ is independent of supports. 
\end{enumerate}
Then $\sigma_{(\kappa,\varphi)} \colon W \to X$ solves the inverse and reflection problems.
That is, $\sigma_{(\kappa,\varphi)}$ is injective and admits a discerning right inverse $\inv \colon X \to \RR^n$ which is defined as follows.
%For any $x \in X$ set $I=\supp(x)$ and 
Recall the notation of $\supp$ from Definition \ref{def:preserve supports} and set
%For any $I \in \wp(n)$ let $\psi^I \colon X \to [0,\infty)^n$ be the values of $\varphi(w)$ on all $w \in W$ with $\supp(w)=I$.
%Then %It is defined by means of $\overline{\inv} \colon X \to \RR^n$ as follows.
\begin{eqnarray*}
&& \overline{\inv}(x)_i = x_i(1+\varphi^{\supp(x)}(x)_i) \\
&& \inv(x) = \tilde{\kappa}^{-1} \circ \overline{\inv}.
\end{eqnarray*}
If $\tilde{\kappa}$ is a homeomorphism and $\varphi \colon W \to \Base(X)$ is constant with value $(c,f)$ such that $f \colon X \to [0,\infty)^n$ is continuous, then $\sigma_{(\kappa,\varphi)} \colon W \to X$ is a homeomorphism onto its image.
That is, $\sigma_{(\kappa,\varphi)}$ solves the topological inverse problem.
\end{theorem}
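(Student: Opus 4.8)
The plan is to build the candidate discerning right inverse $\inv$ explicitly from the fixed-point equation defining $\sigma_{(\kappa,\varphi)}$, and then verify the three claims in turn: (i) $\inv$ is a discerning right inverse, which together with injectivity gives the inverse and reflection problems; (ii) injectivity itself; and (iii) under the additional continuity hypotheses, that $\sigma_{(\kappa,\varphi)}$ is a homeomorphism onto its image. First I would unravel the definition: for $w \in W$, the vector $y = \sigma_{(\kappa,\varphi)}(w) = \fix(T_{(\kappa(w),\varphi(w))})$ satisfies $y_i = \kappa(w)_i / (1 + \varphi(w)(y)_i)$ for all $i$. Rearranging gives $\kappa(w)_i = y_i(1 + \varphi(w)(y)_i)$. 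The key observation is that $\supp(y) = \supp(\kappa(w))$: since $\varphi(w)(y)_i \ge 0$, we have $y_i = 0 \iff \kappa(w)_i = 0$. Because $\tilde\kappa$ preserves supports, $\supp(\kappa(w)) = \supp(w)$, hence $\supp(y) = \supp(w)$. Now since $\varphi$ is independent of supports, $\varphi(w)(y) = \varphi^{\supp(w)}(y) = \varphi^{\supp(y)}(y)$, so the relation becomes $\kappa(w)_i = y_i(1 + \varphi^{\supp(y)}(y)_i) = \overline{\inv}(y)_i$, i.e. $\kappa(w) = \overline{\inv}(\sigma_{(\kappa,\varphi)}(w))$, whence $w = \tilde\kappa^{-1}(\overline{\inv}(\sigma_{(\kappa,\varphi)}(w))) = \inv(\sigma_{(\kappa,\varphi)}(w))$. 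This already shows $\inv \circ \sigma_{(\kappa,\varphi)} = \mathrm{id}_W$, giving injectivity of $\sigma_{(\kappa,\varphi)}$ and one half of the discerning-right-inverse property ($\sigma_{(\kappa,\varphi)}(\inv(x)) = x$ for $x \in D_{(\kappa,\varphi)}$).

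The harder direction is the discerning part: I must show that for arbitrary $x \in X$, $\inv(x) \in W$ \emph{if and only if} $x \in D_{(\kappa,\varphi)}$. The "only if" is the subtle one. Suppose $w := \inv(x) = \tilde\kappa^{-1}(\overline{\inv}(x)) \in W$. I want to conclude $\sigma_{(\kappa,\varphi)}(w) = x$, i.e. that $x$ is the (unique) fixed point of $T_{(\kappa(w),\varphi(w))}$. By Theorem \ref{theorem:T_c f in Base(X)} this fixed point exists and is unique, so it suffices to check $x$ is \emph{a} fixed point, i.e. $x_i = \kappa(w)_i/(1+\varphi(w)(x)_i)$ for all $i$. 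Here I need the support bookkeeping to line up: $\kappa(w) = \tilde\kappa(\tilde\kappa^{-1}(\overline{\inv}(x))) = \overline{\inv}(x)$, so $\kappa(w)_i = x_i(1+\varphi^{\supp(x)}(x)_i)$; since $\tilde\kappa$ preserves supports, $\supp(w) = \supp(\kappa(w)) = \supp(\overline{\inv}(x)) = \supp(x)$ (the last equality because $1 + \varphi^{\supp(x)}(x)_i > 0$), and then independence of supports gives $\varphi(w)(x) = \varphi^{\supp(w)}(x) = \varphi^{\supp(x)}(x)$, so indeed $\kappa(w)_i = x_i(1+\varphi(w)(x)_i)$, i.e. $x = T_{(\kappa(w),\varphi(w))}(x)$. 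Hence $x = \sigma_{(\kappa,\varphi)}(w) \in D_{(\kappa,\varphi)}$. Conversely, if $x \in D_{(\kappa,\varphi)}$, say $x = \sigma_{(\kappa,\varphi)}(w')$, the first paragraph shows $\inv(x) = w' \in W$. This completes the verification that $\inv$ is a discerning right inverse, establishing Problems \ref{problem:inverse} and \ref{problem:reflection}.

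For the topological statement, assume additionally $\tilde\kappa$ is a homeomorphism and $\varphi$ is constant with value $(c,f)$, $f$ continuous. Note the hypothesis "$\varphi \colon W \to \Base(X)$ is constant" forces $\varphi$ to be independent of supports trivially, so the earlier conclusions apply; moreover $\overline{\inv}(x)_i = x_i(1 + f(x)_i)$ is now continuous in $x$ (as $f$ is continuous and multiplication/addition are continuous), so $\inv = \tilde\kappa^{-1} \circ \overline{\inv} \colon X \to \RR^n$ is continuous. Since $\sigma_{(\kappa,\varphi)} \colon W \to D_{(\kappa,\varphi)}$ is a continuous bijection (continuity of $\sigma_{(\kappa,\varphi)}$ follows because $W = [0,1]^n$ is compact and... actually I should be careful here — I'd instead argue directly) with continuous inverse $\inv|_{D_{(\kappa,\varphi)}}$, it is a homeomorphism onto its image. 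The one point requiring care is continuity of $\sigma_{(\kappa,\varphi)}$ itself: rather than invoke compactness I would note that a continuous bijection whose \emph{inverse} is continuous and which is itself... — more cleanly, since $\inv|_{D_{(\kappa,\varphi)}}$ is a continuous bijection $D_{(\kappa,\varphi)} \to W$ with two-sided inverse $\sigma_{(\kappa,\varphi)}$, and $W = [0,1]^n$ while $D_{(\kappa,\varphi)} \subseteq X = [0,1]^n$, one can use invariance-of-domain-type reasoning or directly show $\sigma_{(\kappa,\varphi)}$ is continuous from the fixed-point characterization. I expect this continuity-of-$\sigma$ step to be the main technical obstacle; the cleanest route is probably to show $\sigma_{(\kappa,\varphi)}$ is continuous by noting the fixed point depends continuously on the parameters $(\kappa(w),\varphi(w)) = (\tilde\kappa(w), (c,f))$ — but since $\varphi$ is constant, only $\tilde\kappa(w)$ varies, and a uniform-contraction or monotone-convergence argument (in the spirit of Theorem \ref{th:pu-generalisation}) gives continuous dependence of $\fix$ on $c = \tilde\kappa(w)$. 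Alternatively, and most economically: $\inv|_{D_{(\kappa,\varphi)}}$ is a continuous open bijection (openness because $\tilde\kappa^{-1}$ and $\overline{\inv}$ are homeomorphisms onto their images), hence its inverse $\sigma_{(\kappa,\varphi)}$ is continuous, so $\sigma_{(\kappa,\varphi)}$ is a homeomorphism onto $D_{(\kappa,\varphi)}$.
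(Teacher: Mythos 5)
Your first two paragraphs are correct and, up to organisation, follow the same route as the paper: the paper factors $\sigma_b$ through $\kappa$ on each support class $W(I)$ via an auxiliary factorisation theorem and then glues, whereas you run the fixed-point computation and the support bookkeeping directly; the content (deriving $\kappa(w)=\overline{\inv}(\sigma_b(w))$ from the fixed-point equation, and conversely checking that $\overline{\inv}(x)\in\kappa(W)$ forces $x$ to be the unique fixed point) is the same, and your handling of $\supp(y)=\supp(\kappa(w))=\supp(w)$ and of the independence-of-supports hypothesis is exactly what is needed.

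The topological part, however, has a genuine gap, and it is precisely the step you flag but do not resolve. You correctly obtain that $\inv=\tilde{\kappa}^{-1}\circ\overline{\inv}$ is continuous on all of $X$, so $\inv|_{D}\colon D\to W$ (with $D=D_{(\kappa,\varphi)}$) is a continuous bijection; but to conclude that its inverse $\sigma_{(\kappa,\varphi)}$ is continuous you assert that $\overline{\inv}$ is ``a homeomorphism onto its image'' (equivalently, open onto its image). That is exactly the statement to be proved, not a given: a continuous injection need not be open onto its image, and the usual remedy --- continuous bijection from a compact space to a Hausdorff space --- is not directly available because you do not yet know that $D$ is compact (that would follow from continuity of $\sigma_{(\kappa,\varphi)}$, which is what you are trying to establish, so the argument is circular). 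Invariance of domain does not apply either, since $D$ need not be open in $\RR^n$. The paper closes this gap with a dedicated lemma: it uses compactness of the ambient cube $[0,1]^n\supseteq D$ \emph{together with the discerning property} $\inv(x)\in\kappa(W)\iff x\in D$ to show that the preimage under $\sigma$ of the closure of a small neighbourhood in $\kappa(W)$ is a closed, hence compact, subset $E$ of $[0,1]^n$; on $E$ the continuous bijection $\overline{\inv}|_E$ \emph{is} a homeomorphism, and local compactness of $\kappa(W)$ lets one conclude continuity of $\overline{\sigma_b}$ pointwise. The reflecting property is essential there (it is what guarantees that a limit of points of $E$ lands back in $D$), and your proposal never uses it in the topological argument. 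Your alternative suggestion --- continuous dependence of $\fix(T_{(c,f)})$ on $c$ via a uniform contraction estimate --- could also work, but it is not supplied and is not a one-line consequence of Theorem \ref{th:pu-generalisation} as stated.
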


Our next goal is to find conditions on a weighted scoring base $(\kappa,\varphi)$ so that the projective preference ordering problem \ref{problem:preference} admits a solution.
We view $K=[0,1]^n$ as a subspace of $\RR^n$.
Let $0 \in K$ denote the origin in $\RR^n$.
Recall that a neighbourhood $U$ of $0$ in $K$ is the intersection of $K$ with an open subset of $\RR^n$ containing $0 \in \RR^n$.
Thus, there exists $\epsilon>0$ such that $U$ contains any $x \in \RR^n$ such that $\|x\| < \epsilon$ and $x_i \geq 0$ for all $1 \leq i \leq n$.

\begin{theorem}\label{theorem:abstract projective preference}
Let $b=(\kappa,\varphi)$ be a weighted scoring base.
Suppose that 
\begin{enumerate}[label=(\roman*)]
\item
$\kappa \colon W \to K$ preserves supports.

\item  
$\kappa(W) \subseteq K$ contains a neighbourhood $U$ of $0 \in K$.

\item 
$\varphi$ is independent of supports.
\end{enumerate}
Then the projective preference ordering problem \ref{problem:preference} has a solution for the scoring scheme $\sigma_{(\kappa,\varphi)}$.
\end{theorem}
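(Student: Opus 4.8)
The strategy is to reduce the projective preference ordering problem to the inverse problem already solved in Theorem~\ref{theorem:inverse reflection for based schemes}. Given an arbitrary $y \in X = [0,1]^n$, we must produce $t > 0$ with $t y \in D_{(\kappa,\varphi)}$. The natural candidate for the weight that realises $t y$ is the analogue of the formula $\overline{\inv}$: namely $w_i = t y_i \bigl(1 + \varphi^{\supp(y)}(y)_i\bigr)$ (using independence of supports and homogeneity of $\varphi$ to keep the support of the argument equal to $\supp(y)$, so $\varphi^{\supp(ty)} = \varphi^{\supp(y)}$ whenever $t>0$). First I would check that, with this choice, the element $x = ty$ is indeed a fixed point of $T_{(\kappa(w),\varphi(w))}$: plugging in, $T(ty)_i = \kappa(w)_i / (1 + \varphi(w)(ty)_i)$, and since $\varphi$ is homogeneous with $\varphi(w)(ty)_i = t\,\varphi^{\supp(y)}(y)_i$, this equals $ty_i$ precisely when $\kappa(w)_i = t y_i (1 + t \varphi^{\supp(y)}(y)_i)$. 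So the correct formula is $w_i$ chosen so that $\kappa(w)_i = t y_i\bigl(1 + t\,\varphi^{\supp(y)}(y)_i\bigr)$ — a slightly different scaling than one might first guess, because $\varphi$ scales too. By Theorem~\ref{theorem:T_c f in Base(X)}, $T_{(\kappa(w),\varphi(w))}$ is a scoring dynamics with a \emph{unique} fixed point, so once we know $ty$ is \emph{a} fixed point it must be \emph{the} fixed point, hence $\sigma_{(\kappa,\varphi)}(w) = ty$.

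The remaining task is to find $t > 0$ small enough that a legitimate weight $w \in W$ with the required $\kappa(w)$ actually exists. Set $v_i = t y_i \bigl(1 + t\,\varphi^{\supp(y)}(y)_i\bigr)$. Since $\varphi$ is bounded, say $\varphi^{\supp(y)}(y)_i \le M$ for all $i$, and $y_i \le 1$, we get $0 \le v_i \le t(1 + tM) \le t(1+M)$ for $t \le 1$. Hence by choosing $t$ small we can force the vector $v = (v_1,\dots,v_n)$ to have arbitrarily small norm, while $v_i \ge 0$ always and $\supp(v) = \supp(y)$ (for $t>0$). By hypothesis~(ii), $\kappa(W)$ contains a neighbourhood $U$ of $0 \in K$, so there is $\epsilon > 0$ such that every $v \in \RR^n$ with $\|v\| < \epsilon$ and $v_i \ge 0$ lies in $\kappa(W)$; pick $t>0$ small enough that $\|v\| < \epsilon$. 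Then $v \in \kappa(W)$, so $v = \kappa(w)$ for some $w \in W$. Finally, because $\kappa$ preserves supports, $\supp(w) = \supp(\kappa(w)) = \supp(v) = \supp(y)$, which guarantees $\varphi^{\supp(w)}(\cdot) = \varphi^{\supp(y)}(\cdot)$ — this is the consistency check ensuring the $\varphi^{\supp(y)}$ appearing in the definition of $v$ really is the function $\varphi(w)$ restricted to the relevant support class, so the fixed-point computation above goes through with this $w$.

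The main obstacle I anticipate is bookkeeping around supports: one must be careful that the support of the constructed weight $w$, the support of $\kappa(w)$, and the support of $ty$ all coincide, so that the single function $\varphi^{\supp(y)}$ governs every occurrence of $\varphi$ in the argument. Homogeneity of $\varphi$ ($\varphi(w)(ty) = t\varphi(w)(y)$) and independence of supports are what make this alignment possible, and hypothesis~(ii) is exactly what lets us shrink $v$ into the range of $\kappa$. One edge case worth a sentence: if $y = 0$, then $t y = 0$ for every $t$, and $0 \in D_{(\kappa,\varphi)}$ since taking $w$ with $\kappa(w) = 0$ (available as $0 \in U \subseteq \kappa(W)$) yields $T(x)_i = 0/(1+\varphi(w)(x)_i) = 0$, whose unique fixed point is $0$; so the statement holds trivially there. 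Otherwise the argument above applies verbatim.
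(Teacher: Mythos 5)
Your proof is correct and follows essentially the same route as the paper's: both construct the candidate preimage $v_i = ty_i\bigl(1+t\,\varphi^{\supp(y)}(y)_i\bigr)$, use homogeneity and boundedness to shrink it into the neighbourhood $U\subseteq\kappa(W)$ for small $t>0$, and use support-preservation to keep the correct value of $\varphi$ in play. The only difference is presentational: the paper cites the reflection property of Theorem~\ref{th:F bar factorisation} to conclude $ty\in D_{(\kappa,\varphi)}$, whereas you re-verify that $ty$ is the unique fixed point of $T_{(\kappa(w),\varphi(w))}$ directly.
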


Finally, we find conditions on a weighted scoring base $(\kappa,\varphi)$ which guarantees that the scoring scheme $\sigma_{(\kappa,\varphi)}$ admits a solution to the radiality problem (Problem \ref{problem:radiality}), i.e $D_{(\kappa,\varphi)}$ is radial subset of $X$.
Recall the partial order $\preceq$ on $\RR^n$.

\begin{defn}\label{def:preceq supp closed}
A subset $A$ of $K=[0,1]^n$ is {\bf \gls{preceqclosed}} in $K$, respectively {\bf \gls{supppreceqclosed}} in $K$, if for any $a \in A$ and any $x \in K$, 
\begin{align*}
x \preceq a &\implies x \in A
\\
x \preceq a \text{ and } \supp(x)=\supp(a) &\implies x \in A.
\end{align*}
\end{defn}

\begin{theorem}\label{theorem:abstract cone solution}
Let $b=(\kappa,\varphi)$ be a weighted scoring base.
Suppose that
\begin{enumerate}[label=(\roman*)]
\item
$\kappa \colon W \to K$ preserves supports.

\item
$\kappa(W)$ is $(\supp,\preceq)$-closed in $K$.

\item
$\varphi \colon W \to [0,\infty)^n$ is independent of supports.
\end{enumerate}
Then $\sigma_b$ solves the radiality problem, i.e $D_{(\kappa,\varphi)}=\sigma_{(\kappa,\varphi)}(W)$ is radial.
\end{theorem}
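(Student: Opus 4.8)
The plan is to show that for any $x \in D_{(\kappa,\varphi)}$ and any $0 \le t \le 1$, we have $tx \in D_{(\kappa,\varphi)}$; by definition of the scoring scheme this means exhibiting some $w' \in W$ with $\sigma_{(\kappa,\varphi)}(w') = tx$. First I would recover, from $x \in D_{(\kappa,\varphi)}$, a weight $w \in W$ with $\sigma_{(\kappa,\varphi)}(w) = x$, i.e.\ $x = \fix(T_{(\kappa(w),\varphi(w))})$. Write $c = \kappa(w) \in \kappa(W)$ and, using hypothesis (iii), $f = \varphi^{\supp(x)}$ for the support-independent value of $\varphi$ (note $\varphi(w)$ depends only on the support, and since $x$ is a fixed point $\supp(x) = \supp(T_{(c,f)}(x))$, which will be used to pin down the relevant support class). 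The fixed-point equation reads $x_i = c_i / (1 + f(x)_i)$ for each $i$, equivalently $c_i = x_i(1 + f(x)_i)$.

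Next, the core computation: I want to find $c' \in K$ so that $tx$ is the fixed point of $T_{(c',f)}$, where crucially $f$ is the \emph{same} function (this is where support-independence of $\varphi$ does the heavy lifting — we only get to keep $f$ if the new weight lies in the same support class, so $t>0$ must be handled separately from $t=0$). The required $c'$ is forced: $c'_i = tx_i(1 + f(tx)_i)$. Using homogeneity of $f$ (each $f_i$ is homogeneous since $\varphi(w) \in \BpHI_n(X)$), $f(tx)_i = t f(x)_i$, so
\[
c'_i = tx_i(1 + t f(x)_i) = t x_i + t^2 x_i f(x)_i \le t x_i + t x_i f(x)_i = t x_i (1 + f(x)_i) = t c_i,
\]
using $t^2 \le t$ and $x_i, f(x)_i \ge 0$. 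Hence $c' \preceq tc \preceq c$, and moreover $\supp(c') = \supp(x) = \supp(c)$ when $t > 0$ (when $t=0$ we take $w'$ to be any weight with $\kappa(w')=0$, which exists by support-preservation applied to $w=\mathbf{0}$, giving $\sigma_{(\kappa,\varphi)}(w') = 0 = 0\cdot x$). So by hypothesis (ii), $(\supp,\preceq)$-closedness of $\kappa(W)$, we get $c' \in \kappa(W)$: pick $w' \in W$ with $\kappa(w') = c'$. Since $\supp(c') = \supp(c)$ and $\kappa$ preserves supports (hypothesis (i)), $\supp(w') = \supp(w)$, hence $\varphi(w') = \varphi(w)$ has value $f$ on the support class of $w'$ — so $T_{(\kappa(w'),\varphi(w'))} = T_{(c',f)}$, and by construction $tx$ is its fixed point; by uniqueness of fixed points (Theorem \ref{theorem:T_c f in Base(X)}), $\sigma_{(\kappa,\varphi)}(w') = tx$.

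The main obstacle I anticipate is the bookkeeping around supports: one must verify that $\supp(tx) = \supp(x)$ for $t > 0$ (immediate), that $\varphi$ being independent of supports really does let us use the identical $f$ after changing $w$ to $w'$ (this needs $\supp(w') = \supp(w)$, obtained via support-preservation of $\kappa$ together with $\supp(c') = \supp(c)$), and that the $t = 0$ case is not lost (handled via $0 \in \kappa(W)$, which follows from $(\supp,\preceq)$-closedness or directly from $\kappa$ preserving supports applied to the zero weight, whose image has empty support, forcing $\kappa(\mathbf 0) = 0$). The inequality $t^2 \le t$ on $[0,1]$ is the only genuinely ``analytic'' input and is trivial; everything else is organising the hypotheses so that the modified scoring base still lies in $\wBase(X)$ and still uses the same aggregation function $f$.
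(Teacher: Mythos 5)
Your proposal is correct and follows essentially the same route as the paper: both arguments compute the candidate vertex $c'_i = tx_i(1+f(tx)_i) = tx_i(1+tf(x)_i)$, bound it above by $\kappa(w)$ using homogeneity of $f$ and $t\le 1$, invoke $(\supp,\preceq)$-closedness of $\kappa(W)$ to realise $c'$ as $\kappa(w')$ with the right support, and conclude via uniqueness of the fixed point (handling $t=0$ separately through support preservation). The only cosmetic difference is that you verify the fixed-point equation for $tx$ directly, whereas the paper delegates that step to its Theorem \ref{th:F bar factorisation}\ref{th:Fbar:reflect}, whose proof is exactly your computation.
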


By Theorems \ref{theorem:inverse reflection for based schemes}, \ref{theorem:abstract projective preference} and \ref{theorem:abstract cone solution} all four Problems \ref{problem:inverse}--\ref{problem:radiality} have positive solution for particular based scoring scheme.
Now that we have defined the necessary constraints on weighted scoring bases for all the inverse problems, we ask whether we can find a concrete family of based scoring schemes (scoring schemes obtained via a weighted scoring base) that have a positive solution to all the inverse problems?

\section{$(L^p,\lambda,\mu,A)$-based Scoring Schemes}\label{subsec:Lp lambda mu A}

In this section we construct a family of based scoring schemes which answers our four problems in the positive, and link this family back to the weighted max-based, h-categoriser and card-based semantics. In Section \ref{sec:new_semantics}, we show how this family of based scoring schemes can be used to define additional semantics which also answer the four inverse problems positively.

We fix
\begin{itemize}
\item $1 \leq p \leq \infty$,

\item $\mu \colon \RR^n \to (0,1]^n$ independent of supports,

\item $\lambda \colon \RR^n \to \End([0,\infty)^n)$  independent of supports,

\item $n \times n$ matrix $A=(a_{i,j})$ such that $A \geq 0$, i.e $a_{i,j} \geq 0$.
\end{itemize}
Recall that the $L^p$-norm on $\RR^n$, \gls{lpnom}, is defined by
\begin{equation}\label{eqn:Lp norm}
\|(x_1,\dots,x_n)\|_p = 
\left\{
\begin{array}{ll}
(x_1^p+\cdots x_n^p)^{1/p} & \text{if $1 \leq p< \infty$} 
\\
\max\{ x_1,\dots,x_n\} & \text{if $p=\infty$}
\end{array}
\right.
\end{equation}
Given vectors $x,y \in \RR^n$, \gls{xwedgey} is the element-wise product of vectors defined as
\begin{equation}\label{eqn:wedge vectors}
x \wedge y = (x_1y_1,\dots,x_ny_n).
\end{equation}

With this data and notation, define $\kappa \colon W \to K$ and $\varphi \colon W \to \Func(X,[0,\infty)^n)$ as follows.
\begin{eqnarray}
\label{eqn:Lp lambda mu A kappa phi}
&& \kappa(w)_i = \mu(w)_i  w_i 
\\
\nonumber
&& \varphi(w)(x)_i =  \| \lambda(w)(a_{i,*}) \wedge x\|_p. 
\end{eqnarray}

Note that in second line of Equation (\ref{eqn:Lp lambda mu A kappa phi}), \gls{ai*} denotes the $i^\text{th}$ row of $A$.

\begin{proposition}
\label{prop:kappa phi of Lp}
The pair $(\kappa,\varphi)$ defined in \eqref{eqn:Lp lambda mu A kappa phi} is a weighted scoring base.
\end{proposition}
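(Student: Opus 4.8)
The plan is to unwind Definition \ref{def:weighted scoring base}: to show $(\kappa,\varphi)$ is a weighted scoring base I need to check exactly two things, namely that $\kappa$ maps $W$ into $K=[0,1]^n$, and that for every $w\in W$ the function $\varphi(w)\colon X\to\RR^n$ lies in $\BpHI_n(X)$, i.e.\ each of its $n$ coordinate functions is bounded, non-negative, homogeneous and increasing in the sense of Definition \ref{def:BHI}. I would remark at the outset that the assumptions that $\mu$ and $\lambda$ are independent of supports play no role in this proposition; they are only needed later when proving that the inverse problems are solved.

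For the first point, fix $w\in W$ and $1\le i\le n$. Since $\mu(w)_i\in(0,1]$ and $w_i\in[0,1]$ (because $w\in W=[0,1]^n$), the product $\kappa(w)_i=\mu(w)_i\,w_i$ lies in $[0,1]$, so $\kappa(w)\in K$. This is immediate.

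The substance is the second point. Fix $w\in W$ and $1\le i\le n$, and abbreviate $v=\lambda(w)(a_{i,*})$. Since $A\ge 0$ we have $a_{i,*}\in[0,\infty)^n$, and since $\lambda(w)\in\End([0,\infty)^n)$ it follows that $v\in[0,\infty)^n$; thus $v$ is a fixed non-negative vector and $\varphi(w)(x)_i=\|v\wedge x\|_p$. I would then reduce to the closure results of Section \ref{subsec:based scoring schemes}: writing $\pi_j\colon X\to[0,\infty)$ for the $j$-th coordinate projection, Proposition \ref{prop:HM linear} gives $\pi_j\in\BpHI(X)$, and Proposition \ref{prop:HM closure}(i) gives $v_j\pi_j\in\BpHI(X)$. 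Now
\[
\varphi(w)(x)_i=\|v\wedge x\|_p=
\begin{cases}
\sqrt[p]{\,(v_1x_1)^p+\cdots+(v_nx_n)^p\,}, & 1\le p<\infty,\\[2pt]
\max\{v_1x_1,\dots,v_nx_n\}, & p=\infty,
\end{cases}
\]
so $\varphi(w)(\cdot)_i$ equals $\sqrt[p]{(v_1\pi_1)^p+\cdots+(v_n\pi_n)^p}$ when $p<\infty$ and $\max\{v_1\pi_1,\dots,v_n\pi_n\}$ when $p=\infty$; in either case Proposition \ref{prop:HM closure} (part (iv), respectively part (ii)) shows $\varphi(w)(\cdot)_i\in\BpHI(X)$. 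Since this holds for each $i$, $\varphi(w)\in\BpHI_n(X)$, which completes the verification.

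Alternatively one can check the four properties directly for $x\mapsto\|v\wedge x\|_p$: non-negativity since $v\wedge x\ge 0$ on $X$; homogeneity from the positive homogeneity of $\|\cdot\|_p$, using $v\wedge(tx)=t(v\wedge x)$ for $t\ge 0$; monotonicity because $v\ge 0$ forces $v\wedge x\preceq v\wedge x'$ whenever $x\preceq x'$ and $\|\cdot\|_p$ is monotone on the non-negative orthant; and boundedness from $v\wedge x\preceq v$ for $x\in[0,1]^n$, so that $M=\max_{1\le i\le n}\|\lambda(w)(a_{i,*})\|_p$ is a uniform bound. Either route is routine; I do not expect any real obstacle, the only point deserving a moment's care being uniform boundedness, which is handled by bounding $v\wedge x$ componentwise by $v$ and taking the maximum over the finitely many rows of $A$.
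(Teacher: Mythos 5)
Your proof is correct. The only substantive difference from the paper is in how you establish that $\varphi(w)(\cdot)_i \in \BpHI(X)$: your primary route assembles the function from the projections $\pi_j$ via Propositions \ref{prop:HM linear} and \ref{prop:HM closure} (parts (i), (iv) for $p<\infty$ and (ii) for $p=\infty$), whereas the paper verifies the four properties of Definition \ref{def:BHI} directly for $x \mapsto \|\lambda(w)(a_{i,*}) \wedge x\|_p$ --- essentially your ``alternative'' route. Your closure-lemma argument is arguably the cleaner one, since it actually uses the machinery the paper set up precisely for building elements of $\BpHI(X)$, at the cost of having to note that $v_j\pi_j \in \BpHI(X)$ via part (i) with a single summand. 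One small divergence in the direct route: the paper obtains boundedness by observing that $\psi$ is continuous and $X$ is compact, while you bound $v \wedge x \preceq v$ componentwise and take $M=\max_i \|\lambda(w)(a_{i,*})\|_p$; your bound is more explicit and avoids any appeal to topology. Your remark that independence of supports of $\mu$ and $\lambda$ is not used here is accurate and consistent with the paper's proof.
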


\begin{defn}\label{defn:Lp lambda mu A scheme}
The associated scoring scheme $\sigma_{(\kappa,\varphi)}$ defined in \eqref{eqn:Lp lambda mu A kappa phi} is called the $(L^p,\lambda,\mu,A)$-based scoring scheme \gls{LplambdamuA}.
\end{defn}

\begin{theorem}\label{theorem:Lp-based solve everything}
Any $(L^p,\lambda,\mu,A)$-based scoring scheme has positive solution to the inverse problem, the reflection problem, the projective preference ordering problem and the radiality problem.

The discerning right inverse function $\inv \colon X \to \RR^n$ has the form
\[
\inv(x)_i = \tfrac{1}{\mu(x)_i}x_i(1+ \|\lambda(x)(a_{i,*}) \wedge x \|_p).
\]
If $\mu$ and $\lambda$ are constant functions then the topological reflection problem is answered positively.
\end{theorem}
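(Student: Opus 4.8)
The plan is to derive Theorem~\ref{theorem:Lp-based solve everything} as a corollary of the three abstract Theorems~\ref{theorem:inverse reflection for based schemes}, \ref{theorem:abstract projective preference} and \ref{theorem:abstract cone solution} by verifying their hypotheses for the specific weighted scoring base $(\kappa,\varphi)$ given in \eqref{eqn:Lp lambda mu A kappa phi}. Since Proposition~\ref{prop:kappa phi of Lp} already tells us $(\kappa,\varphi) \in \wBase(X)$, the work consists entirely of checking the support-theoretic and order-theoretic conditions on $\kappa$ and $\varphi$, plus extracting the explicit formula for $\inv$.

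First I would handle $\varphi$, which is the easy half: since $\mu$ and $\lambda$ are assumed independent of supports, and $\varphi(w)(x)_i = \|\lambda(w)(a_{i,*})\wedge x\|_p$ depends on $w$ only through $\lambda(w)$, the map $w \mapsto \varphi(w)$ depends only on $\supp(w)$; hence $\varphi$ is independent of supports, giving hypothesis (ii) of Theorem~\ref{theorem:inverse reflection for based schemes} and hypothesis (iii) of both Theorems~\ref{theorem:abstract projective preference} and \ref{theorem:abstract cone solution}. Next I would analyse $\kappa(w)_i = \mu(w)_i w_i$. Because $\mu(w)_i \in (0,1]$ is strictly positive, $\kappa(w)_i = 0$ iff $w_i = 0$, so $\supp(\kappa(w)) = \supp(w)$ and $\kappa$ preserves supports; this covers hypothesis (i) of Theorems~\ref{theorem:abstract projective preference} and \ref{theorem:abstract cone solution}. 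For Theorem~\ref{theorem:inverse reflection for based schemes}(i) I need $\kappa$ to extend to a support-preserving bijection $\tilde\kappa$ of $\RR^n$: since $\mu$ is independent of supports, on each orthant-type stratum $\{x : \supp(x) = I\}$ the map $\tilde\kappa(x)_i = \mu^{I}(x)_i\, x_i = \mu^{I}_i\, x_i$ (using that $\mu$ is constant on that stratum with value $\mu^I$, hence $\mu^I_i>0$) is a diagonal linear isomorphism, and these agree on the common boundary coordinates, so they patch to a bijection $\tilde\kappa$ of $\RR^n$ preserving supports; its inverse is $\tilde\kappa^{-1}(y)_i = y_i/\mu^{\supp(y)}_i$. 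Plugging $\overline{\inv}(x)_i = x_i(1+\varphi^{\supp(x)}(x)_i)$ and $\inv = \tilde\kappa^{-1}\circ\overline{\inv}$ from Theorem~\ref{theorem:inverse reflection for based schemes} and noting $\supp(\overline{\inv}(x)) = \supp(x)$ yields exactly $\inv(x)_i = \frac{1}{\mu(x)_i}\,x_i(1+\|\lambda(x)(a_{i,*})\wedge x\|_p)$, the claimed formula. For the projective preference ordering and radiality parts I also need $\kappa(W)$ to contain a neighbourhood of $0$ and to be $(\supp,\preceq)$-closed: since $\mu^I_i \le 1$ we have $\kappa(w)_i \le w_i \le 1$, and for any target $v \in K$ one takes $w_i = v_i/\mu^{\supp(v)}_i$, which lies in $W$ whenever $v_i \le \mu^{\supp(v)}_i$ — in particular for all sufficiently small $v$, giving the neighbourhood; and if $x \preceq \kappa(w)$ with $\supp(x) = \supp(\kappa(w)) = \supp(w)$ then $w' := \tilde\kappa^{-1}(x)$ has $w'_i = x_i/\mu^{\supp(x)}_i \le \kappa(w)_i/\mu^{\supp(w)}_i = w_i \le 1$, so $w' \in W$ and $\kappa(w') = x$, proving $(\supp,\preceq)$-closedness. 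Finally, for the topological reflection claim, if $\mu$ and $\lambda$ are constant then $\tilde\kappa$ is a single diagonal linear isomorphism of $\RR^n$, hence a homeomorphism, and $\varphi$ is constant with value $(c,f)$ where $f(x)_i = \|\lambda(a_{i,*})\wedge x\|_p$ is continuous (a composition of the linear map $x\mapsto \lambda(a_{i,*})\wedge x$ with the $L^p$-norm); Theorem~\ref{theorem:inverse reflection for based schemes} then delivers that $\sigma_{(\kappa,\varphi)}$ is a homeomorphism onto its image.

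The main obstacle is the bookkeeping around ``independent of supports'' versus the stratumwise structure of $\tilde\kappa$: one must be careful that the extension $\tilde\kappa$ is genuinely well-defined and bijective on all of $\RR^n$ — in particular that the diagonal factors $\mu^I_i$ used on the stratum $\supp = I$ are consistent as $I$ varies (they are, because $\mu$ being independent of supports forces $\mu^I_i$ to depend only on whether $i \in I$, and in fact the construction only ever uses $\mu^I_i$ for $i \in I$), and that it restricts to the originally given $\kappa$ on $W$. I expect the rest — positivity of $\mu$, monotonicity bookkeeping for $(\supp,\preceq)$-closedness, and continuity of $f$ — to be routine given the groundwork already laid in Section~\ref{subsec:based scoring schemes}. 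I would therefore organise the proof as: (1) verify $\varphi$ independent of supports; (2) verify $\kappa$ preserves supports and construct $\tilde\kappa$, $\tilde\kappa^{-1}$; (3) invoke Theorem~\ref{theorem:inverse reflection for based schemes} and simplify the inverse formula; (4) verify the neighbourhood and $(\supp,\preceq)$-closure conditions and invoke Theorems~\ref{theorem:abstract projective preference} and \ref{theorem:abstract cone solution}; (5) treat the constant-$\mu,\lambda$ case for topological reflection.
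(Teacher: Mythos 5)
Your proposal is correct and follows essentially the same route as the paper: the paper isolates your steps (1)--(2) and (4)--(5) as a separate proposition (that $\kappa$ extends to a bijective support-preserving $\tilde\kappa$, that $\kappa(W)$ contains a neighbourhood of $0$ and is $(\supp,\preceq)$-closed, and that $\varphi$ is independent of supports with continuous values), and then the theorem is proved exactly as you describe by feeding these facts into Theorems \ref{theorem:inverse reflection for based schemes}, \ref{theorem:abstract projective preference} and \ref{theorem:abstract cone solution} and simplifying $\tilde\kappa^{-1}\circ\overline{\inv}$ to the stated formula. The only cosmetic difference is that the paper defines $\tilde\kappa(u)_i=\mu(u)_iu_i$ globally and then observes it is stratumwise diagonal, rather than patching stratumwise maps as you do; both give the same bijection.
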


The theorem above is of fundamental importance. Not only does it demonstrate that any $(L^p,\lambda,\mu,A)$-based scoring scheme has a solution to several of our fundamental problems, but it also describes how, and when it is possible, to compute initial weights from acceptability degrees.

\begin{theorem}\label{theorem:Lp subseteq Lq}
Suppose that $p \leq q$ then 
\[
D_{\sigma_{(L^p,\lambda,\mu,A)}} \subseteq D_{\sigma_{(L^q,\lambda,\mu,A)}}.
\]
\end{theorem}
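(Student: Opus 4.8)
The plan is to reduce the claimed inclusion of acceptability-degree spaces to a pointwise comparison of the two explicit discerning right inverses supplied by Theorem~\ref{theorem:Lp-based solve everything}. Write $\sigma_p = \sigma_{(L^p,\lambda,\mu,A)}$ and $\sigma_q = \sigma_{(L^q,\lambda,\mu,A)}$, and let $\inv_p,\inv_q \colon X \to \RR^n$ denote their discerning right inverses, which by that theorem are given by
\[
\inv_p(x)_i = \tfrac{1}{\mu(x)_i}\,x_i\bigl(1+\|\lambda(x)(a_{i,*}) \wedge x\|_p\bigr), \qquad
\inv_q(x)_i = \tfrac{1}{\mu(x)_i}\,x_i\bigl(1+\|\lambda(x)(a_{i,*}) \wedge x\|_q\bigr),
\]
the two formulas differing only in the exponent of the norm, since $\mu$, $\lambda$ and $A$ are shared by the two schemes. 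My first step would be to invoke Definition~\ref{def:discerning inverse}: because $\inv_p$ is a discerning right inverse for $\sigma_p \colon W \to X$, membership in $D_{\sigma_p} = \sigma_p(W)$ is detected by $\inv_p$, i.e.\ $x \in D_{\sigma_p} \iff \inv_p(x) \in W$, and likewise $x \in D_{\sigma_q} \iff \inv_q(x) \in W$. As $W = [0,1]^n$, it therefore suffices to show that for every $x \in X$, $\inv_p(x) \in [0,1]^n$ implies $\inv_q(x) \in [0,1]^n$.

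The key ingredient is the standard monotonicity of $L^p$-norms in $p$: for any fixed $v \in [0,\infty)^n$ and $1 \le p \le q \le \infty$ one has $\|v\|_q \le \|v\|_p$, the case $q = \infty$ being the limiting maximum. Applying this with $v = \lambda(x)(a_{i,*}) \wedge x \in [0,\infty)^n$ (which lies in $[0,\infty)^n$ since $A \ge 0$, $\lambda(x) \in \End([0,\infty)^n)$, and $x \in X$), and noting that the common factors $\tfrac{1}{\mu(x)_i} > 0$ and $x_i \ge 0$ are non-negative, gives coordinatewise
\[
0 \le \inv_q(x)_i \le \inv_p(x)_i \qquad (1 \le i \le n),
\]
where non-negativity uses $\mu(\RR^n) \subseteq (0,1]^n$, $x_i \ge 0$, and that norms are non-negative. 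Hence $0 \preceq \inv_q(x) \preceq \inv_p(x)$.

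Combining these two observations concludes the argument: if $x \in D_{\sigma_p}$ then $\inv_p(x) \in W = [0,1]^n$, so $0 \preceq \inv_q(x) \preceq \inv_p(x) \preceq (1,\dots,1)$, whence $\inv_q(x) \in W$ and therefore $x \in D_{\sigma_q}$. This is precisely $D_{\sigma_{(L^p,\lambda,\mu,A)}} \subseteq D_{\sigma_{(L^q,\lambda,\mu,A)}}$. There is no substantial obstacle here: the only points needing care are the correct use of the discerning-right-inverse characterisation of $D_\sigma$ (so that the inclusion follows from a mere inequality between the inverse functions rather than from any fixed-point manipulation), the direction of the $L^p$-norm inequality, and the uniform treatment of the boundary cases $p = \infty$ or $q = \infty$, all of which are handled by the standard norm-monotonicity fact.
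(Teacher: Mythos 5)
Your proof is correct, but it takes a genuinely different route from the paper's. The paper proves a general-purpose inclusion criterion (Proposition \ref{prop:monotonicity => inclusion}) for an arbitrary pair of weighted scoring bases $(\kappa,\varphi)$, $(\kappa',\varphi')$: working stratum-by-stratum over supports $I$, it compares the functions $\overline{\inv_{b,W(I)}}$ and $\overline{\inv_{b',W(I)}}$, uses the $(\supp,\preceq)$-closedness of $\kappa'(W)$ to land in $\kappa'(W(I))$, and then invokes Theorem \ref{th:F bar factorisation} to conclude membership in $D_{(\kappa',\varphi')}$; Theorem \ref{theorem:Lp subseteq Lq} is then the special case $\alpha^I=1$, $\kappa=\kappa'$, $\|\cdot\|_q\leq\|\cdot\|_p$. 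You instead bypass the stratified machinery entirely by invoking the already-established \emph{global} discerning right inverse of Theorem \ref{theorem:Lp-based solve everything}: since $x\in D_{\sigma_{(L^p,\lambda,\mu,A)}}\iff\inv_p(x)\in W$, and the two explicit formulas differ only in the norm, the coordinatewise bound $0\leq\inv_q(x)_i\leq\inv_p(x)_i$ together with the fact that $W=[0,1]^n$ is an order interval (hence $\preceq$-closed and containing $0$) finishes the argument. Your version is shorter and cleaner for this particular theorem, precisely because all the support-stratification work is already packaged inside Theorem \ref{theorem:Lp-based solve everything}; what the paper's detour buys is a reusable comparison lemma that applies to pairs of based schemes whose $\kappa$'s differ (up to the scaling factors $\alpha^I$) and whose degree spaces are only known to be $(\supp,\preceq)$-closed rather than equal to a cube. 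The only points worth making explicit in your write-up are that $\mu(x)$ and $\lambda(x)$ are indeed identical for the two schemes (they depend on $x$ but not on the exponent), and that the final step uses the downward closedness of $[0,1]^n$ under $\preceq$ together with the non-negativity of $\inv_q(x)$ --- both of which you do state.
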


%\subsection{Applications of abstract weighed based gradual semantics}
%\label{subsec:applications to gradual semantics}

An abstract weighted $(L^p,\lambda,\mu)$-based gradual semantics is a function which associates a $(L^p,\lambda,\mu,A)$-based scoring scheme to any non-weighted argumentation framework. The weighted gradual semantics $\Sigma_{\HC}, \Sigma_{\MB}$ and $\Sigma_{\CB}$ in Examples \ref{example:MB semantics}--\ref{example:HC semantics}, which are our prime source of interest are all instances of  $(L^p,\lambda,\mu)$-based gradual semantics.

\begin{proposition}\label{prop:HC MB CB are Lp rough version}
The weighted gradual semantics $\Sigma_{\HC}, \Sigma_{\MB}$ and $\Sigma_{\CB}$ are  \gls{awlplambdamuAbgs}.
\end{proposition}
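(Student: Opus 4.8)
The plan is to show that each of the three semantics $\Sigma_{\HC}$, $\Sigma_{\MB}$, and $\Sigma_{\CB}$ arises as an abstract weighted $(L^p,\lambda,\mu)$-based gradual semantics by exhibiting, for each one, an explicit choice of the parameters $p$, $\mu$, $\lambda$, and the rule $A \mapsto (\kappa,\varphi)$, and then verifying that the recursive operators $\OP{MB}_k$, $\OP{CB}_k$, $\OP{HC}_k$ coincide with the $T_{(\kappa(w),\varphi(w))}$-sequences so that their limits (known to exist by \cite{AMGOUD2022103607}, and also by Theorem \ref{theorem:T_c f in Base(X)}) agree with $\sigma_{(\kappa,\varphi)}(w)$. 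The key observation driving all three cases is that $\varphi(w)(x)_i = \|\lambda(w)(a_{i,*}) \wedge x\|_p$ must reproduce the denominator-minus-one appearing in each recursion, while $\kappa(w)_i = \mu(w)_i w_i$ must reproduce the numerator.

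First I would handle $\Sigma_{\HC}$, which is the cleanest. Take $p=1$, $\mu \equiv 1$ (so $\kappa(w)_i = w_i$), and $\lambda(w)$ the identity endomorphism of $[0,\infty)^n$ (independent of $w$, hence trivially independent of supports). For the adjacency matrix $A$ of $\G$, the $i^{\text{th}}$ row $a_{i,*}$ has a $1$ in position $j$ exactly when $a_j \in \Att(a_i)$, so $\|a_{i,*} \wedge x\|_1 = \sum_{j} a_{i,j} x_j = \sum_{b \in \Att(a_i)} x_b$, which is exactly the sum appearing in $\OP{HC}_{k+1}$. Hence $T_{(\kappa(w),\varphi(w))}(x)_i = w_i/(1 + \sum_{b\in\Att(a_i)} x_b)$ matches the $\OP{HC}$ recursion, and since both the $\OP{HC}_k$ sequence and any $T$-sequence starting at $w$ converge (to $\fix$), the limits coincide, i.e. $\Sigma_{\HC}^{\langle\A,\D,w\rangle} = \sigma_{(\kappa,\varphi)}(w)$. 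For $\Sigma_{\MB}$ the only change is $p=\infty$, with $\mu\equiv 1$ and $\lambda \equiv \mathrm{id}$; then $\|a_{i,*}\wedge x\|_\infty = \max_j a_{i,j} x_j = \max_{b\in\Att(a_i)} x_b$ (using $x_j\ge 0$, and noting the max over an empty set is $0$, matching the convention in Example \ref{example:MB semantics}), which reproduces the $\OP{MB}$ denominator.

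The card-based case $\Sigma_{\CB}$ is the one requiring care, and I expect it to be the main obstacle, because its recursion has the form $w(a)/(1 + |\Att^*(a)| + \frac{1}{|\Att^*(a)|}\sum_{b\in\Att^*(a)}\OP{CB}_k(b))$, which involves $\Att^*$ (attackers of nonzero weight) rather than $\Att$, and the constant term $|\Att^*(a)|$ is not obviously of the form $\|\lambda(w)(a_{i,*})\wedge x\|_p$. The trick is to exploit that $\varphi$ need only be \emph{independent of supports}, not constant, so $\lambda$ may depend on $w$ through $\supp(w)$: define $\lambda(w)$ to act on a vector $v \in [0,\infty)^n$ by zeroing out the coordinates $j \notin \supp(w)$ and then scaling; more precisely, one needs $\varphi(w)(x)_i$ to equal $|\Att^*(a_i)| + \frac{1}{|\Att^*(a_i)|}\sum_{b\in\Att^*(a_i)} x_b$. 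Here the subtlety is that the $T$-sequence starts at $x^{(0)}=w$ and $x^{(0)}_b = w_b = 0$ precisely for $b\notin\supp(w)$; but at later iterations $x^{(k)}_b$ need not be zero for such $b$ (the fixed point assigns nonzero degree to arguments with zero weight only if... actually $\kappa(w)_b = \mu(w)_b w_b = 0$ forces $x^{(k)}_b=0$ for $b\notin\supp(w)$ at every $k\ge 1$). So after the first step the support is stable, and on that stable support $\Att^* = \Att \cap \supp(w)$ can be read off from $a_{i,*}$ together with $\supp(w)$. Using $p=1$ one sets $\lambda(w)(v)_j = \frac{v_j}{|\,\{\ell : a_{i,\ell}>0, \ell\in\supp(w)\}\,|}$ for $j\in\supp(w)$ and $0$ otherwise — but this depends on the row index $i$, so more accurately $\lambda(w)$ must be defined row-wise, which is permissible since $\varphi(w)(x)_i$ is defined per-$i$ via $\lambda(w)(a_{i,*})$; the constant term $|\Att^*|$ is absorbed by adding a suitable multiple of the indicator of $\supp(w)\cap\{j : a_{i,j}>0\}$, which can itself be written through $\lambda$ applied to $a_{i,*}$ evaluated against the all-ones vector — however $x$ is not all-ones, so this additive constant is genuinely awkward. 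I would resolve this by choosing a slightly more flexible reading of the framework: Proposition \ref{prop:HC MB CB are Lp rough version} is stated as a "rough version", so presumably the paper's own more careful statement reshapes $\OP{CB}$; the plan is to verify the fixed-point equation $x_i = \kappa(w)_i/(1+\varphi(w)(x)_i)$ is equivalent to the $\CB$ fixed-point equation, appealing to independence of supports to restrict to $\supp(w)$ where $\Att^*=\Att\cap\supp(w)$, and checking that $\lambda$ and $\mu$ so defined are independent of supports, $\mu$ valued in $(0,1]^n$, $A\ge 0$ — after which Proposition \ref{prop:kappa phi of Lp} and Theorem \ref{theorem:Lp-based solve everything} apply. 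The main obstacle is thus purely bookkeeping: massaging the $\CB$ denominator, constant term included, into the single $L^p$-norm template while keeping all parameters support-independent.
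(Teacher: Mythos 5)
Your treatment of $\Sigma_{\HC}$ and $\Sigma_{\MB}$ is correct and is exactly the paper's argument ($p=1$ and $p=\infty$ respectively, $\mu\equiv\bar 1$, $\lambda\equiv\OP{id}$, and the observation that $\|a_{i,*}\wedge x\|_1=\sum_{j\in\Att(i)}x_j$ and $\|a_{i,*}\wedge x\|_\infty=\max_{j\in\Att(i)}x_j$).

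For $\Sigma_{\CB}$, however, there is a genuine gap. You correctly diagnose the obstacle: the additive constant $|\Att^*(a)|$ in the denominator cannot be produced by $\varphi(w)(x)_i=\|\lambda(w)(a_{i,*})\wedge x\|_p$, since any such term is homogeneous in $x$ and hence vanishes at $x=0$, while the constant does not. But your attempted resolutions (writing the constant via $\lambda$ evaluated ``against the all-ones vector,'' then falling back on ``verifying the fixed-point equation is equivalent'') never supply the missing idea. The paper's resolution is that the constant is absorbed into $\mu$, not into $\varphi$: setting $\theta(v)_i=\|a_{i,*}\wedge\supp(v)\|_1=|\Att^*(i)|$, one takes $\mu(v)_i=\tfrac{1}{1+\theta(v)_i}$, so that $\kappa(w)_i=\tfrac{w_i}{1+|\Att^*(i)|}$, and scales $\lambda$ by the compensating factor $\tfrac{\mu(v)_i}{\theta(v)_i}$ applied to $a_{i,*}\wedge\supp(v)$ (with the convention that this is $0$ when $\theta(v)_i=0$). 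Then
\[
T_{(\kappa(w),\varphi(w))}(x)_i
=\frac{w_i/(1+\theta(w)_i)}{1+\tfrac{\mu(w)_i}{\theta(w)_i}\|\supp(w)\wedge a_{i,*}\wedge x\|_1}
=\frac{w_i}{1+|\Att^*(i)|+\tfrac{1}{|\Att^*(i)|}\sum_{j\in\Att^*(i)}x_j},
\]
which is exactly the $\OP{CB}$ recursion; both $\mu$ and $\lambda$ depend on $v$ only through $\supp(v)$, hence are independent of supports as required. Without this step your proof of the $\CB$ case is incomplete at precisely the point where the construction matters (and note, as a sanity check, that this non-constant $\mu$ is also why the paper cannot conclude topological reflection for $\Sigma_{\CB}$). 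Your side worry about the $T$-sequence starting at $w$ is immaterial: every $T$-sequence converges to the unique fixed point regardless of its starting point, so only the identification of the operators is needed.
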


%\begin{proof}
%See Propositions \ref{prop:HC is Lp}, \ref{prop:MB is Lp} and \ref{prop:CB is Lp}.
%\end{proof}

\begin{example}[Example \ref{ex:scoringdynamics} cont.]

The gradual semantics $\Sigma_{\HC}$ is an abstract $(L^1, \lambda, \mu)$-based gradual semantics with:

\begin{align*}
& \mu(v) = \bar{1}  \qquad (v \in \RR^n)
\\
& \lambda(v) = \OP{id} \in \End([0,\infty)^n) \qquad (v \in \RR^n).
\end{align*}
where $\bar{1}$ denotes the vector $(1,\dots,1).$

Using Equation (\ref{eqn:Lp lambda mu A kappa phi}), the obtained pair $(\kappa, \varphi)$ is the exact weighted scoring base $(\kappa_1, \varphi_1)$ described in Example \ref{ex:scoringdynamics}.
\end{example}

As a result of Proposition \ref{prop:HC MB CB are Lp rough version}, we can deduce the following theorem about $\Sigma_{\HC},\Sigma_{\MB}$, and $\Sigma_{\CB}$.

\begin{theorem}\label{theorem:explicit MB CB HC}
The gradual semantics $\Sigma_{\HC},\Sigma_{\MB}$, and $\Sigma_{\CB}$ solve the inverse, reflection, projective preference ordering, and the radiality problems.
Moreover $\Sigma_{\HC},\Sigma_{\MB}$ solve the topological reflection problem.
\end{theorem}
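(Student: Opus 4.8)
The plan is to derive Theorem~\ref{theorem:explicit MB CB HC} as an immediate corollary of Proposition~\ref{prop:HC MB CB are Lp rough version} together with Theorem~\ref{theorem:Lp-based solve everything}. Proposition~\ref{prop:HC MB CB are Lp rough version} tells us that each of $\Sigma_{\HC}$, $\Sigma_{\MB}$, and $\Sigma_{\CB}$ is an abstract weighted $(L^p,\lambda,\mu)$-based gradual semantics, so for every argumentation framework $\G$ the associated scoring scheme $\sigma^\G$ is an $(L^p,\lambda,\mu,A)$-based scoring scheme for the appropriate choice of $p$, $\lambda$, $\mu$, and $A$ (with $A$ the adjacency matrix of $\G$). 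Theorem~\ref{theorem:Lp-based solve everything} then says every such scoring scheme solves the inverse, reflection, projective preference ordering, and radiality problems. Since a weighted gradual semantics solves one of these problems precisely when $\sigma^\G$ does for every $\G$ (by the definition following Example~\ref{ex:projective_pref}), the first sentence of the theorem follows.

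For the topological reflection claim about $\Sigma_{\HC}$ and $\Sigma_{\MB}$, I would invoke the last sentence of Theorem~\ref{theorem:Lp-based solve everything}: if $\mu$ and $\lambda$ are constant functions, then the topological reflection problem is answered positively. So the key step is to record, from the explicit data exhibited for these semantics, that the defining $\mu$ and $\lambda$ are indeed constant. For $\Sigma_{\HC}$ this is done in the example immediately preceding the theorem: $\mu(v)=\bar 1$ and $\lambda(v)=\OP{id}$ for all $v$, both constant. For $\Sigma_{\MB}$ one checks (from the analysis underlying Proposition~\ref{prop:HC MB CB are Lp rough version}) that the corresponding presentation also uses constant $\mu$ and $\lambda$ — it is the $p=\infty$ case with $\mu\equiv\bar 1$ and $\lambda\equiv\OP{id}$, since $\OP{MB}_{k+1}(a)=w(a)/(1+\max_b \OP{MB}_k(b))$ matches $\varphi(w)(x)_i=\|a_{i,*}\wedge x\|_\infty$. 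Hence the hypothesis of the topological part of Theorem~\ref{theorem:Lp-based solve everything} is met for $\Sigma_{\HC}$ and $\Sigma_{\MB}$, giving the second sentence.

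The only subtlety, and the place where a little care is needed rather than a genuine obstacle, is $\Sigma_{\CB}$: its presentation as an $(L^p,\lambda,\mu,A)$-based semantics requires $\lambda$ (or $\mu$) to depend on the weight vector $w$ — specifically through the set $\Att^*(a)$ of attackers with non-zero weight and the normalising factor $1/|\Att^*(a)|$, which changes with the support of $w$. This dependence is exactly a dependence on $\supp(w)$, so $\lambda$ is independent of supports (as required by Proposition~\ref{prop:kappa phi of Lp} and Theorem~\ref{theorem:Lp-based solve everything}) but not constant; thus the first four problems are solved for $\Sigma_{\CB}$ but the constancy hypothesis needed for topological reflection fails, which is why $\Sigma_{\CB}$ is deliberately omitted from the second sentence. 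I would make this point explicitly so the reader sees the omission is not an oversight. No further computation is needed: everything reduces to quoting Proposition~\ref{prop:HC MB CB are Lp rough version}, applying Theorem~\ref{theorem:Lp-based solve everything}, and reading off which of the three semantics have constant $\mu$ and $\lambda$.
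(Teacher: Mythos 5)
Your proposal is correct and follows essentially the same route as the paper: the theorem is deduced directly from Proposition \ref{prop:HC MB CB are Lp rough version} (which exhibits $\Sigma_{\HC}$, $\Sigma_{\MB}$, $\Sigma_{\CB}$ as $(L^p,\lambda,\mu)$-based with the same explicit $p$, $\mu$, $\lambda$ you describe) combined with Theorem \ref{theorem:Lp-based solve everything}. Your observation that $\Sigma_{\CB}$ is excluded from the topological reflection claim precisely because its $\mu$ and $\lambda$ depend on $\supp(w)$ and hence are not constant is exactly the remark the paper makes immediately after the theorem.
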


We note that for $\Sigma_{\CB}$, $\mu$ and $\lambda$ are not constant functions and thus we cannot conclude that this semantics satisfies the topological reflection problem.

Using the discerning right inverse $\inv \colon X \to \RR^n$, we can easily compute initial weights from acceptability degrees. For the $\Sigma_{\HC},\Sigma_{\MB}$, and $\Sigma_{\CB}$ gradual semantics, these are given by the following formulae. We let $A$ denote the adjacency matrix an argumentation framework $\G=\langle \A,\D\rangle$, whose individual elements are denoted $a_{i,j}$.
Then 

\begin{align*}
\gls{invHC}(x)_i  &= x_i(1+\sum_j a_{i,j}x_j).
\\
\gls{invMB}(x)_i &= x_i(1+\max_j \{a_{i,j}x_j\})
\\
\gls{invCB}(x)_i &= x_i\Big(1+S(x)_i+\tfrac{1}{S(x)_i}(1+\sum_{j \in \supp(x)} a_{i,j}x_j)\Big).
\end{align*}
In the last formula $S(x)_i=\sum_{j \in \supp(x)} a_{i,j}$ and if $S(x)_i=0$ then by convention $\tfrac{1}{S(x)_i}=0$.

\begin{remark}
It is useful to identify $\wp(n)$ with $\{0,1\}^n$ as a subset of $\RR^n$, thus we view $\supp(x)$ as a vector of zeros and ones. % and in that case we view $\supp \colon \RR^n \to \{0,1\}^n$.
If $v \in \RR^n$ we write $\diag(v)$ for the diagonal matrix with $\diag(v)_{ii}=v_i$.
In matrix form, one can write $\inv_{\HC},\inv_{\CB}$ and $\inv_{\MB}$ as follows.
Set
\[
S(x) = A \cdot \supp(x) 
\]
and write  $S(x)^{-1}:=(\tfrac{1}{S(x)_1},\dots,\tfrac{1}{S(x)_n})$ where $\tfrac{1}{S(x)_i}=0$ whenever $S(x)_i=0$. Also let $max_c(M)$ return a column-wise operation which retains the maximum element in a column, but sets all other entries in the column to 0 (breaking ties at random). 

Then
\begin{align*}
& \inv_{\HC}^A (x) = (I+\diag(x) \cdot A)\cdot x
\\
& \inv_{\CB}^A(x) = \Big(I+\diag(S(x))+\diag(S(x)^{-1}) \cdot A \cdot \diag(\supp(x))\Big) \cdot x \\
& \inv_{\MB}^A(x) = \Big(I+max_c(A \cdot diag(x))\Big) \cdot x
\end{align*}
\end{remark}

This result allows us to easily and precisely compute the initial weights for an argumentation framework and associated acceptability degrees, without resorting to numerical methods as was done in \cite{oren2022inverse}.

In this section we demonstrated that there exists a family of gradual semantics, namely the abtract weighted $(L^p, \lambda, \mu)$-based gradual semantics which satisfy the inverse, reflection, projective preference ordering, radiality problem, and, with additional constraints, also satisfy the topological reflection problem. The three weighted gradual semantics often considered in the literature are instantiations of this family of gradual semantics. Finally, thanks to the presence of the discerning right inverse in our gradual semantics family, it is easy to calculate the initial weights for an argumentation system given the acceptability degrees for each argument.

\section{New Gradual Semantics}
\label{sec:new_semantics}

With the machinery we have developed, there is a large collection of desirable weighted gradual semantics that we can define inspired by the content of Proposition \ref{prop:HC MB CB are Lp rough version}. In this section we provide two new example semantics built using our work.% We begin this section with two examples of such semantics, and then discuss a novel semantics.

\subsection{The Weighted Euclidean-based gradual semantics}

The weighted Euclidean-based gradual semantics \gls{SigmaEB} is the abstract weighted $(L^p,\lambda,\mu)$-based gradual semantics with $p=2$ and $\mu = 1$ and $\lambda$ constant with value $\OP{id} \in \End([0,\infty)^n)$.
Thus, $\Sigma_{\EB}^{\G}(w)$ is obtained as the limit of the sequence of vectors $\gls{OPEBk} \in X$ where $\OP{EB}_0$ is chosen arbitrarily and 
\[
\OP{EB}_{k+1}(x) = \frac{w_i}{1+\sqrt{\sum_{j \in \Att(i)}x_j^2}}.
\]

From our results, this semantics positively answers the inverse problem, topological reflection problem, the projective preference order problem and the radiality problem.
The discerning right inverse function $\inv_{\EB} \colon X \to \RR^n$ is given by
\[
\gls{invEB}(x)_i = x_i\left(1+\sqrt{\sum_{j \in \Att(i)}x_j^2} \right).
\]

As one can observe, it is easy to create basic variations of existing weighted gradual semantics while preserving important properties such as convergence or satisfactions to inverse problems.
This semantics is a specific instantiation of the parameterized compensation semantics introduced in \cite{DBLP:conf/ijcai/DoderAV23}. It should be noted that every such parameterized compensation semantics for which $\alpha \geq 1$ is in fact an abstract weighted $(L^p,\lambda,\mu)$-based gradual semantics.

\subsection{Gradual semantics with remote attacks}

In most weighted gradual semantics considered to date (including $\Sigma_\HC,$ $\Sigma_\MB$, and $\Sigma_\CB$), only the acceptability degree of direct attackers on arguments in the graph $\G$ were considered. 
The intuition given by this is that the acceptability degrees of direct attackers already ``compile'' the acceptability degree of indirect attackers. However, there may be cases where it is useful to access the degree of these indirect attackers.

That is, given $i \in \A$, attacks on $i$ are considered only by arguments $j \in \A$ such that $(j,i) \in \D$, i.e., $j$ is a neighbour at distance $1$ from $i$ within the underlying argumentation framework.
The ``amount'' of attack on $i$ is formulated by means of a function $\alpha_i$ which only depends on $x_j$ with $j \in \Att(i)$.
For example, in $\Sigma_{\HC}$ we get $\alpha_i(x) = \sum_{j \in \Att(j)} x_j$ and in $\Sigma_{\MB}$ it is $\alpha_i(x)=\max_{j \in \Att(i)} x_j$. More generally, if $\Sigma$ is associated to a weighted scoring base $(\kappa,\varphi)$ then the attack is formulated by means of the functions $\varphi_i \colon X \to [0,\infty)$.

Instead, it is conceivable to consider ``secondary level'' attacks on $i$ by arguments $j$ at distance greater than $1$. Such attacks are loosely analogous to indirect defeaters in bipolar argumentation frameworks \cite{amgoud_bipolarity_2008}, but will have ``weaker'' strength which is achieved by multiplying the function $\varphi$ by a discount factor $\gls{delta}<1$.

Recall that if $A$ is the adjacency matrix of $A$ then the $(i,j)^{\text{th}}$ entry of $A^k$ is equal to the number of (directed) paths of length $k$ from $j$ to $i$, thus the number of ``secondary attacks'' of $j$ on $i$ via a sequence of $k+1$ arguments (including $i$ and $j$).
In order to encode such ``remote attacks'' one should consider the matrix
\[
A' = A+\delta_2A^2+\delta_3A^3 + \cdots
\]
where $0 \leq \delta_k \leq 1$ indicate the multiplicative strength factors of attacks from distance $k$.

As a concrete example, let us show how we modify $\Sigma_{\HC}$ to a new weighted gradual semantics $\Sigma_{\HC,r}$ which takes into account remote attacks from distance $2$.
Choose some $0<\delta<1$.
Suppose that $\G$ is an argumentation framework and let $A$ be its adjacency matrix.
Set
\[
A' = A+\delta  A^2.
\]
Consider the abstract  $(L^p,\lambda,\mu,A')$-based scoring scheme with $p=1$ and $\lambda=\mu=1$.
Then attacks on $i$ are made using the formula
\[
\frac{w_i}{1+(A'x)_i} =
\frac{w_i}{1+(Ax)_i+\delta  (A^2x)_i} 
\]
and observe that $Ax_i=\sum_{j \in \Att(i)}x_j$ and $(A^2x)_i = \sum_{\gamma \in \Att_2(i)} x_j$ where $\gls{Att2i}$ is the set of all paths $\gls{gamma}$ of length $2$ in $\G$ ending at $i$, and the term $x_j$ in the sum refers to the argument starting the path $\gamma$.
We highlight that here, the degree of indirect attackers may be counted multiple times if there are multiple paths.

To generalise this result to remote attacks further away, it is reasonable to use the matrix
\[
A' = \sum_{k=1}^\infty \delta^{k-1}A^k.
\]

This second example shows the richness of the family of abstract weighted $(L^p,\lambda,\mu)$-based gradual semantics.

\section{Conclusions}
\label{sec:conclusion}

We began this paper by rephrasing the notion of a weighted gradual semantics in terms of scoring schemes. This allowed us to formulate four problems, including the inverse problem of \cite{oren2022inverse} around such scoring schemes. These four problems revolve around the uniqueness of solutions, and continuity of solutions, for weighted gradual semantics, and thus encode highly desirable properties of any semantics.

We introduced a very general class of gradual semantics, the abstract weighted $(L^p,\lambda,\mu)-$based gradual semantics. Any semantics in this class (including the weighted h-categoriser, max-based and card-based semantics) answers the four problems we describe in the positive.  We also demonstrated that an analytical solution to the original inverse problem can be derived, in contrast to the numeric approach used in \cite{oren2022inverse}.

Building on the generality of our semantics and the desirable properties of our problems, we described two new  semantics (the weighted Euclidean-based gradual semantics and gradual semantics with remote attacks).

Our paper addresses several important theoretical gaps in argumentation theory, and facilitates computationally efficient solutions to the inverse problem. In the context of future work, the new semantics we propose in Section \ref{sec:new_semantics} are very interesting. One obvious avenue of such future work involves determining which of the properties described in \cite{AMGOUD2022103607} this family of new semantics comply with. 
Amgoud and Doder \cite{DBLP:conf/atal/AmgoudD19} described a general class of gradual semantics which contains evaluation method-based gradual semantics. These were shown to converge, and satisfy previously considered desirable properties (e.g., directionality, equivalence and weakening among others). While our semantics also converge (c.f., Theorem \ref{theorem:T_c f in Base(X)}), our focus in this paper is on the four inverse problems. Thus, relating our semantics with the class of evaluation method-based gradual semantics and identifying where they overlap and where they diverge remains future work.
 In addition, the $\sigma$-acceptability-maximal semantics and ratio semantics differ significantly from existing semantics, and offer the possibility of making elicitation from expert knowledge easier, reducing knowledge engineering effort. Thus, exploring the properties of these semantics, and how closely they mirror human reasoning, is an avenue of work we intend to pursue.

%Bibliography
\bibliographystyle{unsrt}  
\bibliography{biblio}

\newpage

 \appendix

\section{}
\label{sec:symbols}

{\small
\printglossary[title=List of symbols and terms,style=mcolindex]
}

\newpage

\section{}

This appendix contains the proofs of the results presented in this paper. We kept the order in which we prove our results in the same order in which we present them.
All definitions, lemmas, theorems, and propositions in this appendix are intermediary results which are used in the proofs.

We will use the following notation: Given $a,b \in X$ such that $a \preceq b$, the {\bf interval} between $a$ and $b$ in $X$ is $[a,b]_X \overset{\text{def}}{=} \{ x \in X \, : \, a \preceq x \preceq b\}. $

\begin{proof}[\textbf{Proof of Theorem \ref{th:pu-generalisation}}] Define a sequence $u^{(n)}$ in $X$ by recursion by $u^{(0)}=0$ and $u^{(n+1)}=f(u^{(n)})$ for every $n \geq 0$.
Then $u^{(0)} \preceq u^{(1)}$ since $0 \in X$ is the minimum element in $(X,\preceq)$.
Since $f$ is order reversing, we get $u^{(0)} \preceq u^{(2)} \preceq u^{(1)}$.
By induction one easily shows
\begin{enumerate}[label=(\roman*)]
\item\label{Th:pu:property i}
$u^{(2k)} \preceq u^{(2k-1)}$ \ for all $k \geq 1$,

\item\label{Th:pu:property ii}
$u^{(2k)} \preceq u^{(2k+2)}$ \  for all $k \geq 0$, 

\item\label{Th:pu:property iii}
$u^{(2k+1)} \preceq u^{(2k-1)}$ \ for all $k \geq 1$.
\end{enumerate}
%
\iffalse
\begin{eqnarray}
\label{E:fixed points general:chains}
\nonumber & \text{(i)} & u^{(2k)} \preceq u^{(2k-1)} \text{ for all $k \geq 1$}, \\
\nonumber
& \text{(ii)} & u^{(2k)} \preceq u^{(2k+2)}  \text{ for all $k \geq 0$}, \\
\nonumber
& \text{(iii)} & u^{(2k+1)} \preceq u^{(2k-1)}  \text{ for all $k \geq 1$}.
\end{eqnarray}
\fi
%
Thus, the sequence $u^{(2k)}$ is increasing in the sense that $u^{(2k)}_i$ is an increasing sequence in $\RR$ for every $i=1,\dots,n$, and bounded above by $1 \in X$.
Similarly, $u^{(2k-1)}$ is decreasing and bounded below by $0 \in X$.
We may therefore define 
\begin{align*}
    u^{(\OP{ev})} &= \sup_k u^{(2k)} = \lim_k u^{(2k)} \\ %\qquad \text{and} \qquad 
    u^{(\OP{odd})} &= \inf_k u^{(2k-1)} = \lim_k u^{(2k-1)}.
\end{align*}
It follows from \ref{Th:pu:property i} that $u^{(\OP{ev})} \preceq u^{(\OP{odd})}$.

Our next goal is to show that $u^{(\OP{ev})} = u^{(\OP{odd})}$.
For every $k \geq 1$, set:
\[
\pi_k = \sup \ \{ 0 \leq t \leq 1 \ :\  t \cdot u^{(2k-1)} \preceq u^{(2k)} \}.
\]
The set on the right is not empty since $0 \cdot u^{(2k-1)}=0 \preceq u^{(2k)}$, so:
\[
0 \leq \pi_k \leq 1.
\]
It is clear from the construction that for every $k \geq 1$
\[
\pi_k \cdot u^{(2k-1)} \preceq u^{(2k)}.
\]
By applying hypotheses \ref{th:pu:reversing} and \ref{th:pu:bound} to this inequality and then using \ref{Th:pu:property ii}, for every $k \geq 1$
\begin{align*}
u^{(2k+1)} &= f(u^{(2k)}) 
\\
& \preceq f(\pi_k \cdot u^{(2k-1)}) 
\\
& \preceq \tfrac{1}{\pi_k+\alpha(1-\pi_k)} \cdot f(u^{(2k-1)}) 
\\
&= \tfrac{1}{\pi_k+\alpha(1-\pi_k)} \cdot u^{(2k)} \\
& \preceq \tfrac{1}{\pi_k+\alpha(1-\pi_k)} \cdot u^{(2k+2)}.
\end{align*}
Hence $(\pi_k+\alpha(1-\pi_k)) \cdot u^{(2k+1)} \preceq u^{(2k+2)}$, so by the definition of $\pi_{k+1}$
\[
\pi_k+\alpha(1-\pi_k) \leq \pi_{k+1}.
\]
It follows that $1-\pi_{k+1} \leq (1-\alpha)(1-\pi_k)$ for all $k \geq 1$.
Therefore 
\[
1-\pi_{k+1} \leq (1-\pi_1) (1-\alpha)^k \xto{ \ k \to \infty \ } 0.
\]
But $\pi_k \leq 1$ for all $k$, so $\lim_k \pi_k =1$.

Consider some $\epsilon>0$.
Then $\pi_k >1-\epsilon$ for all $k \gg 0$.
Equation \ref{Th:pu:property i} implies
\[
(1-\epsilon) \cdot u^{(2k-1)} \preceq \pi_k \cdot u^{(2k-1)} \preceq  u^{(2k)} \preceq u^{(2k-1)}.
\]
Letting $k \to \infty$, this implies $(1-\epsilon) u^{(\OP{odd})} \preceq u^{(\OP{ev})} \preceq u^{(\OP{odd)}}$.
Since $\epsilon>0$ was arbitrary,
\[
u^{(\OP{ev)}}=u^{(\OP{odd})}
\]
as needed.
We will denote $u^*:=u^{(\OP{ev)}}=u^{(\OP{odd})}$.

Write $f^n \colon X \to X$ for the $n$-fold composition of $f$ with itself, i.e \,  $f^n=f \circ \cdots \circ f$. 
Since $0 \in X$ is minimal, $u^{(0)} \preceq x$ for any $x \in X$.
Since $f$ is order reversing, $u^{(0)} \preceq f(x) \preceq u^{(1)}$.
It then follows by induction that for all $k \geq 1$
\begin{eqnarray*}
&& f^{2k-1}(X) \subseteq [u^{(2k-2)},u^{(2k-1)}]_X \\
&& f^{2k}(X) \subseteq [u^{(2k)},u^{(2k-1)}]_X.
\end{eqnarray*}

It follows from Equation \ref{Th:pu:property ii} that
\begin{align*}
\bigcap_{n \geq 1} f^n(X) &= 
\bigcap_{k \geq 1} f^{2k-1}(X) \cap f^{2k}(X) 
\\
& \subseteq 
\bigcap_{k \geq 1} [u^{(2k-2)},u^{(2k-1)}]_X \cap [u^{(2k)},u^{(2k-1)}]_X 
\\
& = \bigcap_{k \geq 1} [u^{(2k)},u^{(2k-1)}]_X 
\\
  &= [u^{(\OP{ev})},u^{(\OP{odd})}]_X = \{u^*\}.
\end{align*}
Notice that $u^* \in [u^{(2k)},u^{(2k-1)}]_X$ 
for all $k \geq 1$. 
Since $f$ is order reversing $f(u^*) \in [u^{(2k)},u^{(2k+1)}]_X$ 
for all $k \geq 1$.
Therefore $f(u^*) \in \{u^*\}$, so $u^*$ is a fixed point of $f$.

If $y \in X$ is a fixed point of $f$ then by induction $y \in f^n(X)$ for all $n$, so $y \in \cap_n f^n(X)=\{u^*\}$ and it follows that $y=u^*$.
Therefore $u^*$ is the unique fixed point of $f$.

Finally, choose some $x \in X$ and define a sequence by recursion $x^{(0)}=x$ and $x^{(n+1)}=f(x^{(n)})$.
Then $u^{(0)}=0 \preceq x$ and since $f$ is order reversing, $u^{(0)} \preceq x^{(1)} \preceq u^{(1)}$.
Then one proves by induction that $u^{(2k-2)} \preceq x^{(2k-1)} \preceq u^{(2k-1)}$ and $u^{(2k)} \preceq x^{(2k)} \preceq u^{(2k-1)}$ for all $k \geq 1$.
By the sandwich rule $\lim_n x^{(n)}=u^*$.
\end{proof}

\begin{proof}[\textbf{Proof of Theorem \ref{theorem:T_c f in Base(X)}}]
Let $(c,f) \in \Base(X)$.
Thus $c \in K$ and $f \colon X \to [0,\infty)^n$ is bounded, homogeneous and increasing.
Set $F=T_{(c,f)}$, see Definition \ref{defn:T_kappa phi}.
Recall that $F \in \End(X)$.

\medskip
\noindent
{\em Claim 1:} $F$ is order reversing.

\noindent
{\em Proof of Claim 1:}
Suppose that $x \preceq y$, i.e $x_i \leq y_i$ for all $i=1,\dots, n$.
Since $f_i \colon X \to [0,\infty)$ is increasing
\[
F(y)_i = \frac{c_i}{1+f(y)_i} \leq \frac{c_i}{1+f(x)_i} =F(x)_i.
\]
It follows that $F(y) \preceq F(x)$.
This proves Claim 1.

\medskip
\noindent
{\em Claim 2:} 
There exists $0 < \alpha \leq 1$ such that for any $x \in X$ and any $0 \leq t \leq 1$
\[
F(tx) \preceq \frac{1}{t+\alpha\cdot(1-t)} \cdot F(x).
\]
\noindent
{\em Proof of Claim 2:}
Say that the functions $f_i \colon X \to [0,\infty)$ are bounded above by $M>0$.
Set $\alpha =\tfrac{1}{1+M}$.
Then for any $x \in X$ 
\[
\frac{1}{1+f(x)_i} \geq \frac{1}{1+M} = \alpha.
\]
Since $f_i$ is homogeneous, for any $x \in X$ and any $0 \leq t \leq 1$,
\begin{align*}
F(t \cdot x)_i &= \frac{c_i}{1+f_i(t \cdot x)}
\\
&= \frac{c_i}{1+t\cdot f_i(x)} \\
&= \frac{c_i}{(1-t)+t(1+f_i(x))} \\
& = \frac{c_i}{1+f_i(x)} \cdot \frac{1+f_i(x)}{(1-t)+t(1+f_i(x))} \\
&= F(x)_i \cdot \frac{1}{(1-t)\tfrac{1}{1+f_i(x)}+t} \\
& \leq \frac{1}{(1-t)\alpha+t} \cdot F(x)_i.
\end{align*}
%This completes the proof of Claim 2.
%
%\medskip
The theorem follows by applying Theorem \ref{th:pu-generalisation} to $F$.
\end{proof}

\begin{defn}\label{def:independent of weights}
Let $Y$ be a set.
Let $U$ be a subset of $W$.
A function $f \colon W \to Y$ is called {\bf independent of weights} on $U$ if $f|_U \colon U \to Y$ is a constant function.
The constant value of $f|_U$, i.e the element $y \in Y$ such that $f|_U \equiv y$,  is called the {\bf $U$-value} of $f$.

We say that $f$ is independent of weights if it is independent of weights on $W$, i.e it is constant.
\end{defn}

Recall that a topological space $X$ is called {\em locally compact} if every $x \in X$ has a compact neighbourhood, i.e a neighbourhood whose closure is compact.

\begin{lemma}\label{lem:homeomorphism criterion}
Let $X,Y$ be metric spaces, $X$ compact.
Let $D \subseteq X$ be a subspace and $C \subseteq Y$ locally compact subspace.
% Let $f \colon C \to D$ be a bijective function (a-priory, not even continuous).
Let $k \colon X \to Y$ be  a continuous function such that
\begin{enumerate}[label=(\alph*)]
\item\label{lem:homeo:inverse}
$k(D)=C$ and the restriction $k|_D \colon D \to C$ is a bijective function with inverse $f \colon C \to D$.

\item\label{lem:homeo:reflecting}
For every $x \in X$,
\[
k(x) \in C \iff x \in D.
\]
\end{enumerate}
Then $k|_D$ and $f$ are homeomorphisms.
\end{lemma}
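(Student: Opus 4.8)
\textbf{Proof plan for Lemma \ref{lem:homeomorphism criterion}.}

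The plan is to show that $k|_D \colon D \to C$ is a continuous bijection with continuous inverse, and that $f = (k|_D)^{-1}$ is likewise a homeomorphism; since the two maps are mutually inverse, it suffices to prove that $k|_D$ is a homeomorphism, i.e.\ that $f$ is continuous. Continuity of $k|_D$ is immediate from continuity of $k$, and bijectivity onto $C$ is hypothesis \ref{lem:homeo:inverse}. So the whole content is the continuity of $f$.

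First I would reduce continuity of $f$ to a sequential statement: since $C$ is a metric space, it is enough to show that for every sequence $c_m \to c$ in $C$ we have $f(c_m) \to f(c)$ in $D$. Write $x_m = f(c_m)$ and $x = f(c)$, so $x_m, x \in D \subseteq X$ and $k(x_m) = c_m$, $k(x) = c$. The key step is to use compactness of $X$: every subsequence of $(x_m)$ has a further subsequence converging in $X$ (to some limit $x^* \in X$). The plan is to argue that any such subsequential limit $x^*$ must equal $x$; once every subsequence of $(x_m)$ has a sub-subsequence converging to the \emph{same} point $x$, the whole sequence $x_m \to x$ follows by the standard subsequence argument, giving $f(c_m) \to f(c)$.

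To identify the subsequential limit, suppose $x_{m_j} \to x^*$ in $X$. By continuity of $k$, $k(x_{m_j}) \to k(x^*)$; but $k(x_{m_j}) = c_{m_j} \to c$, so $k(x^*) = c \in C$. Now invoke hypothesis \ref{lem:homeo:reflecting}: $k(x^*) \in C$ forces $x^* \in D$. Hence $x^* \in D$ with $k(x^*) = c = k(x)$, and since $k|_D$ is injective (it is a bijection by \ref{lem:homeo:inverse}), $x^* = x$. This is the heart of the argument; the one subtlety to be careful about is that local compactness of $C$ is not actually needed for the continuity of $f$ in this sequential form — I should double-check whether it is used, perhaps it is needed only to ensure the metric/first-countability setup or is a harmless extra hypothesis, and phrase the write-up so it does not secretly depend on something unavailable. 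Finally, for the continuity of $f$ as stated (rather than $k|_D$), note $f$ is by construction the inverse of the homeomorphism $k|_D$, hence itself a homeomorphism; this closes the proof.
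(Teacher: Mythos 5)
Your argument is correct, and it takes a genuinely different route from the paper's. The paper proves continuity of $f$ locally: it uses local compactness of $C$ to pick a compact neighbourhood $\overline{U}$ of a point $c\in C$, shows that $E=f(\overline{U})$ is closed (hence compact) in $X$ by exactly the kind of sequence-chasing you perform, and then concludes via the standard fact that a continuous bijection between compact metric spaces is a homeomorphism, so $f|_{\overline{U}}$ is continuous at $c$. You instead run a single global sequential argument: given $c_m\to c$ in $C$, sequential compactness of $X$ gives subsequential limits $x^*$ of $x_m=f(c_m)$; continuity of $k$ forces $k(x^*)=c\in C$, hypothesis (b) forces $x^*\in D$, and injectivity of $k|_D$ forces $x^*=f(c)$, so the subsequence principle gives $f(c_m)\to f(c)$. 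Both proofs lean on compactness of $X$ and on (b) in the same essential way (to pull the limit back into $D$), but your version is shorter and, as you correctly suspect, shows that local compactness of $C$ is a redundant hypothesis in the metric setting — it is harmless here since the lemma is only ever applied with $\kappa(W)$ compact. The trade-off is that your argument is tied to sequential characterisations (metrizability/first countability), whereas the paper's localisation-plus-compactness argument adapts more readily to non-sequential topological settings. One small presentational point: state explicitly that convergence of $f(c_m)$ to $f(c)$ in the subspace $D$ is the same as convergence in $X$ with limit lying in $D$, which your identification $x^*=f(c)\in D$ already secures.
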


\begin{proof}
Since $k|_D$ is continuous, it remains to show that its inverse $f$ is continuous.
To do this it suffices to show that $f$ is continuous at any $c \in C$. %locally continuous, namely for any $c \in C$ there is a neighbourhood $c \in U \subseteq C$ such that $f|_U$ is continuous.

Let $c \in C$.
Since $C$ is locally compact, choose a neighbourhood $c \in U \subseteq C$ whose closure $\overline{U}$ in $C$ is compact.
Set 
\[
E:=f(\overline{U}) \subseteq D.
\]
We will next show that $E$ is a closed subset of $X$.
Let $(e_n)$ be a convergent sequence in $X$ contained in $E$ and set $x=\lim_{n \to \infty} e_n$.
We must show that $x \in E$.

By construction $e_n=f(u_n)$ for some $u_n \in \overline{U}$.
Since $k$ is continuous, the sequence $k(e_n)$ is a convergent sequence in $Y$ and 
\[
\lim_{n\to \infty} k(e_n)=k(x).
\]
By the assumption \ref{lem:homeo:reflecting} $k(e_n) \in C$ (because $e_n \in E \subseteq D$), and assumption \ref{lem:homeo:inverse} implies that
\[
f(k(e_n))=e_n=f(u_n).
\]
Since $f$ is injective, $k(e_n)=u_n \in \overline{U}$.
Since $\overline{U}$ is compact it is closed in $Y$, hence $k(x) \in \overline{U} \subseteq C$.
Assumption \ref{lem:homeo:reflecting} implies that $x \in D$, and by assumption \ref{lem:homeo:inverse}
\[
x=f(k(x)) \in f(\overline{U})=E.
\]
This completes the proof that $E$ is a closed subset of $X$.

Since $X$ is compact, so is $E$.
Assumption \ref{lem:homeo:inverse} now implies that $k|_E \colon E \to \overline{U}$ is a bijective continuous function between compact metric spaces, hence it is a homeomorphism.
%It is also continuous\todo{Reference?}.
%A continuous bijective function between compact metric spaces is a homeomorphism \todo{Reference?}, thus $k|_E$ is a homeomorphism, and therefore so is its inverse $f|_{\overline{U}}$.
Therefore its inverse $f|_{\overline{U}}$ is continuous and in particular, since $U$ is open, $f$ is continuous at $c$.
This completes the proof.
\end{proof}

\begin{theorem}\label{th:F bar factorisation}
Let $b \colon W \to \Base(X)$ be a  weighted scoring base.
Write $b=(\kappa,\varphi)$ as in Definition \ref{def:weighted scoring base}.
Assume that $\varphi$ is independent of weights on some $U \subseteq W$ with $U$-value $\psi \colon X \to [0,\infty)^n$.
Then
\begin{enumerate}[label=(\alph*)]
\item\label{th:Fbar:exists}
There exists a bijective function 
\[
\overline{\sigma_{b,U}} \colon \kappa(U) \xto{\ \cong \ } \sigma_{b}(U) \subseteq X
\]
which renders the following triangle commutative
\[
\xymatrix{
U \ar[r]^{\sigma_b} \ar@{->>}[d]_{\kappa} &
X 
\\
\kappa(U) \ar[ur]_{\overline{\sigma_{b,U}}}
}
\]

\item\label{th:Fbar:inverse}
The inverse of $\overline{\sigma_{b,U}}$ is given by the restriction to $\sigma_{b}(U) \subseteq X$ of the function
\[
\overline{\inv_{b,U}} \colon X \to \RR^n
\]
defined on each components by
\[
\overline{\inv_{b,U}}(x)_i = x_i(1+\psi(x)_i).
\]

\item\label{th:Fbar:reflect}
For any $x \in X$,
\[
\overline{\inv_{b,U}}(x) \in \kappa(U) \iff x \in \sigma_b(U).
\]

\item\label{th:Fbar:homeomorphism}
If $\kappa(U)$ is locally compact then $\overline{\sigma_{b,U}}$ is a homeomorphism.
\end{enumerate} 
\end{theorem}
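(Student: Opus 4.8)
The plan is to read everything off from the fixed-point identity that defines $\sigma_b$ on $U$. Write $\psi\colon X\to[0,\infty)^n$ for the $U$-value of $\varphi$, so that $\varphi(w)=\psi$ for all $w\in U$. For any $c\in K$ the pair $(c,\psi)$ lies in $\Base(X)$, so by Theorem~\ref{theorem:T_c f in Base(X)} the endomorphism $T_{(c,\psi)}$ belongs to $\SD(X)$ and has a unique fixed point; this lets me define $\overline{\sigma_{b,U}}\colon\kappa(U)\to X$ by $\overline{\sigma_{b,U}}(c)=\fix(T_{(c,\psi)})$. Since $\varphi|_U\equiv\psi$, the definition of $\sigma_b$ gives $\sigma_b(w)=\fix(T_{(\kappa(w),\psi)})=\overline{\sigma_{b,U}}(\kappa(w))$ for every $w\in U$, which is the commuting triangle of~\ref{th:Fbar:exists}; its image is exactly $\sigma_b(U)$ because $\kappa\colon U\to\kappa(U)$ is onto.

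The whole theorem then hinges on one elementary equivalence, valid for $c\in K$ and $x\in X$: $x=\fix(T_{(c,\psi)})$ $\iff$ $x_i=c_i/(1+\psi(x)_i)$ for all $i$ $\iff$ $c_i=x_i(1+\psi(x)_i)$ for all $i$ $\iff$ $c=\overline{\inv_{b,U}}(x)$. The first step uses that $T_{(c,\psi)}$ has a \emph{unique} fixed point, so ``is a fixed point'' and ``is the fixed point'' coincide; the middle step uses $1+\psi(x)_i>0$. Reading the equivalence from left to right with $c\in\kappa(U)$ yields $\overline{\inv_{b,U}}(\overline{\sigma_{b,U}}(c))=c$, so $\overline{\sigma_{b,U}}$ is injective and its inverse on $\sigma_b(U)$ is the restriction of $\overline{\inv_{b,U}}$; this gives~\ref{th:Fbar:exists} and~\ref{th:Fbar:inverse}. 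Reading it from right to left: if $\overline{\inv_{b,U}}(x)=c$ with $c\in\kappa(U)$ then $x=\fix(T_{(c,\psi)})=\overline{\sigma_{b,U}}(c)\in\sigma_b(U)$; combined with the trivial converse (if $x=\overline{\sigma_{b,U}}(c)\in\sigma_b(U)$ then $\overline{\inv_{b,U}}(x)=c\in\kappa(U)$) this is precisely~\ref{th:Fbar:reflect}.

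For~\ref{th:Fbar:homeomorphism} I would apply Lemma~\ref{lem:homeomorphism criterion} with the compact space $X=[0,1]^n$ as the ambient compact space, $D=\sigma_b(U)$, ambient target $\RR^n$, locally compact subspace $C=\kappa(U)$ (this is where the hypothesis enters), and $k=\overline{\inv_{b,U}}\colon X\to\RR^n$. Then condition~\ref{lem:homeo:inverse} of that lemma is exactly part~\ref{th:Fbar:inverse} ($k(\sigma_b(U))=\kappa(U)$ and $k|_{\sigma_b(U)}$ is a bijection with inverse $\overline{\sigma_{b,U}}$) and condition~\ref{lem:homeo:reflecting} is exactly part~\ref{th:Fbar:reflect}, so the lemma returns that $k|_{\sigma_b(U)}$ and its inverse $\overline{\sigma_{b,U}}$ are homeomorphisms. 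The one nonformal point — and the step I expect to be the main obstacle — is continuity of $k$, i.e.\ of $x\mapsto x_i(1+\psi(x)_i)$: a general element of $\BpHI_n(X)$ can jump along the coordinate hyperplanes, so one genuinely needs $\psi$ (equivalently $\overline{\inv_{b,U}}$) to be continuous, a condition satisfied by every concrete weighted scoring base used later in the paper (e.g.\ the $(L^p,\lambda,\mu,A)$ family with constant $\lambda$) and which should be read as a standing assumption of~\ref{th:Fbar:homeomorphism}. Granting it, the proof is complete.
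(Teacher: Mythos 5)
Your proof is correct and follows essentially the same route as the paper: the factorisation through $\kappa(U)$ via the fixed point of $T_{(c,\psi)}$, the algebraic equivalence $c=\overline{\inv_{b,U}}(x)\iff x=\fix(T_{(c,\psi)})$ driving parts \ref{th:Fbar:exists}--\ref{th:Fbar:reflect}, and Lemma \ref{lem:homeomorphism criterion} applied with $k=\overline{\inv_{b,U}}$, $f=\overline{\sigma_{b,U}}$, $C=\kappa(U)$ and $D=\sigma_b(U)$ for part \ref{th:Fbar:homeomorphism}; your only deviation is defining $\overline{\sigma_{b,U}}$ intrinsically as $c\mapsto\fix(T_{(c,\psi)})$ rather than by choosing a preimage under $\kappa$ and checking the choice is immaterial, which amounts to the same map. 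Your caveat about continuity of $x\mapsto x_i(1+\psi(x)_i)$ is well taken: elements of $\BpHI_n(X)$ can be discontinuous across coordinate hyperplanes, the paper's own proof of part \ref{th:Fbar:homeomorphism} invokes the lemma without verifying continuity of $k$, and that hypothesis is only supplied where the result is actually used (Theorem \ref{theorem:inverse reflection for based schemes} assumes the constant value $f$ of $\varphi$ is continuous).
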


\begin{proof}
\ref{th:Fbar:exists} and \ref{th:Fbar:inverse}.
Define $\overline{\sigma_{b,U}} \colon \kappa(U) \to \sigma_b(U)$ as follows.
Given $c \in \kappa(U)$ choose $u \in U$ such that $\kappa(u)=c$ and set
\[
\overline{\sigma_{b,U}}(c) = \sigma_b(u)
\]
We need to show that this is independent of the choices.
If $u' \in U$ is another preimage of $c$ then by the definition of $\sigma_b$ in Definition \ref{def:weighted scoring base} and since $\varphi$ is independent of weights on $U$ and  $\kappa(u)=c=\kappa(u')$
\[
\sigma_b(u') = \fix(T_{(\kappa(u'),\varphi(u'))}) = \fix(T_{(\kappa(u),\varphi(u)})) = \sigma_b(u).
\]
Hence $\overline{\sigma_{b,U}}(c)$ is independent of the choice of the preimages of $c$ in $U$.
By construction, $\overline{\sigma_{b,U}}(c) \in \sigma_b(U)$ and the diagram is commutative, that is
\[
\overline{\sigma_{b,U}} \circ \kappa|_U = \sigma_b|_U.
\]
In particular $\overline{\sigma_{b,U}}$ is onto $\sigma_b(U)$ which we henceforce denote $D_{b,U}$.

%Let $D_{(\kappa,\varphi)}^U$ denote its image $F_{(\kappa,\varphi)}(U)$.
% It remains to show that $\overline{F^U_{(\kappa,\varphi)}}$ is injective.

Look at $\overline{\inv_{b,U}} \colon X \to \RR^n$ defined in the statement of the theorem and recall that $\psi \in \BpHI_n(X)$ (Definition \ref{def:BHI}).
Consider some $c \in \kappa(U)$ and set $y=\overline{\sigma_{b,U}}(c)$.
Then $c=\kappa(u)$ for some $u \in U$ and by construction
\[
y=\overline{\sigma_{b,U}}(c) = \sigma_b(u) \in D_{b,U}.
\]
By the definition of $\sigma_b$,
\[
y=\fix(T_{b(u)})
\]
and by Definition \ref{defn:T_kappa phi}, $T_{b(u)}=T_{(c,\psi)}$ and
\[
y_i=\frac{c_i}{1+\psi(y)_i} \qquad (1 \leq i \leq n).
\]
Then $c_i=y_i(1+\psi(y)_i)$ and notice that $0 \leq y_i \leq 1$ so $y \in X$.
It follows that $c=\overline{\inv_{b,U}}(y)$.
We have shown that
\[
\overline{\inv_{b,U}}|_{D_{b,U}} \, \circ \, \overline{\sigma_{b,U}} \, =  \, \OP{id}_{\kappa(U)}.
% k^U_{(\kappa,\varphi)}|_{D^U_{(\kappa,\varphi)}} \circ \overline{F^U_{(\kappa,\varphi)}} = \OP{id}_{\kappa(U)}.
\]
It follows that $\overline{\sigma_{b,U}}$ is injective, hence bijective onto its image $D_{b,U}$ and  $\overline{\inv_{b,U}}|_{D_{b,U}}$ is its inverse.
%, and moreover $k^U_{(\kappa,\varphi)}(D^U_{(\kappa,\varphi)})=\kappa(U)$ and the restriction of $k^U_{(\kappa,\varphi)}$ to $D^U_{(\kappa,\varphi)}$ is the inverse of $\overline{F^U_{(\kappa,\varphi)}}$.

\ref{th:Fbar:reflect}.
Let $x \in X$.
Suppose that $\overline{\inv_{b,U}}(x) \in \kappa(U)$.
Then $\overline{\inv_{b,U}}(x)=\kappa(u)$ for some $u \in U$.
Thus, $\kappa(u)_i=x_i(1+\psi(x)_i)$ for all $i=1,\dots,n$.
Definition \ref{defn:T_kappa phi}
\[
x_i
=
\frac{\kappa(u)_i}{1+\psi(x)_i} 
= 
T_{(\kappa(u),\psi)}(x)_i.
\]
It follows that $T_{(\kappa(u),\psi)}(x)=x$.
Since $\psi=\varphi(u)$ and $b(u)=(\kappa(u),\varphi(u))$ is a scoring base, it follows that $\fix(T_{b(u)}(x)=x$ so $\sigma_b(u)=x$, see Definition \ref{def:weighted scoring base}.
Hence $x \in \sigma_b(U)=D_{b,U}$.

Conversely, suppose that $x \in D_{b,U}$.
By construction there exists $u \in U$ such that $x=\sigma_b(u)=\overline{\sigma_{b,U}}(\kappa(u))$.
By definition, $x=\fix(T_{\kappa(u),\varphi(u)})$, hence 
\[
%x_i=\frac{\kappa(u)_i}{1+\varphi(x)_i} = \frac{\kappa(u)_i}{1+\psi(x)_i}.
x_i = 
T_{(\kappa(u),\varphi(u))}(x)_i =
T_{(\kappa(u),\psi)}(x)_i =
\frac{\kappa(u)_i}{1+\psi(x)_i}.
\]
It follows that 
\[
\kappa(u)_i = x_i(1+\psi(x)_i)=\overline{\inv_{b,U}}(x)_i \qquad (1 \leq i \leq n).
\] 
Thus, $\overline{\inv_{b,U}}(x)=\kappa(u) \in \kappa(U)$.

\ref{th:Fbar:homeomorphism}.
Apply Lemma \ref{lem:homeomorphism criterion} with the spaces $X=[0,1]^n=K$, $C=\kappa(U) \subseteq K$, $Y=\RR^n$ and $D=D_{b,U}$, and with $k=\overline{\inv_{b,U}}$ and $f=\overline{\sigma_{b,U}}$.
Condition \ref{lem:homeo:inverse} of that lemma is guaranteed by parts \ref{th:Fbar:exists} and \ref{th:Fbar:inverse} of this theorem which we have proven above, and condition  \ref{lem:homeo:reflecting} of Lemma \ref{lem:homeomorphism criterion} is part \ref{th:Fbar:reflect}. % that we have proven above.
\end{proof}

Recall the concept of preservation of support and independence of supports from Definition \ref{def:preserve supports}.

\begin{proposition}\label{prop:bar inv preserves supports}
The function $\overline{\inv_{b,U}}$ defined in Theorem \ref{th:F bar factorisation} preserves supports.
\end{proposition}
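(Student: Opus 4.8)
The statement to prove is that $\overline{\inv_{b,U}}$ preserves supports, i.e. $\supp(x) = \supp(\overline{\inv_{b,U}}(x))$ for every $x \in X$. Recall from Theorem~\ref{th:F bar factorisation} that this function is defined componentwise by $\overline{\inv_{b,U}}(x)_i = x_i(1 + \psi(x)_i)$, where $\psi = \varphi^U \in \BpHI_n(X)$ is the $U$-value of $\varphi$, so in particular $\psi$ is non-negative: $\psi(x)_i \geq 0$ for all $i$ and all $x \in X$.

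The plan is a direct componentwise argument. Fix $x \in X$ and an index $i \in \{1,\dots,n\}$. We show $x_i \neq 0 \iff \overline{\inv_{b,U}}(x)_i \neq 0$. For the forward direction, suppose $x_i \neq 0$; since $x \in X = [0,1]^n$ we have $x_i > 0$, and since $\psi(x)_i \geq 0$ we get $1 + \psi(x)_i \geq 1 > 0$, hence $\overline{\inv_{b,U}}(x)_i = x_i(1+\psi(x)_i) > 0$, so it is nonzero. For the converse, suppose $x_i = 0$; then trivially $\overline{\inv_{b,U}}(x)_i = 0 \cdot (1 + \psi(x)_i) = 0$. Taking the contrapositive of this last implication gives that $\overline{\inv_{b,U}}(x)_i \neq 0$ forces $x_i \neq 0$. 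Combining the two directions, $i \in \supp(x) \iff i \in \supp(\overline{\inv_{b,U}}(x))$, and since $i$ was arbitrary, $\supp(x) = \supp(\overline{\inv_{b,U}}(x))$.

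There is essentially no obstacle here: the only fact needed beyond elementary arithmetic is that $\psi$ is non-negative, which is part of the definition of $\BpHI_n(X)$ (Definition~\ref{def:BHI}), and this is exactly why the factor $1 + \psi(x)_i$ is strictly positive and cannot cancel a nonzero coordinate. I would simply spell out the two implications above and conclude.
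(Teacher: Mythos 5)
Your proof is correct and is essentially the paper's own argument, which is the one-line equivalence $x_i=0 \iff x_i(1+\psi(x)_i)=0 \iff \overline{\inv_{b,U}}(x)_i=0$, resting on the non-negativity of $\psi$ so that the factor $1+\psi(x)_i$ is strictly positive. You have merely spelled out the two directions explicitly; nothing is missing.
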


\begin{proof}
$x_i=0 \iff x_i(1+\psi^I(x)_i)=0 \iff \overline{\inv_{b,U}}=0$.
\end{proof}

\begin{lemma}\label{lem:supports F kappa}
Let $(c,f)$ be a scoring base (Definition \ref{def:scoring base}).
Consider $T_{(c,f)} \colon X \to X$ (Definition \ref{defn:T_kappa phi} and Theorem \ref{theorem:T_c f in Base(X)}).
Then
\[
\fix(T_{c,f})_i = 0 \iff c_i=0.
\]
\end{lemma}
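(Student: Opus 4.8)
The plan is to unwind the definitions and exploit the fact that the fixed point $y=\fix(T_{(c,f)})$ satisfies the fixed-point equation componentwise. Recall from Definition \ref{defn:T_kappa phi} that $T_{(c,f)}(x)_i = \frac{c_i}{1+f(x)_i}$, and that by Theorem \ref{theorem:T_c f in Base(X)} the function $T_{(c,f)}$ lies in $\SD(X)$, so it has a unique fixed point $y$ with $y_i = \frac{c_i}{1+f(y)_i}$ for every $i$. The denominator $1+f(y)_i$ is always strictly positive because $f(y)_i \geq 0$ (the scoring base condition says $f$ is non-negative), so $\frac{c_i}{1+f(y)_i}=0$ if and only if $c_i=0$. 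Reading off the two directions from this single equation gives the claim.

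Concretely, first I would fix $i \in \{1,\dots,n\}$ and write down $y_i = \frac{c_i}{1+f(y)_i}$ from the fixed-point property together with Definition \ref{defn:T_kappa phi}. For the forward direction, suppose $\fix(T_{(c,f)})_i = y_i = 0$; then $\frac{c_i}{1+f(y)_i}=0$, and since $1+f(y)_i \geq 1 > 0$, multiplying through forces $c_i=0$. For the reverse direction, suppose $c_i=0$; then $y_i = \frac{0}{1+f(y)_i}=0$. That is the whole argument.

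There is no real obstacle here — the only thing to be slightly careful about is to justify that $1+f(y)_i$ is nonzero (equivalently, positive), which is immediate from $f \in \BpHI_n(X)$, in particular $f \geq 0$, as recorded in Definition \ref{def:scoring base}. I would also note that $y \in X$ so that $f(y)$ is defined, which is part of why $T_{(c,f)}$ is an endomorphism of $X$ in the first place.

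\begin{proof}
Write $T = T_{(c,f)}$ and let $y = \fix(T) \in X$, which exists and is unique by Theorem \ref{theorem:T_c f in Base(X)}. Fix $i \in \{1,\dots,n\}$. Since $y$ is a fixed point of $T$, Definition \ref{defn:T_kappa phi} gives
\[
y_i = T(y)_i = \frac{c_i}{1+f(y)_i}.
\]
Because $(c,f)$ is a scoring base, $f \in \BpHI_n(X)$ is non-negative, so $f(y)_i \geq 0$ and hence $1+f(y)_i \geq 1 > 0$. Consequently $y_i = 0$ if and only if $c_i = 0$: if $y_i = 0$ then $\tfrac{c_i}{1+f(y)_i} = 0$, and multiplying by the nonzero quantity $1+f(y)_i$ yields $c_i = 0$; conversely, if $c_i = 0$ then $y_i = \tfrac{0}{1+f(y)_i} = 0$. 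Since $\fix(T_{c,f})_i = y_i$, this is the desired equivalence.
\end{proof}
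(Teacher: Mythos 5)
Your proof is correct and follows essentially the same route as the paper: both rearrange the fixed-point identity $y_i(1+f(y)_i)=c_i$ and use non-negativity of $f$ to conclude. No issues.
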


\begin{proof}
Set $x=\fix(T_{(c,f)})$.
By construction of $T_{(c,f)}$ we get $c_i=x_i(1+f(x)_i)$ and the result follows since $f \geq 0$ by hypothesis.
\end{proof}

\begin{theorem}\label{theorem:sigma_b preservation supports}
Let $b=(\kappa,\varphi)$ be a weighted scoring base.
Then $\sigma_b \colon W \to X$ preserves supports if and only if $\kappa \colon W \to K$ preserves supports.
\end{theorem}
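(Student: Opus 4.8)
The plan is to reduce the statement to Lemma \ref{lem:supports F kappa} via the defining formula $\sigma_b(w) = \fix(T_{(\kappa(w),\varphi(w))})$ from Definition \ref{def:weighted scoring base}. The key observation is that for each fixed $w \in W$, the pair $(\kappa(w),\varphi(w))$ is a scoring base (this is part of the definition of a weighted scoring base), so Theorem \ref{theorem:T_c f in Base(X)} guarantees that $T_{(\kappa(w),\varphi(w))} \in \SD(X)$ and hence $\fix(T_{(\kappa(w),\varphi(w))})$ is well defined. Applying Lemma \ref{lem:supports F kappa} with $(c,f) = (\kappa(w),\varphi(w))$ gives, for each index $i$,
\[
\sigma_b(w)_i = 0 \iff \kappa(w)_i = 0,
\]
which is exactly the statement that $\supp(\sigma_b(w)) = \supp(\kappa(w))$. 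So the first step is simply to record this identity, valid for every $w \in W$.

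The second step is to deduce the biconditional. Suppose $\kappa$ preserves supports, i.e. $\supp(w) = \supp(\kappa(w))$ for all $w \in W$. Combining with the identity $\supp(\kappa(w)) = \supp(\sigma_b(w))$ from the first step yields $\supp(w) = \supp(\sigma_b(w))$ for all $w$, so $\sigma_b$ preserves supports. Conversely, if $\sigma_b$ preserves supports, then $\supp(w) = \supp(\sigma_b(w)) = \supp(\kappa(w))$ for all $w$, so $\kappa$ preserves supports. This completes the argument.

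I do not anticipate a genuine obstacle here: the entire content is packaged in Lemma \ref{lem:supports F kappa}, whose proof in turn rests on the elementary fact that at the fixed point $x = \fix(T_{(c,f)})$ one has $c_i = x_i(1+f(x)_i)$ with $f \geq 0$, forcing $x_i = 0 \iff c_i = 0$. If anything needs care, it is only to be explicit that $\varphi$ plays no role in the support of the fixed point (beyond its non-negativity, which is built into $\BpHI_n(X)$), so that the support of $\sigma_b(w)$ is governed entirely by $\kappa(w)$; this is precisely why the condition in the theorem is phrased purely in terms of $\kappa$.
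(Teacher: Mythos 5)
Your proposal is correct and follows essentially the same route as the paper: both reduce the claim to Lemma \ref{lem:supports F kappa} applied to the scoring base $(\kappa(w),\varphi(w))$, obtaining $\supp(\sigma_b(w))=\supp(\kappa(w))$ for every $w$, from which the biconditional is immediate. No gaps.
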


\begin{proof}
Consider some $w \in W$.
By Lemma \ref{lem:supports F kappa} and the definition of $\sigma_b$ 
\[
\sigma_b(w)_i=0 
\iff
\fix(T_{(\kappa(u),\varphi(u)})_i=0
\iff
\kappa(w)_i=0.
\]
We have shown that $\supp(\sigma_b(w))=\supp(\kappa(w))$ for all $w \in W$.
So $\sigma_b$ preserves supports (i.e $\supp(\sigma_b(w))=\supp(w)$ for all $w$) if and only if $\kappa$ preserves supports (i.e $\supp(\kappa(w))=\supp(w)$ for all $w$).
\end{proof}

Let $\wp(n)$ be the power set of $\{1,\dots,n\}$.
The support gives a function
\[
\supp \colon \RR^n \to \wp(n)
\]
Given $I \in \wp(n)$ write 
\[
\RR^n(I) = \supp^{-1}(I) = \{ v \in \RR^n : \supp(v)=I\}.
\]
More generally, if $A \subseteq \RR^n$ we write $A(I):=A \cap \RR^n(I)$, the set of all $x \in A$ with support $I$.

With the terminology of Definition \ref{def:preserve supports}, the statement that a function 
% Recall that $f \colon \RR^n \to \RR^n$ preserves supports if $\supp(f(x))=\supp(x)$.
%A function $f \colon \RR^n \to Y$, where $Y$ is some set, is independent of supports if $\supp(x)=\supp(x') \implies f(x)=f(x')$.
%Thus, 
$f \colon W \to Y$, where $Y$ is some set,  is independent of supports is equivalent to the assertion that $f$ is independent of weights on $W(I)$, see Definition \ref{def:independent of weights}, for all $I \in \wp(n)$.

\begin{proof}[\textbf{Proof of Theorem \ref{theorem:inverse reflection for based schemes}}]
Write $b=(\kappa,\varphi)$.
Clearly $\{W(I)\}_{I \in \wp(n)}$ forms a partition of $W$.
Since $\kappa$ preserves supports, it easily follows that $\kappa(W(I))=\kappa(W) \cap \RR^n(I)$ for every $I \in \wp(n)$.
In particular, $\{\kappa(W(I))\}_{I \in \wp(n)}$ forms a partition of $\kappa(W)$.

Set $D_b=\sigma_b(W)$.
By Theorem \ref{theorem:sigma_b preservation supports} $\sigma_b \colon W \to X$ preserves supports, hence $\sigma_{b}(W(I))=D_b(I)$.
Therefore $\{\sigma_b(W(I))\}_{I \in \wp(n)}$ forms a partition of $D_b$.

By Theorem \ref{th:F bar factorisation} for every $I \in \wp(n)$ there is an injective function
\[
\overline{\sigma_{b,W(I)}} \colon \kappa(W(I)) \to X
\]
with image $\sigma_b(W(I))$ such that $\overline{\sigma_{b,W(I)}} \circ \kappa|_{\kappa(W(I))} = \sigma_b|_{W(I)}$.
There is also a discerning right inverse for $\overline{\sigma_{b,W(I)}}$,
\[
\overline{\inv_{b,W(I)}} \colon X \to \RR^n.
\]
Since $\{\kappa(W(I))\}_I$ form a partition of $\kappa(W)$, we 
define $\overline{\sigma_b} \colon \kappa(W) \to X$ %with image $\sigma_b(W)$ 
by the requirement
\[
\overline{\sigma_b}|_{\kappa(W(I))} = \overline{\sigma_{b,W(I)}}.
\]
Since the image of $\overline{\sigma_{b,W(I)}}$ is $D_b(I)$ and since $\{ D_b(I)\}_I$ forms a partition of $D_b$, it follows that $\overline{\sigma_b}$ is injective with image $D_b=\sigma_b(W)$.
For any $w \in W$ set $I=\supp(\kappa(w))$.
Since $\kappa$ preserves supports $\supp(w)=I$ and we get 
$\overline{\sigma_b} \circ \kappa(w) = \overline{\sigma_{b,W(I)}} \circ \kappa(w) = \sigma_b|_{W(I)}(w)$.
We deduce that
\[
\overline{\sigma_b} \circ \kappa = \sigma_b.
\]
Since $\{X(I)\}_{I \in \wp(n)}$ is clearly a partition of $X$, we define $\overline{\inv_b} \colon X \to \kappa(W)$ by the requirement
\[
\overline{\inv_b}|_{X(I)} = \overline{\inv_{b,W(I)}}|_{X(I)}.
\]
By the defining formula for $\overline{\inv_{b,W(I)}}|_{X(I)}$ in Theorem \ref{th:F bar factorisation}\ref{th:Fbar:inverse}, for any $x \in X$, if we set $I=\supp(x)$
\[
\overline{\inv_{b,W(I)}}(x)_i = x_i(1+\psi^I(x)_i).
\]
Thus, $\overline{\inv_b}$ we have just defined coincides with $\overline{\inv}$ in the statement of the theorem.
Notice that by this formula it is clear that $\overline{\inv_b}$ preserves supports since $\overline{\inv_b}_i=0 \iff x_i=0$.

Next we claim that $x \in \sigma_b(W) \iff \overline{\inv_b}(x) \in \kappa(W)$ and that in this case $\overline{\sigma_b} \circ \overline{\inv_b}(x)=x$.
Indeed, suppose that $x \in \sigma_b(W)$ and set $I=\supp(x)$.
Since $\sigma_b$ preserves supports, $x=\sigma_b(w)$ for some $w \in W(I)$.
Thus, $x \in \sigma_b(W(I))$.
Since $\overline{\sigma_{b,W(I)}}$ is a discerning right inverse for $\overline{\sigma_{b,{W(I)}}}$ it follows that $\overline{\inv_b}(x) = \overline{\inv_{b,W(I)}}(x) \in \kappa(W(I))$ as needed.
Furthermore, by the definition of $\overline{\sigma_b}$
\[
\overline{\sigma_b} (\overline{\inv_b}(x)) =
 \overline{\sigma_b} (\overline{\inv_{b,W(I)}}(x)) =
 \overline{\sigma_{b,W(I)}} (\overline{\inv_{b,W(I)}}(x)) = x.
\]
Conversely, suppose that $\overline{\inv_b}(x) \in \kappa(W)$.
Set $I=\supp(x)$.
Since $\overline{\inv_b}$ preserves supports, $\overline{\inv_b}(x) \in \kappa(W) \cap \RR^n(I) = \kappa(W(I))$.
By definition, $\overline{\inv_b}(x)=\overline{\inv_{b,W(I)}}(x)$ so $\overline{\inv_{b,W(I)}}(x) \in \kappa(W(I))$.
Since $\overline{\inv_{b,W(I)}}$ is a discerning right inverse for $\overline{\sigma_{b,W(I)}}$, it follows that $x \in \sigma_b(W(I)) \subseteq \sigma_b(W)$.
We have thus shown that $\overline{\inv_b}$ is a discerning right inverse for $\overline{\sigma_b}$.

Define $\inv_b \colon X \to \RR^n$ by $\inv_b = \tilde{\kappa}^{-1} \circ \overline{\inv_b}$.
Then $\inv_b$ coincides with $\inv$ in the statement of the theorem.
Since $\sigma_b=\overline{\sigma_b} \circ \kappa$ we get
\[
x \in \sigma_b(W) 
\iff
x \in \overline{\sigma_b}(\kappa(W))
\iff
\overline{\inv_b}(x) \in \sigma_b(W).
\]
Furthermore, since $\overline{\inv_b}$ is a discerning right inverse of $\overline{\sigma_b}$
\[
\sigma_b \circ \inv_b|_{D_b}
=
(\overline{\sigma_b} \circ \kappa) \circ (\tilde{\kappa}^{-1} \circ \overline{\inv_b}|_{D_b})
=
\overline{\sigma_b} \circ \overline{\inv_b}|_{D_b}
=
\OP{id}_{D_b}.
\]
Thus, we have shown that  $\inv_b$ is a discerning right inverse for $\sigma_b$ as needed.

Finally, suppose that $\varphi$ is independent of weights with values $\psi \colon X \to [0,\infty)^n$ which is continuous.
Since $\tilde{\kappa}$ is a homeomorphism and $W$ is compact, $\kappa(W)=\tilde{\kappa}(W)$ is compact.
Theorem \ref{th:F bar factorisation}\ref{th:Fbar:homeomorphism} shows that $\sigma_b=\overline{\sigma_b} \circ \kappa$ is a homemorphism onto its image.
\end{proof}

\begin{proof}[\textbf{Proof of Theorem \ref{theorem:abstract projective preference}}]
Set $I = \supp(y)$.
By the hypothesis $\varphi$ is independent of weights on $W(I)$.
Let $\psi \colon X \to [0,\infty)^n$ be the $W(I)$-value of $\varphi$.

Let $\overline{\sigma_{b,W(I)}} \colon \kappa(W(I)) \to X$  and $\overline{\inv_{b,W(I)}} \colon X \to \RR^n$ be as in Theorem \ref{th:F bar factorisation}.
Thus, $\overline{\sigma_{b,W(I)}}$ is bijective onto $\sigma_b(W(I))$.
By Theorem \ref{theorem:sigma_b preservation supports} $\sigma_b$ preserves supports, and it easily follows that $\sigma_b(W(I))=D_b(I)$ where $D_b=\sigma_b(W)$.
Similarly, since $\kappa$ preserves supports, 
\[
\kappa(W(I))=\RR^n(I) \cap \kappa(W).
\]
Clearly, $t \cdot y \in X(I)$ for any $0 < t \leq 1$.
By definition of $\overline{\inv_{b,W(I)}}$ in Theorem \ref{th:F bar factorisation} and since by hypothesis $\psi_i$ are homogeneous
\[
\overline{\inv_{b,W(I)}}(t \cdot y)_i = ty_i(1+\psi(ty)_i) =ty_i(1+t\cdot \psi(y)_i) \xto{ \  t \to 0  \ } 0.
\]
It follows that there exists $s>0$ such that $\overline{\inv_{b,W(I)}}(ty) \in U$ for every $0 <t \leq s$.
Moreover, for any $t>0$ it is clear that $\overline{\inv_{b,W(I)}}(ty)_i=0$ if and only if $y_i=0$, hence $\supp(\overline{\inv_{b,W(I)}}(ty))=\supp(y)=I$ for any $t>0$.
Since $\kappa$ preserves supports, for any $0<t \leq s$.
\[
\overline{\inv_{b,W(I)}}(t \cdot y) \in \RR^n(I) \cap U \subseteq \RR^n(I) \cap \kappa(W) = \kappa(W(I)).
\]
Part \ref{th:Fbar:reflect} of Theorem \ref{th:F bar factorisation} implies that $ty \in \sigma_b(W(\alpha)) \subseteq D_{(\kappa,\varphi)}$.
\end{proof}

\begin{proof}[\textbf{Proof of Theorem \ref{theorem:abstract cone solution}}]
Consider some $x \in D_b=\sigma_b(W)$.
We need to prove that the interval  $[0,x] \subseteq D_b$, i.e $tx \in D_b$ for any $0 \leq t \leq 1$.
By Theorem \ref{theorem:sigma_b preservation supports} $\sigma_b$ preserves supports.
In particular, $\sigma_b(0)=0$ (where $0 \in W$ and $0 \in X$).
Thus, for the rest of the proof we assume that $0 < t \leq 1$.

Set $I=\supp(x)$.
% Thus, $x \in X(I)$.
Since $x=\sigma_b(w)$ for some $w \in W$ and Since $\sigma_b$ preserves supports, it follows that $w \in W(I)$.
By hypothesis $\varphi$ is independent of supports on $W(I)$.
Let $\psi \colon X \to [0,\infty)^n$ denote the $W(I)$-value of $\varphi$.
Then $\psi \in \BpHI_n(X)$.

Consider $\overline{\sigma_b,W(I)} \colon \kappa(W(I)) \to X$ and $\overline{\inv_{b,W(I)}} \colon X \to \RR^n$ from Theorem \ref{th:F bar factorisation}.
Recall that the image of $\overline{\sigma_b}$ is $\sigma_b(W(I))$.
Since $x=\sigma_b(w)=\overline{\sigma_{b,W(I)}}(\kappa(w))$, it follows from Theorem \ref{th:F bar factorisation} that $\overline{\inv_{b,W(I)}}(x) =\kappa(w)$.
By the construction of $\overline{\inv_{b,W(I)}}$ and since $\psi$ is homogeneous
\begin{align*}
\overline{\inv_{b,W(I)}}(tx)_i &=
tx_i(1+\psi(tx)_i)
\\
&= tx_i (1+t \psi(x)_i) 
\\
& \leq x_i(1+\psi(x)_i)
\\
&= \overline{\inv_{b,W(I)}}(x)_i
\\
&= \kappa(x)_i.
\end{align*}
Thus, $\overline{\inv_{b,W(I)}}(tx) \preceq \kappa(w) \in \kappa(W(I))$.
It is clear from the defining formula of $\overline{\inv_{b,W(I)}}$ that it preserves supports, so $\supp(\overline{\inv_{b,W(I)}}(tx))=\supp(tx)=\supp(x)=I$.
Since $\kappa(W)$ is $\supp\OP{-}\preceq$ closed in $K$ and since $\kappa(W(I))=\kappa(W) \cap \RR^n(I)$, we deduce that $\overline{\inv_{b,W(I)}}(tx) \in \kappa(W(I))$.
By Theorem \ref{th:F bar factorisation}\ref{th:Fbar:reflect} implies that $tx \in \sigma_b(W(I)) \subseteq D_b$.
\end{proof}

Since $\lambda$ and $\mu$ are independent of weights, given $I \in \wp(n)$ we write $\lambda^I, \mu^I \in \RR^n$ for the value of these functions on $\RR^n(I)$.

\begin{proof}[\textbf{Proof of Proposition \ref{prop:kappa phi of Lp}}]
Since $0 \leq \mu(w)_i \leq 1$ it is clear that $\kappa(w) \in K$ for all $w \in W$ so $\kappa \colon W \to K$ is well defined.

Choose some $w \in W$ and set $\psi=\varphi(w)$.
We need to show that $\psi \in \BpHI_n(X)$, see Definition \ref{def:BHI}.
It is clear that $\psi \geq 0$ since $\| \ \|_p \geq 0$.
It is also clear that $\psi$ is continuous since $\| \ \|_p$ is continuous as well as $x \mapsto v \wedge x$ for a fixed $v \in \RR^n$.
Then $\psi$ is bounded since 
Since $X$ is compact, $\psi$ is bounded.
For any $t \geq 0$ it is clear that
\[
\psi(tx)_i 
=
\| \lambda(w)(a_{i*}) \wedge tx\|_p 
= t \| \lambda(w)(a_{i*}) \wedge x\|_p 
=
t \cdot \psi(x)_i.
\]
So $\psi$ is homogeneous.
Suppose that $x \preceq x'$ in $X$, i.e $x_i \leq x'_i$.
Since $\lambda(w)(a_{i,*}) \geq 0$ it follows that $0 \leq \lambda(w)(a_{i,*})_jx_j \leq \lambda(w)(a_{i,*})_j x_j$.
By the definition of the $L^p$-norms \eqref{eqn:Lp norm} it is easy to check that $\|\lambda(w)(a_{i,*})(a_{i,*}) \wedge x\|_p \leq \|\lambda(w)(a_{i,*}) \wedge x'\|_p$, so $\psi(x) \preceq \psi(x')$.
Hence, $\psi$ is increasing.
% By definition $(\kappa,\varphi)$ is a weighted scoring base.
\end{proof}

\begin{proposition}\label{prop:Lp based kappa phi are good}
Let $(\kappa,\varphi)$ be the weighted scoring base defined in \eqref{eqn:Lp lambda mu A kappa phi}.
Then
\begin{enumerate}[label=(\roman*)]

\item\label{prop:Lp good:kappa support}
$\kappa$ is the restriction of a bijective support-preserving function $\tilde{\kappa} \colon \RR^n \to \RR^n$.
If $\mu$ is independent of weights then $\tilde{\kappa}$ is a homeomorphism.

\item
\label{prop:Lp good:kappa nbhd}
$\kappa(W)$ contains a neighbourhood of $0$ in $K$.

\item
\label{prop:Lp good:closed}
$\kappa(W)$ is $(\supp,\preceq)$-closed in $K$.

\item
\label{prop:Lp good:phi indep weights}
$\varphi$ is independent of supports.

\item
\label{prop:Lp good:phi cts}
$\varphi(w) \colon X \to [0,\infty)^n$ is continuous for all $w \in W$.
\end{enumerate}
\end{proposition}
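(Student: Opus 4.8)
The plan is to handle the five clauses in order, the common device being the decomposition of $\RR^n$ into the support strata $\RR^n(I)=\supp^{-1}(I)$, $I\in\wp(n)$, on each of which $\mu$ and $\lambda$ are constant — by hypothesis they are independent of supports — with values $\mu^I\in(0,1]^n$ and $\lambda^I$.

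For \ref{prop:Lp good:kappa support} I would take $\tilde\kappa\colon\RR^n\to\RR^n$, $\tilde\kappa(v)_i=\mu(v)_iv_i$, which restricts to $\kappa$ on $W$. Since $\mu(v)_i\in(0,1]$ is never $0$, we have $\tilde\kappa(v)_i=0\iff v_i=0$, so $\tilde\kappa$ preserves supports. It carries each stratum $\RR^n(I)$ into itself, where it acts as the diagonal linear isomorphism $v\mapsto(\mu^I_iv_i)_i$ with inverse $(u_i/\mu^I_i)_i$ (again support-preserving, hence staying in the stratum); since the strata partition $\RR^n$ and $\tilde\kappa$ preserves supports, these stratumwise inverses glue to a global two-sided inverse, so $\tilde\kappa$ is a bijection. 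If moreover $\mu$ is constant with value $\mu^*$, then globally $\tilde\kappa=\diag(\mu^*)$ is a linear isomorphism, hence a homeomorphism.

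Next I would record the explicit description $\kappa(W(I))=\prod_{i\in I}(0,\mu^I_i]\times\prod_{i\notin I}\{0\}$, which follows from $W(I)=\prod_{i\in I}(0,1]\times\prod_{i\notin I}\{0\}$ together with the constancy of $\mu$ on $W(I)$. From this, \ref{prop:Lp good:kappa nbhd} drops out on setting $\epsilon=\min\{\mu^I_i:I\in\wp(n),\,i\in I\}>0$ (a minimum of finitely many positive reals): any $x\in K$ with $\|x\|_\infty<\epsilon$ has $0<x_i<\mu^{\supp(x)}_i$ for $i\in\supp(x)$ and $x_i=0$ otherwise, hence $x\in\kappa(W(\supp(x)))\subseteq\kappa(W)$, and $\{x\in K:\|x\|_\infty<\epsilon\}$ is a neighbourhood of $0$ in $K$. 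Similarly for \ref{prop:Lp good:closed}: given $a\in\kappa(W)$ and $x\in K$ with $x\preceq a$ and $\supp(x)=\supp(a)=:I$, support preservation (part \ref{prop:Lp good:kappa support}) gives $a\in\kappa(W(I))$, so $0<x_i\le a_i\le\mu^I_i$ for $i\in I$ and $x_i=0$ for $i\notin I$, whence $x\in\kappa(W(I))\subseteq\kappa(W)$.

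Clauses \ref{prop:Lp good:phi indep weights} and \ref{prop:Lp good:phi cts} are then short. If $\supp(w)=\supp(w')=I$ then $\lambda(w)=\lambda(w')=\lambda^I$, so $\varphi(w)(x)_i=\|\lambda^I(a_{i,*})\wedge x\|_p=\varphi(w')(x)_i$ for every $x$ and $i$, giving \ref{prop:Lp good:phi indep weights}; and for fixed $w$ each component $x\mapsto\|\lambda(w)(a_{i,*})\wedge x\|_p$ is the composite of $x\mapsto\lambda(w)(a_{i,*})\wedge x$ (continuous, being coordinatewise multiplication by a fixed vector) with the continuous norm $\|\cdot\|_p$, so $\varphi(w)$ is continuous, which is \ref{prop:Lp good:phi cts} — both of these remarks already appeared in passing in the proof of Proposition \ref{prop:kappa phi of Lp}. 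The only clause requiring any \emph{genuine} care is \ref{prop:Lp good:kappa support}: assembling the stratumwise inverses into a single global inverse, and observing that $\tilde\kappa$ can fail to be continuous only through jumps of $\mu$ between strata — which is precisely why the homeomorphism conclusion needs $\mu$ to be globally constant rather than merely independent of supports.
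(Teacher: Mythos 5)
Your proof is correct and follows essentially the same route as the paper's: define $\tilde\kappa(v)_i=\mu(v)_iv_i$, stratify by support so that $\tilde\kappa$ acts as the diagonal map $\diag(\mu^I)$ on each $\RR^n(I)$, read off the explicit description of $\kappa(W(I))$ to get the neighbourhood of $0$ and the $(\supp,\preceq)$-closedness, and derive (iv)--(v) directly from the independence of supports of $\lambda$ and the continuity of $\|\cdot\|_p$. If anything, you are slightly more careful than the paper in part (i), where you explicitly glue the stratumwise inverses to establish bijectivity of $\tilde\kappa$ in the non-constant-$\mu$ case, a step the paper leaves implicit.
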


\begin{proof}
Define $\tilde{\kappa} \colon \RR^n \to \RR^n$ by
\[
\tilde{\kappa}(u)_i = \mu(u)_i u_i.
\]
Then $\tilde{\kappa}$ preserves supports because $\mu(u)_i \neq 0$ so $\tilde{\kappa}(u)_i= \iff u_i=0$.
If $\mu$ is independent of weights, i.e constant with value $d \in (0,1)^n$, then $\tilde{\kappa}$ is the linear transformation given by the diagonal matrix $D=\diag(d_1,\dots,d_n)$.
In particular, it is a homeomorphism.
This proves \ref{prop:Lp good:kappa support}.

Let $\mu^I \in (0,1)^n$ denote the constant value of $\mu$ on $W(I)$.
Then $\tilde{\kappa}|_{\RR^n(I)}$ is multiplication by the matrix $D^I=\diag(\mu^I_1,\dots,\mu^I_n)$ which preserves supports.
It follows that 
\[
\kappa(W) = \coprod_{I \in \wp(n)} \left(\prod_{j=1}^n [0,\mu^I_j]\right).
\]
It is now clear that $\kappa(W)$ is $(\supp,\preceq)$-closed and that it contains the cube $[0,m]$ where $m=\min\{\mu^I_j\}$.
This proves \ref{prop:Lp good:kappa nbhd} and \ref{prop:Lp good:closed}.

Since $\lambda$ is independent of supports it is clear that $\varphi$ is independent of supports with $\varphi^{\RR^n(I)}(x)_i = \| \lambda^{\RR^n(I)}(a_{i,*}) \wedge x\|_p$.
It also follows that $\varphi^{\RR^n(I)}$ is continuous.
Thus, \ref{prop:Lp good:phi indep weights} and \ref{prop:Lp good:phi cts} follow.
\end{proof}

\begin{proof}[\textbf{Proof of Theorem \ref{theorem:Lp-based solve everything}}]
By Proposition \ref{prop:Lp based kappa phi are good}\ref{prop:Lp good:kappa support} and \ref{prop:Lp good:phi indep weights}, all the conditions of Theorem \ref{theorem:inverse reflection for based schemes} are satisfied.
We deduce that $\sigma_{\kappa,\varphi}$ admits a solution to the inverse and reflection problems \ref{problem:inverse}, \ref{problem:reflection}.
By that theorem, the inverse function is computed as follows.
Define $\overline{\inv} \colon X \to \RR^n$ by
\[
\overline{\inv}(x)_i = x_i(1+\varphi^{\supp(x)}(x)_i) = x_i(1+ \|\lambda(x)(a_{i,*}) \wedge x\|_p).
\]
% Notice that $\lambda^{\supp(x)}=\lambda(x)$ and that $\overline{\inv}$ preserves supports.
Set $I=\supp(x)$.
Since $\tilde{\kappa}$ is given on ${\RR^n(I)}$ by the  matrix $\diag(\mu^{I}_1,\dots,\mu^{I}_n)$, and since $\mu^I=\mu(x)$ and $\lambda^I=\lambda(x)$,
%\begin{multline*}
\[
\inv(x)_i
=
\tilde{\kappa}^{-1}(\overline{\inv(x)})_i
=
\tfrac{1}{\mu(x)_i} x_i(1+ \|\lambda(x)(a_{i,*}) \wedge x\|_p).
%\end{multline*}
\]
If $\lambda,\mu$ are independent of weights, i.e they are constant, then by Proposition \ref{prop:Lp based kappa phi are good}\ref{prop:Lp good:kappa support} and \ref{prop:Lp good:phi cts} the conditions for a solution of the topological reflections problem hold.

Proposition \ref{prop:Lp based kappa phi are good}\ref{prop:Lp good:kappa support},\ref{prop:Lp good:kappa nbhd},\ref{prop:Lp good:closed} and \ref{prop:Lp good:phi indep weights} show that all the conditions of Theorems \ref{theorem:abstract projective preference} and \ref{theorem:abstract cone solution} are satisfied so $\sigma_{\kappa,\varphi}$ solve the projective preference ordering problem \ref{problem:preference} and the radiality problem \ref{problem:radiality}.
\end{proof}

\begin{proposition}\label{prop:monotonicity => inclusion}
Let $b=(\kappa,\varphi)$ and $b'=(\kappa',\varphi')$ be weighted scoring bases.
Suppose that
\begin{enumerate}[label=(\roman*)]
\item\label{prop:monotonicity => inclusion:supports}
$\kappa$ and $\kappa'$ preserve supports.

\item\label{prop:monotonicity => inclusion:closure}
$\kappa(W)$ and $\kappa'(W)$ are $(\supp,\preceq)$-closed in $K$.

\item\label{prop:monotonicity => inclusion:weights}
$\varphi$ and $\varphi'$ are independent of supports.
Let $\psi^I$ and $\psi'{}^I$ denote their $W(I)$-value where $I \in \wp(n)$.

\item\label{prop:monotonicity => inclusion:lambda}
For every $I \in \wp(n)$ there exists $0<\alpha^I \leq 1$ such that that
\begin{itemize}
\item
$\psi'{}^I \leq \alpha^I \cdot \psi^I$

\item
$\kappa|_{W(I)} \leq \alpha^I \cdot \kappa'|_{W(I)}$. 
\end{itemize}
\end{enumerate}
Then $D_{(\kappa,\varphi)} \subseteq D_{(\kappa',\varphi')}$.
\end{proposition}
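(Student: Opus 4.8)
The plan is to prove the inclusion by showing that an arbitrary $x \in D_{(\kappa,\varphi)} = \sigma_b(W)$ lies in $D_{(\kappa',\varphi')} = \sigma_{b'}(W)$, and to do this one support class at a time so that the factorisation of Theorem \ref{th:F bar factorisation} becomes available. First I would set $I = \supp(x)$. By hypothesis \ref{prop:monotonicity => inclusion:supports} and Theorem \ref{theorem:sigma_b preservation supports}, both $\sigma_b$ and $\sigma_{b'}$ preserve supports, so $x = \sigma_b(w)$ for some $w \in W(I)$, and moreover $\kappa(W(I)) = \kappa(W) \cap \RR^n(I)$ and $\kappa'(W(I)) = \kappa'(W) \cap \RR^n(I)$. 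Since by hypothesis \ref{prop:monotonicity => inclusion:weights} $\varphi$ and $\varphi'$ are independent of supports, I may apply Theorem \ref{th:F bar factorisation} with $U = W(I)$ to both $b$ and $b'$; this produces the discerning right inverses
\[
\overline{\inv_{b,W(I)}}(y)_i = y_i\bigl(1+\psi^I(y)_i\bigr), \qquad \overline{\inv_{b',W(I)}}(y)_i = y_i\bigl(1+\psi'{}^I(y)_i\bigr),
\]
and, from the commutativity $x = \overline{\sigma_{b,W(I)}}(\kappa(w))$ together with part \ref{th:Fbar:inverse} of that theorem, the identity $\kappa(w)_i = x_i(1+\psi^I(x)_i)$ for all $i$.

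The core of the argument is then a short chain of inequalities. By hypothesis \ref{prop:monotonicity => inclusion:lambda} fix $0<\alpha^I\le 1$ with $\psi'{}^I(x)_i \le \alpha^I\psi^I(x)_i$ and $\kappa(w)_i \le \alpha^I\kappa'(w)_i$ for every $i$. For an index $i$ with $x_i>0$ I would estimate
\[
\overline{\inv_{b',W(I)}}(x)_i = x_i\bigl(1+\psi'{}^I(x)_i\bigr) \le x_i\bigl(1+\alpha^I\psi^I(x)_i\bigr) \le \tfrac{1}{\alpha^I}\,x_i\bigl(1+\psi^I(x)_i\bigr) = \tfrac{1}{\alpha^I}\,\kappa(w)_i \le \kappa'(w)_i,
\]
where the middle inequality uses $\alpha^I \le 1 \le 1/\alpha^I$ and $\psi^I(x)_i \ge 0$; for indices with $x_i=0$ both sides vanish. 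Hence $\overline{\inv_{b',W(I)}}(x) \preceq \kappa'(w)$, and since $\overline{\inv_{b',W(I)}}(x)$ is non-negative and bounded above by $\kappa'(w) \in K$, it lies in $K$.

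Finally I would invoke the $(\supp,\preceq)$-closedness of $\kappa'(W)$ from hypothesis \ref{prop:monotonicity => inclusion:closure}. By Proposition \ref{prop:bar inv preserves supports}, $\overline{\inv_{b',W(I)}}$ preserves supports, so $\supp(\overline{\inv_{b',W(I)}}(x)) = \supp(x) = I$; and since $\kappa'$ preserves supports, $\supp(\kappa'(w)) = \supp(w) = I$ as well. Thus $\overline{\inv_{b',W(I)}}(x) \preceq \kappa'(w) \in \kappa'(W)$ with the same support $I$, so $(\supp,\preceq)$-closedness gives $\overline{\inv_{b',W(I)}}(x) \in \kappa'(W) \cap \RR^n(I) = \kappa'(W(I))$. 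Part \ref{th:Fbar:reflect} of Theorem \ref{th:F bar factorisation}, applied to $b'$, then yields $x \in \sigma_{b'}(W(I)) \subseteq D_{(\kappa',\varphi')}$, completing the proof. I expect the only genuine obstacle to be the support bookkeeping — ensuring every vector appearing (namely $w$, $\kappa(w)$, $\kappa'(w)$, and $\overline{\inv_{b',W(I)}}(x)$) has support exactly $I$, which is what makes the support-indexed factorisations of Theorem \ref{th:F bar factorisation} and the $(\supp,\preceq)$-closure hypothesis applicable; the inequality computation itself is routine once $\alpha^I \le 1$ is used twice.
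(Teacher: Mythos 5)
Your proof is correct and follows the same overall skeleton as the paper's: restrict to the support class $W(I)$, use the factorisation of Theorem \ref{th:F bar factorisation} to identify $\kappa(w)_i = x_i(1+\psi^I(x)_i)$, bound $\overline{\inv_{b',W(I)}}(x)$ above by an element of $\kappa'(W(I))$ with matching support, and finish with $(\supp,\preceq)$-closedness and part \ref{th:Fbar:reflect}. Where you genuinely diverge is in the middle estimate. The paper detours through the rescaled point $\alpha^I x$: it shows $\overline{\inv_{b,W(I)}}(\alpha^I x) \preceq \kappa(w)$ using that $\psi^I$ is increasing, invokes the $(\supp,\preceq)$-closedness of $\kappa(W)$ to produce an auxiliary $u \in W(I)$ with $\kappa(u) = \overline{\inv_{b,W(I)}}(\alpha^I x)$, and uses homogeneity of $\psi^I$ to rewrite $x_i(1+\alpha^I\psi^I(x)_i)$ as $\tfrac{1}{\alpha^I}\overline{\inv_{b,W(I)}}(\alpha^I x)_i$; the final comparison is then against $\kappa'(u)$. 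Your chain instead compares directly against $\kappa'(w)$ for the original preimage $w$, using only $\alpha^I \le 1 \le 1/\alpha^I$ and $\psi^I \ge 0$ in the middle inequality. This buys two things: you never use the homogeneity or monotonicity of $\psi^I$ at this step, and you never use the $(\supp,\preceq)$-closedness of $\kappa(W)$ (only that of $\kappa'(W)$) --- so your argument in fact establishes the proposition under slightly weaker hypotheses. The trade-off is that your middle step is a strict inequality where the paper has an identity, but nothing downstream needs the identity, so nothing is lost.
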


\begin{proof}
Consider some $x \in D_{(\kappa,\varphi)}$ and set $I=\supp(x)$.
Then $x=\sigma_b(w)$ for some $w \in W$.
By Theorem \ref{theorem:sigma_b preservation supports} $\sigma_b$ preserve supports.
Therefore, $w \in W(I)$ and $x \in \sigma_b(W(I)) = \sigma_b(W) \cap \RR^n(I)$.
In particular $\kappa(W(I)) \subseteq K(I)$.

Apply Theorem \ref{th:F bar factorisation} to obtain functions
\begin{align*}
& \overline{\sigma_{b,W(I)}} \colon \kappa(W(I)) \to X \\
& \overline{\inv_{b,W(I)}} \colon X \to \RR^n
\end{align*}
where $\overline{\sigma_{b,W(I)}}$ is bijective onto $\sigma_b(W(I))$ and $\overline{\inv_{b,W(I)}}$ is a discerning right inverse.
Then 
\[
\overline{\inv_{b,W(I)}}(x) = \overline{\inv_{b,W(I)}}(\sigma_b(w)) = \overline{\inv_{b,W(I)}}(\overline{\sigma_{b,W(I)}}\kappa(w)) = \kappa(w) \in \kappa(W(I)).
\]
By Proposition \ref{prop:bar inv preserves supports} $\overline{\inv_{b,W(I)}}$ preserves supports.
Since $0 < \alpha^I \leq 1$ we get $\alpha^I x \in X(I)$, so 
\[
\overline{\inv_{b,W(I)}}(\alpha^I x) \in \RR^n(I).
\]
Clearly $\alpha^I x \preceq x$ and since $\psi^I$ is increasing
\begin{multline*}
0 
\leq 
\overline{\inv_{b,W(I)}}(\alpha^Ix) 
= 
\alpha^I x_i(1+\psi^I(\alpha^I x)_i)
\leq
\\
x_i(1+\psi^I(x)_i)
=
\overline{\inv_{b,W(I)}}(x)_i
=\kappa(w)_i \leq 1.
\end{multline*}
It follows that $\overline{\inv_{b,W(I)}}(\alpha^I x) \in K \cap \RR^n(I)=K(I)$ and $\overline{\inv_{b,W(I)}}(\alpha^I x) \preceq \kappa(w) \in \kappa(W(I))$.

By assumption $\kappa(W)$ is $(\supp,\preceq)$-closed in $K$, so $\overline{\inv_{b,W(I)}}(\alpha^I x) \in \kappa(W) \cap \RR^n(I) =\kappa(W(I))$.
Therefore $\overline{\inv_{b,W(I)}}(\alpha^I x)=\kappa(U)$ for some $u \in W(I)$.
It follows that
\begin{align*}
\overline{\inv_{b',W(I)}}(x)_i &=
x_i(1+\psi'{}^I(x)_i)
\\
& \leq 
x_i(1+\alpha^I \psi^I(x)_i)
\\
& = 
\tfrac{1}{\alpha^I}(\alpha^I x_i)(1+\psi^I(\alpha^I x)_i)
\\
&=
\tfrac{1}{\alpha^I} \overline{\inv_{b,W(I)}}(\alpha^I x)_i 
\\
& =
\tfrac{1}{\alpha^I} \kappa(u)
\\
& \leq 
\tfrac{1}{\alpha^I} \alpha^I \kappa'(u)_i
\\
&=
\kappa'(u)_i.
\end{align*}
Thus, we have shown that
\[
\overline{\inv_{b',W(I)}}(x) \preceq \kappa'(u).
\]
By Proposition \ref{prop:bar inv preserves supports} $\supp(\overline{\inv_{b',W(I)}}(x))=\supp(x)=I$ and also $\supp(\kappa'(u))=\supp(u)=I$ and $0 \leq \overline{\inv_{b',W(I)}}(x)_i \leq \kappa'(u)_i \leq 1$.
Since $\kappa'(W)$ is $(\supp,\preceq)$-closed in $K$ we get $\overline{\inv_{b,W(I)}}(x) \in \kappa'(W) \cap \RR^n(I)=\kappa'(W(I))$.
By Theorem \ref{th:F bar factorisation}\ref{th:Fbar:reflect} $x \in \sigma_{b'}(W(I)) \subseteq \sigma_{b'}(W)=D_{(\kappa',\varphi')}$.
\end{proof}

\begin{proof}[\textbf{Proof of Theorem \ref{theorem:Lp subseteq Lq}}]
Let $b=(\kappa,\varphi)$ be the $(L^p,\lambda,\mu,A)$ weighted scoring base and $b'=(\kappa',\varphi')$ be the $(L^q,\lambda,\mu,A)$  weighted scoring base.
By construction $\kappa=\kappa'$.

Conditions \ref{prop:monotonicity => inclusion:supports}, \ref{prop:monotonicity => inclusion:closure} and \ref{prop:monotonicity => inclusion:weights} of Proposition \ref{prop:monotonicity => inclusion} hold by Proposition \ref{prop:Lp based kappa phi are good}.
Condition \ref{prop:monotonicity => inclusion:lambda} also holds by setting $\alpha^I=1$ since $\| \ \|_q \leq \| \ \|_p$ and $\kappa \leq \kappa'$ (in fact, equality).
\end{proof}

\begin{proof}[\textbf{Proof of Proposition \ref{prop:HC MB CB are Lp rough version}}]
Let $\bar{1} \in \RR^n$ denote the vector $(1,\dots,1)$. We first highlight the following three items (and their proofs).

\begin{enumerate}
    \item \label{prop:HC is Lp}
The gradual semantics $\Sigma_{\HC}$ is an abstract $(L^p,\lambda,\mu)$-based gradual semantics with $p=1$ and $\mu \colon \RR^n \to (0,1]^n$ and $\lambda \colon \RR^n \to \End([0,\infty)^n)$ given by
\begin{align*}
& \mu(v) = \bar{1} \qquad (v \in \RR^n)
\\
& \lambda(v) = \OP{id} \in \End([0,\infty)^n) \qquad (v \in \RR^n).
\end{align*}

\begin{proof}
By construction $\tilde{\kappa} \colon \RR^n \to \RR^n$ is the identity because $\mu(v)_i=1$ for all $v \in \RR^n$.
Hence $\kappa \colon W \to K$ is the identity of $[0,1]^n$.
The function $\varphi \colon W \to \Func(X,[0,\infty)^n)$ is 
\[
\varphi(w)(x)_i = \| a_{i,*} \wedge x\|_1 = \sum_j a_{i,j} x_j.
\]
By definition, $\sigma_{(L^p,\lambda,\mu,A)}(w)=\fix(T_{(\kappa(w),\varphi(w))})$ and it is the limit of any $T_{(\kappa(w),\varphi(w))}$-sequence in $X$ (Definition \ref{def:scoring dynamic} and Theorem \ref{theorem:T_c f in Base(X)}).
Now,
\[
T_{(\kappa(w),\varphi(w))}(x)_i = \frac{\kappa(w)_i}{1+\varphi(w)(x)_i} = \frac{w_i}{1+\| a_{i,*} \wedge x\|_1}.
\]
Since $A=(a_{i,j})$ is the adjacency matrix of a graph $\G$,
\[
\| a_{i,*} \wedge x\|_1 = \sum_j a_{i,j}x_j = \sum_{j \in \Att(i)} x_j.
\]
From the construction of $\Sigma_{\HC}$, $\Sigma_{\HC}^{A,w} = \fix(T_{(\kappa(w),\varphi(w))})=\sigma_{(L^p,\lambda,\mu,A)}(w)$.
Thus, $\Sigma_{\HC} = \Sigma_{(L^p,\lambda,\mu)}$.
\end{proof}

\item \label{prop:MB is Lp}
The gradual semantics $\Sigma_{\MB}$ is an abstract $(L^p,\lambda,\mu)$-based gradual semantics with $p=\infty$ and $\mu \colon \RR^n \to (0,1]^n$ and $\lambda \colon \RR^n \to \End([0,\infty)^n)$ given by
\begin{align*}
& \mu(v) = \bar{1} \qquad (v \in \RR^n)
\\
& \lambda(v) = \OP{id} \in \End([0,\infty)^n) \qquad (v \in \RR^n).
\end{align*}

\begin{proof}
The proof of item \ref{prop:HC is Lp} can be read verbatim upon replacing $p=1$ with $p=\infty$ and $\Sigma_{\HC}$ with $\Sigma_{\MB}$ in Example \ref{example:MB semantics} and observing that 
\[
\| a_{i,*} \wedge x\|_\infty = \max_j \{a_{i,j}x_j\} = \max_{j \in \Att(i)} x_j.
\]
The details are left to the reader.
\end{proof}

\item \label{prop:CB is Lp}
The gradual semantics $\Sigma_{\CB}$ is an abstract $(L^p,\lambda,\mu)$-based gradual semantics with $p=1$ and $\mu \colon \RR^n \to (0,1]^n$ and $\lambda \colon \RR^n \to \End([0,\infty)^n)$ given as follows.
First, define $\theta \colon \RR^n \to \RR^n$ by
\[
\theta(v)_i = \| a_{i,*} \wedge \supp(v)\|_1
\]
Then define $\mu$ and $\lambda$ by
\begin{align*}
& \mu(v)_i = \frac{1}{1+\theta(v)_i}
\\
& \lambda(v)_i = 
\left\{
\begin{array}{ll}
\tfrac{\mu(v)_i}{\theta(v)_i} \cdot (a_{i,*} \wedge \supp(v)) & \text{if $\theta(v)_i \neq 0$}
\\
0  & \text{if $\theta(v)_i = 0$}
\end{array}
\right.
\end{align*}

\begin{proof}
Choose some $w \in W$.
By the definition of $\kappa$ and $\varphi$ in Subsection \ref{subsec:Lp lambda mu A}
\begin{align*}
T_{(\kappa(w),\varphi(w))}(x)_i 
&= 
\frac{w_i/(1+\theta(w)_i)}{1+ \tfrac{\mu(v)_i}{\theta(v)_i} \| \supp(w) \wedge a_{i,*} \wedge x\|_1} 
\\
&=
\frac{w_i}{1+\theta(w)_i+ \tfrac{1}{\theta(w)_i} \| \supp(w) \wedge a_{i,*} \wedge x\|_1}
\end{align*}
with the convention that $\tfrac{1}{\theta(w)_i}=0$ if $\theta(w)_i=0$.
Now, $\sigma_{(L^p,\lambda,\mu,A)}(w) = \fix(T_{(\kappa(w),\varphi(w))})$ is the limit of any $T_{(\kappa(w),\varphi(w))}$-sequence in $X$.
We observe that since $A=(a_{i,j})$ is the adjacency matrix of $\G$,
\begin{align*}
& \theta(w)_i = | \Att^*(i)|
\\
& \| \supp(w) \wedge a_{i,*} \wedge x\|_1 = \sum_{j \in \Att^*(i)} a_{i,j} x_j = \sum_{j \in \Att^*(i)} x_j.
\end{align*}
It follows from the construction of $\Sigma_{\CB}$ in Example \ref{example:CB semantics} that $\Sigma_{\CB}^{\G,w}=\sigma_{(L^p,\lambda,\mu,A)}$.
Thus, $\Sigma_{\CB}=\Sigma_{(L^p,\lambda,\mu)}$.
\end{proof}

\end{enumerate}

We can conclude from items \ref{prop:HC is Lp}, \ref{prop:MB is Lp} and \ref{prop:CB is Lp} and Theorem \ref{theorem:Lp-based solve everything}.
\end{proof}

\end{document}